\documentclass{article}

\usepackage{microtype}
\usepackage{graphicx}
\usepackage{subcaption}
\usepackage{booktabs} 

\usepackage{hyperref}

\usepackage[accepted]{icml2026}

\usepackage{amsmath}
\usepackage{amssymb}
\usepackage{dsfont}
\usepackage{mathtools}
\usepackage{amsthm}

\usepackage{amsfonts}

\usepackage{xcolor}

\definecolor{algcomment}{RGB}{90,90,90}      
\definecolor{algaccent}{RGB}{30,90,160}      

\usepackage[capitalize,noabbrev]{cleveref}

\theoremstyle{plain}
\newtheorem{theorem}{Theorem}[section]

\newtheorem{lemma}[theorem]{Lemma}

\theoremstyle{definition}

\newtheorem{assumption}[theorem]{Assumption}
\theoremstyle{remark}
\newtheorem{remark}[theorem]{Remark}

\newcommand{\F}{\mathcal{F}}
\newcommand{\E}{\mathbb{E}}
\newcommand{\Pp}{\mathbb{P}}

\newcommand{\cO}{\mathcal{O}}

\newcommand{\ind}[1]{\mathds{1}\{#1\}}

\newcommand{\AVDTbet}{\ensuremath{\mathrm{AVT}_{\mathrm{B}}}}
\newcommand{\AVDTcs}{\ensuremath{\mathrm{AVT}_{\mathrm{CS}}}}

\newcommand{\AVARFbet}{\ensuremath{\mathrm{AVF}_{\mathrm{B}}}}

\usepackage[textsize=tiny]{todonotes}
\usepackage{dblfloatfix}
\usepackage{xcolor}
\usepackage{tikz}
\definecolor{redd}{HTML}{b30326}
\definecolor{bloo}{HTML}{3a4cc0}

\definecolor{ourfirst}{HTML}{3498DB}

\definecolor{oursecond}{HTML}{1ABC9C}

\definecolor{ours}{HTML}{27AE60}

\definecolor{baseline}{HTML}{2C3E50}

\definecolor{alggray}{RGB}{110,110,110}

\newcommand{\algc}[1]{\hfill{\footnotesize\color{alggray}$\triangleright$~#1}}

\icmltitlerunning{Correcting Split Selection in Online Decision Trees via Anytime-Valid Inference}

\begin{document}

\twocolumn[
  \icmltitle{Correcting Split Selection in Online Decision Trees via Anytime-Valid Inference}



  \icmlsetsymbol{equal}{*}

  \begin{icmlauthorlist}
    \icmlauthor{Salim I. Amoukou}{jjj}
    \icmlauthor{Saumitra Mishra}{jjj}
    \icmlauthor{Manuela Veloso}{jjj}

  \end{icmlauthorlist}

    \icmlaffiliation{jjj}{J.P. Morgan AI Research}
  \icmlcorrespondingauthor{Salim I. Amoukou}{salim.ibrahimamoukou@jpmorgan.com}

  \icmlkeywords{Machine Learning, ICML}

  \vskip 0.3in
]



\printAffiliationsAndNotice{}  

\begin{abstract}
Bagging-based ensembles, most notably Adaptive Random Forests,
are among the strongest performers for learning from data streams.
A common denominator across these methods is their reliance on
Hoeffding Trees as base learners, which grow decision trees incrementally by
testing whether a candidate split is significantly better than its
alternatives using concentration inequalities. Despite their empirical success, existing variants lack valid
statistical guarantees. Current analyses rely on fixed-sample
concentration bounds, while split decisions are made using
data-dependent stopping rules, which invalidates their guarantees and can drive the probabilty of incorrect splits to one. We introduce a principled alternative based on
\emph{anytime-valid inference}.
Our method provides:
(i) anytime-valid control of false splits under arbitrary data streams,
including non-stationary settings;
(ii) finite commitment time under a predictive advantage;
and (iii) under stationary i.i.d.\ data, risk is monotone decreasing and strictly improves at every split. Empirically, we evaluate both standalone trees and their use within Adaptive Random Forests on non-stationary streams. Our method improves performance while producing substantially smaller trees.
\end{abstract}


\section{Introduction}

In streaming environments, where data arrive sequentially and models must be updated online,
ensemble methods have emerged as the dominant paradigm.
In particular, bagging-based approaches such as
\emph{Adaptive Random Forests} (ARF)~\cite{gomes2018adaptive}, along with related variants including Streaming Random Patches~\cite{gomes2019streaming},
and K-Nearest Leaves~\cite{sun2022soknl},
consistently rank among the top performers
on recent benchmarks
\cite{montiel2020adaptive,wang2022elastic,aspis2025driftmoe}.
Among these, ARF has effectively become the \emph{de facto} standard
for ensemble learning in data streams,
across both classification \cite{ref_arf} and regression \cite{gomes2018adaptive}.

A striking commonality across these methods is their universal reliance on the Hoeffding Tree (HT), also known as the Very Fast Decision Tree (VFDT) \cite{domingos2000mining}, as the fundamental base learner. Despite its empirical success and wide adoption, the statistical foundation of the Hoeffding Tree is not valid. The HT's core idea is to grow tree incrementally: at each node, it tests if sufficient data has arrived to commit to a split. The original algorithm invalidly applied Hoeffding’s inequality to non-linear impurity measures (e.g., Gini index). Subsequent corrections either substituted linear measures (e.g., misclassification error) \cite{matuszyk2013correcting} or applied more general bounds like McDiarmid’s inequality \cite{rutkowski2012decision, de2017confidence, jaworski2017new}.

We argue that these fixes remain statistically invalid.
All existing approaches overlook a fundamental issue: \emph{standard concentration inequalities assume a fixed sample size, yet the HT algorithm employs a data-dependent stopping rule.} This phenomenon, known as \emph{optional continuation} \cite{grunwald2020safe, shafer_2024}, invalidates the guarantees of fixed-time concentration inequality. In particular, the probability of a false split can inflate to one (see Fig.~1 in~\cite{howard2021time}).

To resolve this, we abandon fixed-sample-size inequalities and instead leverage recent advances in sequential testing, specifically the \emph{safe, anytime-valid inference (SAVI)} framework \cite{shafer2001probability, shafer2011test, hendriks2018test, jun2019parameter, orabona2023tight, waudby2024estimating, ramdas2025hypothesis}. This paradigm is inherently robust to optional stopping. 

\textbf{Our main contributions are:}

\begin{enumerate}
\item We derive a split criterion that remains valid under arbitrary
data-dependent stopping rules.
Our method controls false split decisions
simultaneously over the entire lifetime of the tree,
and applies to non-stationary and dependent data streams. When a candidate split exhibits a persistent predictive advantage, our procedure commits to the split in finite time.

\item Under stationarity, the learned tree's expected loss is non-increasing over time, and improves monotonically as data accumulate.

\item  We validate our method across several non-stationary environments,
both as a standalone replacement for Hoeffding Trees
and as a drop-in substitute within Adaptive Random Forests.
The resulting ensemble of trees achieve higher predictive performance
while being consistently smaller
in depth or number of nodes.
\end{enumerate}

\vspace{-0.5cm}
\section{Problem Definition}

Let $(\F_t)_{t\ge0}$ be the filtration representing all information available up to time $t$, including the data and any internal randomization:
$\F_t := \sigma((X_1,Y_1),\dots,(X_t,Y_t),U)$ where $U$ denotes algorithmic
randomness (if any). A quantity chosen at time $t$ is \emph{predictable} if it is
$\F_{t-1}$-measurable.

For clarity, we present the problem for classification decision trees with binary splits on continuous features; extensions to multiway splits, categorical features, and regression follow by standard adaptations \cite{rutkowski2020stream}.  Each leaf node update mechanism has two components:
(i) a \emph{split measure} quantifying split quality, and (ii) a \emph{split condition} determining whether the node should be partitioned.

A decision tree recursively partitions the input space
$\mathcal X\subseteq\mathbb R^d$
into a collection of axis-aligned regions.
Each node $v$ corresponds to a measurable region
$R(v)\subseteq\mathcal X$. A split candidate is a pair $c=(j,s)$ with feature index $j\in[d]$ and threshold $s\in\mathbb{R}$, inducing child regions
\[
R(v_L^c)=R(v)\cap\{x_j\le s\},
\qquad
R(v_R^c)=R(v)\cap\{x_j> s\}.
\]
At each node $v$, the set of admissible splits is restricted to a finite
candidate class $C_v \subseteq \{1,\dots,d\}\times\mathbb R$. The finiteness of $C_v$ reflects practical implementations
(e.g., discretised thresholds, quantiles)
and ensures that optimality is always defined
with respect to a fixed, finite hypothesis class.

\paragraph{Batch decision tree learning.}

Given a fixed dataset
$\mathcal D_t=\{(X_i,Y_i)\}_{i=1}^t$,
an effective split maximally reduces node impurity.
Define the samples reaching node $v$ as $A_t(v)
:=
\{(X_i,Y_i)\in\mathcal D_t:X_i\in R(v)\}$, $n_t(v)
:= |A_t(v)|$. Let $ p_t(v)$ denote the empirical class proportions,
with components $p_{t,k}(v) =
\frac{1}{n_t(v)}
\sum_{(X_i,Y_i)\in A_t(v)}
\ind{Y_i=k}.$
Common impurity measures $\mathcal I(\cdot)$ include entropy and the Gini index: $\mathcal{ I}_{\mathrm{entropy}}( p) = -\sum_k  p_k\log \mathbf p_k,$ and $ \mathcal{I}_{\mathrm{Gini}}( p) = \sum_k  p_k \bigl(1- p_k\bigr).$
For a candidate split $c=(j,s)\in C_v$,
the empirical impurity decrease at node $v$ is
\[
\Delta_t^{v,c}
=
\mathcal I(\mathbf p_t(v))
-
P_{t,L}\,
\mathcal I(\mathbf p_t(v_{c}^L))
-
P_{t,R}\,
\mathcal I(\mathbf p_t(v_{c}^R)),
\]
where $P_{t,L}=n_t(v_{c}^L)/n_t(v),$ and $ P_{t,R}=n_t(v_{c}^R)|/n_t(v).$

In batch learning, the split at node $v$ is chosen by maximising the empirical impurity reduction over the candidate set $c^\star
=
\arg\max_{c\in C_v}
\Delta_t^{v,c}.$

\paragraph{Online decision tree learning.}

In a streaming setting,
observations $(X_t,Y_t)_{t\ge1}$ arrive sequentially
and the dataset is never fully observed.
The batch procedure is therefore inapplicable;
instead, the learner must decide \emph{online}
when sufficient evidence justifies a split.

For each node $v$ and split $c\in C_v$,
define the \emph{population impurity decrease} as
\begin{equation}
\Delta^{v,c}
=
\mathcal I( p(v))
-
P_L\,
\mathcal I( p(v_{c}^L))
-
P_R\,
\mathcal I( p(v_{c}^R)),
\label{eq:pop_impurity}
\end{equation}
where probabilities correspond to their population counterparts
under the data-generating distribution.
Ideally, the learner would select a split
$c^\star$ satisfying
\begin{equation}
\Delta^{v,c^\star}
>
\Delta^{v,c^\prime}
\qquad
\forall c^\prime \in C_v,
\label{eq:pop_split}
\end{equation}
that is,
the empirically chosen split coincides
with the population-optimal one
over the same finite candidate class.

Since $\Delta^{v,c}$ is unknown,
online algorithms rely on statistical inference
from empirical estimates
$\Delta_t^{v,c}$
computed from the data observed so far.
A common strategy introduces a threshold
$\varepsilon(n_t(v),\delta)$
such that, if
\[
\Delta_t^{v,c^\star}
-
\Delta_t^{v,c^\prime}
>
\varepsilon(n_t(v),\delta)
\quad
\forall c^\prime\in C_v,
\]
then, with probability at least $1-\delta$,
the selected split is population-optimal.

Hoeffding Trees~\cite{domingos2000mining} instantiate
$\varepsilon(\cdot)$
using Hoeffding’s inequality,
despite the non-linearity of impurity criteria.
Subsequent work replaces this with
McDiarmid-type bounds
\cite{rutkowski2012decision,de2017confidence,jaworski2017new}
or linearized impurity measures
\cite{matuszyk2013correcting}.

\paragraph{The overlooked issue.}

All existing guarantees are valid given
a \emph{fixed sample size}.
In practice, however,
impurity estimates are updated sequentially,
and the algorithm commits to a split
at the \emph{first} time
the stopping condition is satisfied.
This induces a data-dependent stopping time
that depends on the entire data trajectory.

Let $(C_t)_{t\ge1}$ be a sequence of confidence intervals for a parameter $\delta$, constructed using a classical concentration inequality (e.g., Hoeffding's inequality).
Such bounds provide only \emph{fixed-time} control: for any fixed $t$,
\[
\mathbb P(\delta \notin C_t)\le \alpha,
\]
which offers no guarantee under data-dependent stopping.
Instead, we require \emph{anytime-valid} coverage:
\[
\mathbb P\!\left(\exists t\ge1:\delta \notin C_t\right)\le\alpha,
\quad\text{or}\quad
\mathbb P\!\left(\forall t\ge1:\delta \in C_t\right)\ge 1-\alpha.
\]
This immediately implies validity at any (possibly infinite) stopping time $\tau$:
\[
\mathbb P(\delta \notin C_\tau)\le\alpha,
\qquad
\mathbb P(\delta \in C_\tau)\ge 1-\alpha.
\]

Standard bounds fail under optional stopping
(see Fig.~1 in~\cite{howard2021time})
and therefore do \emph{not} control
the probability of selecting an incorrect split
at the time the algorithm actually stops.

Moreover, a fundamental gap exists
between theory and practice.
Classical Hoeffding-based analyses assume
independent observations,
yet Hoeffding Trees are most often deployed
inside ensemble methods such as
Adaptive Random Forests,
which operate on non-stationary
and potentially dependent streams.
This directly violates the assumptions
underlying existing guarantees. These observations motivate
our central question:
\begin{center}
\emph{
How can we design a splitting criterion
that remains valid under
data-dependent stopping
and non-stationary, dependent data streams?
}
\end{center}

\section{Valid Sequential Tests for Split Selection}

Our approach departs from the classical objective of selecting the split that
maximizes population impurity reduction
(cf.~\eqref{eq:pop_split}).
In non-stationary environments, the population impurity
\eqref{eq:pop_impurity} may itself be ill-defined,
since the data-generating distribution may evolve over time
and the optimal split may change accordingly.
Rather than comparing candidate splits \emph{against each other}
under a static distribution,
we instead ask a different question:

\begin{center}
\emph{Is a proposed split demonstrably better than leaving the node unsplit,
based on the data observed so far?}
\end{center}

This reframes split selection as an
\emph{online model comparison} problem.
Specifically, we compare two predictors:
an \emph{incumbent} model corresponding to the unsplit leaf,
and a \emph{challenger} model obtained by applying a candidate split.
The goal is to determine, using accumulated information up to time $t$,
whether the challenger has achieved better predictive performance
than the incumbent under potentially non-stationary
and dependent data.

Predictive performance is evaluated using a loss function $\ell$, with log loss and
Brier score serving as loss-based analogues of entropy and Gini impurity.
Our theoretical analysis assumes bounded losses $\ell\in[0,1]$; when this condition
does not hold in practice, we enforce it via normalization.

\subsection{Testing a Candidate Split}

Fix a leaf node $v$ created at time $ s^v$.
Until a split is committed,
node $v$ acts as the \emph{incumbent} predictor.

At time $ s^v$,
the learner selects a candidate split $c\in C_v$.
This defines a hypothetical \emph{challenger} model
that partitions $R(v)$ into two child regions.
No structural changes are made at this stage:
the challenger is evaluated in \emph{shadow mode}
and does not affect future routing decisions.

\begin{itemize}
\item Incumbent $m^{v}$
predicts using the empirical label distribution
at node $v$,
estimated from data observed up to the current time.

\item Challenger $m^{v_c}$
identical to the incumbent,
except that node $v$ is split according to $c$,
with each child maintaining its own
empirical label distribution.
\end{itemize}

For each $t\ge s^v+1$, 
\begin{enumerate}
\item Both predictors output \emph{prequential}
predictions based on $\mathcal F_{t-1}$.
\begin{align}
m^{v}_{t-1}(x)
&=
p_{t-1}(v),
\qquad
x\in R(v), \nonumber\\
m^{v_{c}}_{t-1}(x)
&=
\begin{cases}
p_{t-1}(v_{c}^L),
& x\in R(v_{c}^L),\\[2mm]
 p_{t-1}(v_{c}^R),
& x\in R(v_{c}^R),
\end{cases}
\end{align}

\item After observing $Y_t$,
we compute the losses and define the bounded difference
\[
\Delta_t^{v,c}
:=
\ell\!\left(m^v_{t-1}(X_t),Y_t\right)
-
\ell\!\left(m^{v_{c}}_{t-1}(X_t),Y_t\right)
\in[-1,1].
\]

\item Both predictors are updated identically
using $(X_t,Y_t)$;
their only distinction remains the proposed split.
\end{enumerate}

Since both predictions are $\mathcal F_{t-1}$-measurable,
the conditional mean is well defined:
\[
\delta_t^{v,c}
:=
\mathbb E\!\left[
\Delta_t^{v,c}
\mid
\mathcal F_{t-1}
\right].
\]

\paragraph{Defining improvement.}
In non-stationary environments, there are two standard hypothesis-testing formulations for comparing predictive models over time
\cite{lehmann1975statistical,rosenbaum1995,ehm2018forecast,choe2024comparing}.

The first formulation tests whether a challenger ever attains an advantage over the incumbent. This corresponds to the \emph{strong null hypothesis}~\cite{choe2024comparing}:
\[
H_{0}^{v,c}:
\qquad
\forall t\ge s^v+1,
\quad
\delta_t^{v,c} \le 0.
\]
Under this hypothesis, the challenger is assumed to never outperform the incumbent at any point in time.

The second formulation evaluates whether the challenger improves upon the incumbent on average over time. This leads to the \emph{weak null hypothesis}:
\[
H^{v,c}_{\mathrm{w},0}:
\qquad
\forall t\ge s^v+1,
\quad
\frac{1}{t- s^v}
\sum_{u= s^v+1}^{t} \delta_u^{v,c}
\le 0.
\]

The choice between these formulations depends on the specific application. Recent work ~\cite{choe2024comparing} suggests using the strong null when comparing closely related algorithms operating on the same data stream, and the weak null when comparing substantially different models.

\paragraph{Choice in our setting.}

We adopt the \emph{strong} null ($H_0^{v,c}$) as our primary testing target.
This aligns with the recommendation of \cite{choe2024comparing} to use the strong null when comparing closely related algorithms trained on the same data.  Indeed, during the test period, both models are trained using the same data and are directly related by construction. Empirically, we also found that the \emph{weak} null leads to more conservative behaviour: splits are committed later, resulting in inferior predictive performance. This is expected, since the weak formulation relies on averaged effects that can be influenced by accumulated history.

For completeness, we derive sequential tests for both hypotheses. For notational brevity, we omit the superscripts $(v,c)$ where the context is unambiguous.

\subsection{Anytime-Valid Test via Testing by Betting}

We adopt the \emph{testing-by-betting} framework
\cite{shafer2001probability,shafer2011test,hendriks2018test,jun2019parameter,orabona2023tight,waudby2024estimating, ramdas2025hypothesis},
which interprets statistical hypothesis testing as a
\emph{betting game against the null hypothesis}.
At each time step, the bettor wagers on whether the challenger
outperforms the incumbent, as measured by the sign of
$\Delta_t$.
Under the null hypothesis $H_0$,
no betting strategy can achieve positive expected growth.
Under the alternative, the bettor’s wealth increases over time.

\paragraph{Testing by betting.}
The bettor starts with unit wealth $W_{s}=1$.
At each time $t\ge s + 1$,
a predictable betting fraction
$\beta_t\in[0,1]$ is chosen based on $\mathcal F_{t-1}$.
After observing $\Delta_t$,
wealth is updated according to
\begin{equation}
W_t
=
W_{t-1}\bigl(1+\beta_t \Delta_t\bigr).
\label{eq:wealth_update_main}
\end{equation}
Since under the \emph{strong} null $H_0$ we have
$\E[\Delta_t\mid\mathcal F_{t-1}]\le 0$,
the process $(W_t)_{t\ge s}$
is a nonnegative supermartingale.

By Ville’s inequality, for any $\alpha\in(0,1)$,
\[
\Pp_{H_0}\!\left(
\sup_{t\ge s } W_t
\ge
\frac{1}{\alpha}
\right)
\le
\alpha.
\]
Consequently, thresholding the wealth process
yields an anytime-valid test:
regardless of the betting strategy,
under $H_0$ the wealth exceeds $1/\alpha$
with probability at most $\alpha$.
Therefore, a wealth above this threshold is a strong evidence that the null is false.

\paragraph{Portfolio interpretation.}
Following~\cite{orabona2023tight},
we can equivalently represent the betting game as an
online portfolio selection with two stocks.
Define two assets with returns
\[
R_t^{(0)} := 1,
\qquad
R_t^{(1)} := 1+\Delta_t \in [0,2].
\]
Investing a fraction $\beta_t$ in asset~1
and $1-\beta_t$ in asset~0
yields gross return
\[
(1-\beta_t)R_t^{(0)} + \beta_t R_t^{(1)}
=
1+\beta_t\Delta_t,
\]
which coincides with~\eqref{eq:wealth_update_main} one step's update.
Designing a powerful anytime-valid test
therefore reduces to selecting an effective
online portfolio strategy.

Rather than tuning $\beta_t$, we adopt a parameter-free strategy:
the \emph{Universal Portfolio} (UP).
UP is minimax-optimal
with respect to the best constant rebalanced portfolio
and, in i.i.d.\ markets,
achieves the optimal growth rate
of any fixed strategy~\cite{cover1998elements}.
UP constructs a mixture over
constant rebalanced portfolios
using a Jeffreys prior.

Concretely, we define
\begin{equation}
\beta_t
:=
\frac{
\int_0^1
\beta
\prod_{u=s}^{t-1}
\bigl(1+\beta\,\Delta_u\bigr)
\,dF_+(\beta)
}{
\int_0^1
\prod_{u=s}^{t-1}
\bigl(1+\beta\,\Delta_u\bigr)
\,dF_+(\beta)
},
\;
F_+
=
\mathrm{Beta}\!\left(\tfrac12,\tfrac12\right),
\label{eq:UP_beta}
\end{equation}
and update wealth via~\eqref{eq:wealth_update_main}.
The resulting process remains
a nonnegative supermartingale under $H_0$
and empirically accumulates evidence rapidly
when the challenger is superior.

\paragraph{Replacement rule and global error control.}
We run one such test
for each candidate split
available at a leaf.
Given per-test significance levels
$\alpha^{v, c}$, we replace the incumbent
with the challenger
at the stopping time
\[
\tau^{v,c}
:=
\inf\!\left\{
t\ge s^v + 1:
W^{v,c}_t
\ge
\frac{1}{\alpha^{v, c}}
\right\}.
\]
We show in the next section that it yields \emph{global} control:
with probability at least $1-\alpha$,
no false replacement ever occurs
over the entire stream. When multiple candidates cross their thresholds
simultaneously, we select the one with the largest wealth.

\subsection{Anytime-Valid Test via Confidence Sequences}
\label{sec:cs}

The \emph{weak} null hypothesis can be tested in a closely related manner using
confidence sequences (CS).
Let $(L_t,U_t)_{t\ge s^v +1}$ be a confidence sequence for the running average $
\bar\delta^{v,c}_t  :=
\frac{1}{t-s^v}\sum_{u=s^v+1}^t \delta_u$, that is,
\[
\mathbb P\!\left(
\forall t\ge s^v + 1:\;
L_t \le \bar\delta^{v,c}_t \le U_t
\right)
\ge
1-\alpha.
\]
Then the stopping time
\[
\tau_{\mathrm{w}}^{v,c}
:=
\inf\!\left\{
t\ge s^v + 1:\;
L_t > 0
\right\}
\]
defines an anytime-valid test of the weak null hypothesis,
with guarantees analogous to those of the betting-based test for the strong null.

In practice, we use the empirical Bernstein CS
\cite{howard2021time},
which has recently been successfully applied to online model comparison \cite{schirmer2025monitoring} and sequential forecasters evaluation \cite{choe2024comparing}.

\begin{remark}
Although betting-based tests and confidence-sequence-based tests
appear different here, they are closely connected.
Confidence sequences are typically constructed from nonnegative
(super)martingales via Ville’s inequality, and conversely,
betting processes can be inverted to yield confidence sequences.
See \citet{orabona2023tight} for constructions of confidence sequences
from betting strategies similar to ours, and
\citep{howard2021time,choe2024comparing,ramdas2025hypothesis}
for a general treatment of confidence sequences derived from
supermartingales.
\end{remark}

\begin{remark}
To test for a strictly positive advantage $\varepsilon>0$,
both approaches admit straightforward modifications.
For the betting-based (strong) test, we use the
$\varepsilon$-shifted wealth process
$
W^{v,c}_{\varepsilon,t}
:=
\prod_{u=s^v+1}^{t}
\bigl(1+\beta_u(\Delta^{v,c}_u-\varepsilon)\bigr),
$
with betting fractions constrained to
$\beta_u\in[0,\,1/(1+\varepsilon)]$.
For the CS-based (weak) test, the stopping rule is modified to $L_t>\varepsilon$.
\end{remark}

Algorithmic details are given in Algorithm~\ref*{alg:avht}.
Details on UP approximation, the empirical Bernstein CS, the ${\alpha_{v,c}}$ schedule, and challenger generation are given in Appendix~\ref{app:parameters}.

\begin{algorithm}[!ht]
\caption{Anytime-Valid Online Decision Tree (AVT)}
\label{alg:avht}
\begin{algorithmic}[1]
\REQUIRE Global level $\alpha$, minimum samples $n_{\min}$, advantage $\varepsilon\ge 0$,
test type $\textsc{Test}\in\{\textsc{Betting},\textsc{CS}\}$

\STATE Initialize root $v_0$; active leaves $\mathcal{L}\gets\{v_0\}$
\STATE Set levels $\{\alpha^{v,c}\}$ with $\sum_{v,c}\alpha^{v,c}\le\alpha$
\STATE $(S^{v,c},\tau^{v,c})\gets
\begin{cases}
(W^{v,c}_{\varepsilon}\!\leftarrow\!1,\;1/\alpha^{v,c}), & \textsc{Betting},\\
(L^{v,c}\!\leftarrow\!-\infty,\;\varepsilon), & \textsc{CS}
\end{cases}$
\algc{Initialize e-process / CS and stopping threshold}

\FOR{$t = 1,2,\ldots$}
    \STATE Receive $(X_t,Y_t)$; route $X_t$ to leaf $v\in\mathcal{L}$
    \STATE $\hat{y}^v\gets m^v_{t-1}(X_t)$ \algc{Incumbent prediction}

    \FOR{each candidate split $c\in\mathcal{C}_v$}
        \STATE $\hat{y}^{v_c}\gets m^{v_c}_{t-1}(X_t)$ \algc{Challenger prediction}
        \STATE $\Delta_t^{v,c}\gets \ell(\hat{y}^v,Y_t)-\ell(\hat{y}^{v_c},Y_t)$
        \algc{Loss improvement}
        \STATE $S^{v,c}\gets$\textsc{TestUpdate}$(S^{v,c},\Delta_t^{v,c})$
        \algc{Update wealth (betting) via UP or CS bound}
        \STATE Update challenger statistics with $(X_t,Y_t)$
    \ENDFOR

    \STATE Update incumbent statistics with $(X_t,Y_t)$
    \IF{$n_t(v)\ge n_{\min}$ \AND $\exists c:\, S^{v,c}>\tau^{v,c}$}
        \STATE $c^\star\gets\arg\max_c S^{v,c}$ \algc{Most significant challenger}
        \STATE Split $v$ using $c^\star$; create children $v^{c^\star}_L, v^c_R$ \STATE $\mathcal{L} \gets (\mathcal{L} \setminus \{v\}) \cup \{v^{c^\star}_L, v^{c^\star}_R\}$ \algc{Commit split; replace leaf by its two children}
    \ENDIF
\ENDFOR
\end{algorithmic}
\end{algorithm}

\section{Theoretical Guarantees}
\label{sec:theory}

We establish guarantees for the proposed sequential split-selection
mechanism under two notions of improvement:
a \emph{strong} (anytime advantage) hypothesis $H_0^{v,c}$ and a \emph{weak} (average advantage) hypothesis $H_{\mathrm{w},0}^{v,c}$.
These correspond respectively to testing-by-betting and
confidence-sequence–based tests. Proofs and technical details are
deferred to the appendix.
\subsection{Anytime Validity and Global Error Control}

Both tests are anytime-valid: they control Type~I error under arbitrary,
data-dependent stopping times and without independence assumptions.

\begin{theorem}[Anytime validity and global control]
\label{thm:anytime_global}
Let $\{\alpha^{v,c}\}_{v,c}$ satisfy
$\sum_{v,c}\alpha^{v,c}\le \alpha$.
Apply to each candidate split $(v,c)$ either
(i) the betting-based test for $H_0^{v,c}$ or
(ii) the confidence-sequence test for $H_{\mathrm{w},0}^{v,c}$,
with level $\alpha^{v,c}$.

Then, with probability at least $1-\alpha$, no false split is ever committed
relative to the tested global null hypothesis.
This guarantee holds uniformly over time, and adaptive tree growth.
\end{theorem}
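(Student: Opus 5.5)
The argument has two parts. First, for each candidate $(v,c)$ taken on its own, the probability of a false rejection is at most $\alpha^{v,c}$. Second, a union bound over the countable collection of pairs $(v,c)$ gives the global statement. Here a false split for $(v,c)$ means that the test for $(v,c)$ rejects at some finite time while the corresponding null $H_0^{v,c}$ (resp.\ $H_{\mathrm{w},0}^{v,c}$) is true. The event ``some false split is ever committed'' is contained in $\bigcup_{(v,c):\,\text{null true}} \{\tau^{v,c}<\infty\}$, because committing a split with $c^\star$ requires that $c^\star$'s own threshold was crossed; the argmax step can only select among tests that have already rejected. Hence
\[
\Pp(\text{some false split}) \le \sum_{(v,c)} \Pp\bigl(\tau^{v,c}<\infty,\ \text{null}^{v,c}\text{ true}\bigr) \le \sum_{v,c}\alpha^{v,c}\le\alpha .
\]

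For the betting test, work with the shifted time index $t\ge s^v$ for $s^v$ the node's creation time. The difficulty is that $s^v$ is itself random: nodes are created adaptively. So I would index potential nodes by their position in a fixed infinite binary address tree, together with the candidate index. Then the labels $(v,c)$ and the levels $\alpha^{v,c}$ are fixed in advance, and $s^v$ is a stopping time, with $s^v=\infty$ if the node is never created. Define $W^{v,c}_t=1$ for $t\le s^v$ and $W^{v,c}_t=\prod_{u=s^v+1}^{t}(1+\beta_u\Delta_u)$ afterwards. The quantities $\beta_u$ (the UP mixture ratio in \eqref{eq:UP_beta}) and the prequential predictions are $\F_{u-1}$-measurable, $\beta_u\in[0,1]$, and $\Delta_u\in[-1,1]$, so each factor is nonnegative. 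Moreover $\E[1+\beta_u\Delta_u\mid\F_{u-1}]=1+\beta_u\delta_u\le1$ under $H_0^{v,c}$ on $\{u>s^v\}$, an event that is itself $\F_{u-1}$-measurable. Hence $(W^{v,c}_t)_{t\ge0}$ is a nonnegative supermartingale with $W_0=1$ with respect to $(\F_t)$. Ville's inequality then gives $\Pp(\sup_t W^{v,c}_t\ge1/\alpha^{v,c})\le\alpha^{v,c}$. The $\varepsilon$-shifted version is handled identically, using $\beta_u\le1/(1+\varepsilon)$ to keep the factors nonnegative.

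For the CS test, the defining property of the confidence sequence gives, with probability at least $1-\alpha^{v,c}$, $L_t\le\bar\delta_t$ for all $t$. Under $H_{\mathrm{w},0}^{v,c}$ we have $\bar\delta_t\le0$ for all $t$, so $L_t>0$ never occurs on this event. The one point to check is that the empirical Bernstein CS of \cite{howard2021time} is valid for the conditional means $\delta_u$ of bounded martingale-difference-type increments started at a stopping time. This holds because its construction is itself a nonnegative supermartingale plus Ville, so the same predictable-start argument as above applies.

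The main obstacle I expect is precisely this adaptivity: nodes, their start times, and which candidates exist are all data-dependent. I would handle it as described, by pre-indexing all potential $(v,c)$ pairs so that the $\alpha^{v,c}$ are deterministic and summable. Processes of never-created nodes are frozen at $1$ (resp.\ the CS event is trivially true) and contribute nothing. The gating rule $n_t(v)\ge n_{\min}$ and the choice of the largest wealth only restrict commitments, so they cannot enlarge the false-split event. Non-stationarity and dependence never enter, since only conditional expectations given $\F_{u-1}$ are used.
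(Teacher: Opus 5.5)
Your proof is correct and takes essentially the same route as the paper: for each test, a nonnegative supermartingale plus Ville's inequality (the paper's Lemma~\ref{lem:anytime_valid_camready}), then a union bound over the countable family of instantiated tests. Your pre-indexing of potential nodes by a fixed address tree, with processes frozen at $1$ until a stopping-time start, is a more careful version of what the paper simply assumes (an a.s.\ countable random family of tests with $\F_{s^{v,c}-1}$-measurable levels). To cover the paper's data-dependent allocation in Appendix~\ref{app:alpha-allocation} rather than deterministic levels, you would apply Ville conditionally at each test's start time and then sum $\E[\alpha^{v,c}]$.
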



\subsection{Power: Finite Commitment Under Advantage}

When a candidate split enjoys a persistent predictive advantage,
both tests commit in finite time.
\begin{theorem}[Finite commitment under advantage]
\label{thm:power}
Fix a candidate split $(v,c)$.
Assume there exist $\Delta>0$ and $n_0\ge 0$ such that either the instantaneous
advantage satisfies $\delta_t^{v,c}\ge \Delta$ for all $t\ge s^v+n_0$, or the
average advantage satisfies $\bar\delta_t^{v,c}\ge \Delta$ for all
$t\ge s^v+n_0$.
Then, under the corresponding alternative, the stopping times
$\tau^{v,c}$ and $\tau_{\mathrm{w}}^{v,c}$ are finite almost surely, with rate $\tilde{\mathcal O}(\log(1/\alpha^{v,c})/\Delta^2)$ with high probability.
\end{theorem}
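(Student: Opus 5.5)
We treat the two tests separately; each follows the same template. First a deterministic lower bound on the log-wealth (or on the CS lower endpoint) in terms of the cumulative observed advantage $\sum_u \Delta_u$. Then a martingale concentration step that turns the conditional-mean assumption into a lower bound on $\sum_u \Delta_u$ with high probability. Finally the stopping rule is inverted to read off the commitment time.

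\emph{Betting test (strong alternative).} We use the regret guarantee of the Universal Portfolio with Jeffreys prior against the best constant rebalanced portfolio. For any fixed $\beta\in[0,1]$,
\[
\log W_t \ge \sum_{u=s+1}^{t}\log(1+\beta\Delta_u) - \tfrac12\log(t-s) - \log 2 \quad(\text{up to constants}),
\]
as in \cite{cover1998elements,orabona2023tight}. Since $\Delta_u\in[-1,1]$, for $\beta\le 1/2$ we have $\log(1+\beta x)\ge \beta x-\beta^2x^2$. Hence
\[
\log W_t \ge \beta S_t - \beta^2 (t-s) - \tfrac12\log(t-s) - O(1),
\qquad S_t:=\sum_{u=s+1}^{t}\Delta_u .
\]
Next write $S_t=\sum_u \delta_u + M_t$, where $M_t$ is a martingale with increments bounded by $2$. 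Under the assumption $\delta_u\ge\Delta$ for $u\ge s+n_0$, we get $\sum_u\delta_u\ge \Delta(t-s-n_0)-n_0$. A time-uniform Azuma/Freedman bound, for instance a stitched or line-crossing bound from \cite{howard2021time}, gives with probability at least $1-\eta$, simultaneously for all $t$,
\[
|M_t|\lesssim \sqrt{(t-s)\log(\log(t-s)/\eta)}.
\]
We then take $\beta=\Delta/4$ (allowed since $\Delta\le 1$). On this event the bound becomes
\[
\log W_t\gtrsim \Delta^2 (t-s) - \Delta\sqrt{(t-s)\log\log(t-s)/\eta} - \tfrac12\log(t-s) - O(\Delta n_0).
\]
The right side exceeds $\log(1/\alpha^{v,c})$ once
\[
t-s \gtrsim n_0 + \frac{\log(1/\alpha^{v,c}) + \log(1/\eta)+\log\log(1/\Delta)}{\Delta^2},
\]
which is the claimed $\tilde{\mathcal O}\bigl(\log(1/\alpha^{v,c})/\Delta^2\bigr)$ rate. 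For almost-sure finiteness we use the law of the iterated logarithm, or the strong law for bounded martingale differences, which gives $M_t/(t-s)\to0$ a.s. Therefore $\liminf_t S_t/(t-s)\ge\Delta$ a.s., and $\log W_t\to\infty$ a.s. This also handles the $\varepsilon$-shifted variant whenever $\Delta>\varepsilon$.

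\emph{CS test (weak alternative).} For the empirical Bernstein confidence sequence of \cite{howard2021time}, the lower endpoint satisfies
\[
L_t \ge \tfrac{1}{t-s}S_t - c\sqrt{\frac{\log(\log(t-s)/\alpha^{v,c})}{t-s}}
\]
because the variance proxy is bounded by a constant when $\Delta_u\in[-1,1]$; we take this width bound from the paper's appendix construction. Then $\tfrac1{t-s}S_t=\bar\delta_t+M_t/(t-s)$ with $\bar\delta_t\ge\Delta$ eventually. Using the same uniform martingale bound, $L_t>0$ holds once
\[
\Delta \gtrsim \sqrt{\frac{\log(\log(t-s)/(\alpha^{v,c}\eta))}{t-s}},
\]
which again gives the $\tilde{\mathcal O}(\log(1/\alpha^{v,c})/\Delta^2)$ rate. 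Almost-sure finiteness follows because the width tends to $0$ while $\tfrac1{t-s}S_t$ stays eventually above $\Delta/2$ a.s.

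The main obstacle is the first step for the betting test. We need a clean regret bound for the Jeffreys-prior Universal Portfolio that holds deterministically for arbitrary bounded sequences, including the approximation used in practice. We would cite Cover's $\tfrac12\log n$ regret, or use the KT-type mixture bound in \cite{orabona2023tight}, checking that restricting the comparator to $\beta\le1/2$ costs only a constant. The remaining bookkeeping is the $n_0$ burn-in and keeping the $\log\log$ factors inside the $\tilde{\mathcal O}$.
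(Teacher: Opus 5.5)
Your argument is correct, but it takes a different route from the paper's in both parts.

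\textbf{Betting test.} The paper never uses a universal-portfolio regret bound. It analyses the finite-grid mixture that is actually implemented, and assumes the grid contains some $\beta_\star\in[\Delta/8,\Delta/4]$ with weight $w_\star>0$. It then lower-bounds $W_t\ge w_\star\prod_u(1+\beta_\star\Delta_u)$ and charges the burn-in deterministically as $B_0=n_0\log\frac{1}{1-\beta_\star}$. Finally it applies a fixed-horizon Azuma--Hoeffding bound at the single time $t_0+N_\eta-1$. One crossing suffices to bound $\tau$, so your time-uniform stitched bound is more than needed, though harmless.

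Your route covers the continuous Jeffreys mixture \eqref{eq:UP_beta} and adapts to unknown $\Delta$ without any grid assumption. It pays $\tfrac12\log(t-s)+\log 2$ in log-wealth instead of $\log(1/w_\star)$. The Cover--Ordentlich bound already holds against every fixed $\beta\in[0,1]$, so there is nothing to check about restricting to $\beta\le 1/2$. For the discrete grid, however, you would need exactly the paper's single-component lower bound.

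There are two bookkeeping slips:
\begin{itemize}
\item The $\tfrac12\log(t-s)$ term forces $t-s\gtrsim\log(1/\Delta)/\Delta^2$, not $\log\log(1/\Delta)/\Delta^2$.
\item The $O(\Delta n_0)$ burn-in loss in log-wealth costs about $n_0/\Delta$ extra steps, not $n_0$. The paper's $32B_0/\Delta^2\le 32n_0/(3\Delta)$ reflects the same phenomenon.
\end{itemize}
Neither affects the $\tilde{\mathcal O}$ claim beyond the burn-in term that both proofs carry.

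\textbf{CS test.} Here the difference is more substantive. The paper works on the coverage event $\mathcal G$ and uses $L_t\ge\bar\delta_t-w_t\ge\Delta-w_t$. This only gives $\tau_{\mathrm w}^{v,c}<\infty$ with probability at least $1-\alpha^{v,c}$, which falls short of the ``almost surely'' in the statement. Your argument uses only the deterministic width $r_n\to0$ (via $\hat V_t\le 4$) together with $M_t/(t-s)\to0$ a.s. It therefore delivers genuine almost-sure finiteness and does not rely on the CS's coverage at all. Likewise, for the betting test your martingale strong law is a valid alternative to the paper's $\eta\downarrow 0$ argument.
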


\subsection{Guarantees Under Stationary Data}

Under stationary data, the proposed procedure enjoys a stronger guarantee: the expected predictive performance of the deployed
tree improves monotonically over time.

\begin{theorem}[Strict monotonicity between and at commit times]
\label{thm:monotone_model}
Assume $(X,Y)\sim P$ are i.i.d.\ and $\ell(\hat y,y)$ is convex in its first argument.
Run Algorithm~\ref{alg:avht} and let $0=\tau_0<\tau_1<\tau_2<\cdots$ denote the (random)
commit times, and write $\hat m_t$ for the deployed predictor.

\emph{(Between commits.)}
For all $k\ge 0$ and $t\in(\tau_k,\tau_{k+1})$,
\[
\mathbb E\!\left[\ell(\hat m_t(X),Y)\right]
\;\leq\;
\mathbb E\!\left[\ell(\hat m_{t-1}(X),Y)\right].
\]

\emph{(At commits.)}
There exists $\varepsilon$ such that if splits are committed
only when the weak test certifies an advantage exceeding $\varepsilon$, then with
probability at least $1-\alpha$, for every commit time $\tau_k$,
\[
\mathbb E\!\left[\ell(\hat m_{\tau_k}(X),Y)\right]
\;\le\;
\mathbb E\!\left[\ell(\hat m_{\tau_k^-}(X),Y)\right],
\]
where $\hat m_{\tau_k^-}$ denotes the pre-commit model.
\end{theorem}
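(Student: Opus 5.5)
We prove the two parts separately. The between-commit claim is a statement about leaf estimators on a fixed partition. The at-commit claim is a statement about how the weak-test certificate translates into a population risk comparison.

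\emph{Between commits.} For $t\in(\tau_k,\tau_{k+1})$ the tree structure is frozen. The deployed predictor $\hat m_t$ therefore differs from $\hat m_{t-1}$ only through the empirical label distribution of the single leaf $v$ reached by $X_t$. Every other leaf's prediction is unchanged, so the expected loss decomposes as
\[
\mathbb E\bigl[\ell(\hat m_t(X),Y)\bigr]=\sum_{v\in\mathcal L}\mathbb P(X\in R(v))\,\mathbb E\bigl[\ell(p_t(v),Y)\mid X\in R(v)\bigr].
\]
It then suffices to show, leaf by leaf, that $r_n:=\mathbb E[\ell(\hat p_n,Y')]$ is non-increasing in the leaf sample count $n$. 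Here $\hat p_n$ is the empirical label distribution of $n$ i.i.d.\ labels from the leaf's conditional law, and $Y'$ is an independent fresh label. The key step is the classical exchangeability argument for the sample mean. We write $\hat p_{n}=\frac1{n}\sum_{i=1}^{n}\hat p_{n-1}^{(-i)}$, where $\hat p_{n-1}^{(-i)}$ is the leave-one-out mean. Each $\hat p_{n-1}^{(-i)}$ has the same law as $\hat p_{n-1}$ and is independent of $Y'$. Jensen (convexity of $\ell$ in its first argument) gives $\ell(\hat p_n,Y')\le\frac1n\sum_i\ell(\hat p_{n-1}^{(-i)},Y')$, and taking expectations yields $r_n\le r_{n-1}$.

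We also have to handle the fact that the leaf counts $n_t(v)$ are random. Conditional on the routing indicators $\{\ind{X_u\in R(v)}\}_u$, the labels within each leaf are i.i.d.\ from $P(Y\mid X\in R(v))$, and the leaf count is either unchanged or increased by one. The per-leaf monotonicity then transfers by the tower property. One caveat is that the partition itself is $\mathcal F_{\tau_k}$-measurable, hence data dependent. We will state the claim conditionally on $\mathcal F_{\tau_k}$ and the structure, which avoids this. Lingering dependence between the data used to choose the split and the subsequent leaf labels is a subtlety; we intend to argue it away using the fact that after $\tau_k$ fresh samples enter each new leaf, which are i.i.d.\ given the past. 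If the leaf statistics are reinitialized at a split, the required i.i.d.\ structure holds exactly.

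\emph{At commits.} On the event $E$ of \cref{thm:anytime_global}, every CS covers its running mean; $E$ has probability at least $1-\alpha$ by a union bound over $(v,c)$. A commit at $\tau_k$ then certifies $\bar\delta^{v,c}_{\tau_k}>\varepsilon$. Under i.i.d.\ data, $\delta_u$ is exactly the population risk difference between the incumbent predictor at time $u-1$ and the challenger predictor at time $u-1$, both restricted to region $R(v)$ and weighted by $\mathbb P(X\in R(v))$. So a positive running average says the challenger was better on average along the test window. To pass to the current time $\tau_k$ we need the gap at time $\tau_k$ to remain nonnegative. The plan is to bound the drift of each of the two risks over the window by a quantity decaying with the leaf counts, for example $\mathcal O(\sqrt{\log(\cdot)/n_{\min}})$ via the between-commits analysis and a uniform convergence bound for empirical frequencies over finitely many classes. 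We then choose $\varepsilon$ larger than this drift. This yields that the challenger's risk at $\tau_k$ is at most the incumbent's, and the decomposition above converts this leaf-level statement into the tree-level inequality.

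\textbf{Main obstacle.} The hardest step is this last one: converting the time-averaged conditional advantage into a pointwise-in-time risk comparison at $\tau_k$ at a random stopping time. The risk curves are both non-increasing, but at different rates. Our first attempt is a monotonicity sandwich. The incumbent's current risk is at most its early risks, and the children's current risk is bounded by their early risk plus a sample-size-dependent slack. Choosing $\varepsilon$ as this slack, enforced via $n_{\min}$, closes the argument.
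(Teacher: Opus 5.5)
\paragraph{Between commits.} Your argument here is essentially the paper's. It uses the leafwise decomposition of fresh-draw risk on the frozen partition and the Marshall--Proschan leave-one-out/Jensen inequality for exchangeable empirical means. It also needs a conditioning step that relies on leaf statistics using only post-creation labels; the paper simply posits this honesty/no-reuse property. Note, however, that your fallback of reinitializing leaf statistics at a split clashes with your own at-commit step. That step, like the paper's commit lemma, identifies the post-commit model on $R(v)$ with the challenger $m^{v_c}_{\tau_k-1}$ and its accumulated child statistics. If the children are emptied at the split, $\delta^{v,c}_{\tau_k}$ certifies nothing about the deployed model.

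\paragraph{At commits: the gap.} The ``monotonicity sandwich'' does not close this step, for two reasons.

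First, it bounds the wrong side. Write $\delta_u=R^{\mathrm{inc}}_{u-1}-R^{\mathrm{ch}}_{u-1}$ for the conditional fresh-draw risk gap. To pass from $\bar\delta_{\tau_k}>\varepsilon$ to $\delta_{\tau_k}\ge 0$, you need a \emph{lower} bound on the current incumbent risk in terms of its earlier values. The danger is precisely that the incumbent keeps improving and catches up, and monotonicity gives only an upper bound.

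Second, your first-part monotonicity compares unconditional expectations, whereas the commit certificate must hold pathwise on the coverage event. Conditional plug-in risk is not monotone along a path: under Brier loss, a leaf whose frequency currently equals the true $p\in(0,1)$ gets strictly worse after any new label. Convexity alone gives no pathwise slack.

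What is needed is two-sided, pathwise control of $u\mapsto\delta_u$. The paper obtains it deterministically:
\begin{itemize}
\item It uses Brier loss, which is $2$-Lipschitz on $[0,1]$, and makes a candidate eligible only after every involved leaf has $n_{\min}$ post-creation samples.
\item Each update then moves a leaf mean by at most $1/(n_{\min}+1)$, so $|\delta_{t+1}-\delta_t|\le\rho:=4/(n_{\min}+1)$ almost surely.
\item Averaging $\delta_t\ge\delta_{t-k}-k\rho$ over $k$ gives $\delta_t\ge\bar\delta_t-\rho(t-s-1)/2$.
\item Committing only when $L_t>\varepsilon+\rho(t-s-1)/2$ then yields $\delta_t>\varepsilon$ on the CS coverage event, at no extra probability cost.
\end{itemize}
The paper also gives an adaptive version using actual leaf counts, and a direct confidence-set certificate on the child--label probabilities.

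\paragraph{Your uniform-convergence alternative.} Made two-sided, this is a legitimate route, and it even yields a window-independent slack where the paper's penalty grows linearly in $t-s$. To make it work you need:
\begin{itemize}
\item a Lipschitz loss;
\item evidence accumulated only after all three leaves reach $n_{\min}$, since early $\delta_u$ enter $\bar\delta$ uncontrolled;
\item a time-uniform bound, union-bounded over the countably many tests, with its own probability budget, so you get $1-\alpha-\eta$ unless you split $\alpha$;
\item a test-dependent $\varepsilon$ or $n_{\min}$, because per-test budgets shrink with depth.
\end{itemize}
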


\begin{remark}
The monotone update property is independent of the testing procedure and follows solely from convexity and plug-in leaf updates; under strong convexity, these
updates yield strict improvement. Similar monotonicity results for ensembles are established by \citet{mattei2025ensembles}. The role of anytime-valid
testing in our method is to ensure that \emph{structural} updates (committed
splits) also preserve monotonicity under data-dependent stopping.
\end{remark}

\begin{remark}
In contrast to the weak hypothesis, rejecting the strong hypothesis only certifies
that a challenger outperforms the incumbent at some time, not necessarily at the
(commit) stopping time. Ensuring monotonicity under the strong hypothesis therefore
requires an additional \emph{stability} condition, such as persistence of the
advantage after it first appears. However, these conditions are required for the theoretical analysis; in practice, we observe monotonicity for both tests.
\end{remark}

\subsection{Warm-up experiment}
We evaluate the practical implications of our stationarity guarantees on a
controlled synthetic benchmark.
Data are generated i.i.d.\ from a fixed, tree-structured distribution using the
\texttt{RandomTree} generator introduced in the original Hoeffding Tree
work~\cite{domingos2000mining}.
This generator constructs a random decision tree by recursively splitting
features at random and assigning class labels at the leaves.
We use $10$ numerical and $10$ categorical features and generate trees of depth
$4$, corresponding to at most $8$ leaves.

We compare the HT with our Anytime-Valid Tree (AVT) based
on betting tests ($\mathrm{AVT}_{\mathrm{B}}$) and confidence sequences ($\mathrm{AVT}_{\mathrm{CS}}$). Unless stated otherwise, we use the same default hyperparameters across all
experiments, namely $n_{\min}=20$ and $\varepsilon=0$.
At each time step, generalisation performance is estimated using a fixed,
independent holdout set of size $200{,}000$ drawn from the same distribution.

\begin{figure}[!ht]
  \centering
  \includegraphics[width=0.7\columnwidth]{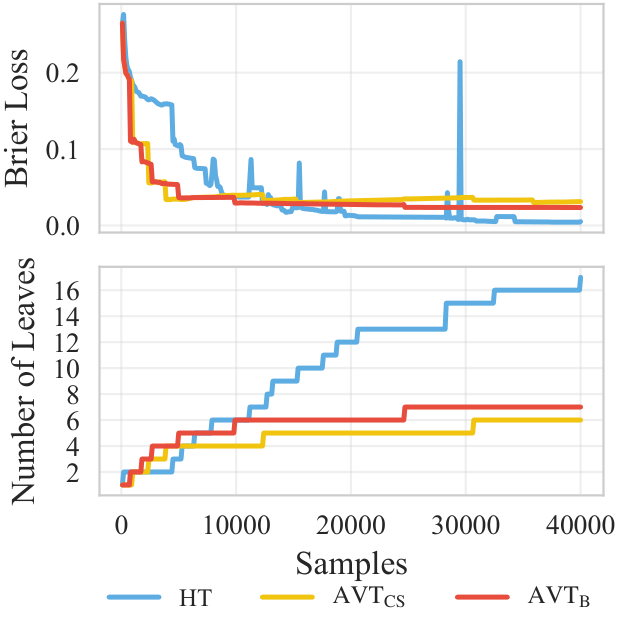}
  \caption{Generalisation error over time.
  The HT exhibits repeated performance drops, whereas
  anytime-valid methods remain stable.}
  \label{fig:convergence}
\end{figure}

\paragraph{Results.}
Figure~\ref{fig:convergence} highlights a clear contrast between methods.
The HT exhibits multiple abrupt drops in generalisation performance.
Notably, this behaviour occurs despite the data being fully i.i.d.\ and
stationary.

In contrast, both anytime-valid variants avoid such collapses and display
stable, monotone improvement in generalisation, consistent with
Theorem~\ref{thm:monotone_model}.
The HT also commits substantially more splits than our methods,
leading to larger and less stable trees. Our approach caps the number of leaves at $8$, matching the size of
the ground-truth tree.

Regarding the two anytime-valid methods, we observe that the variant
based on the weak (average-based) hypothesis performs slightly worse in practice.
This variant commits to splits later than the strong-hypothesis
test, which translates into delayed adaptation and inferior predictive
performance.

Finally, the HT’s slight performance advantage at very large sample sizes is expected: By committing to many splits, it reduces approximation bias irrespective of split quality, and as estimation variance vanishes with more data, the model can perform well asymptotically.

\begin{table*}[!ht]
  \centering
  \small
  \setlength{\tabcolsep}{2pt}
  \renewcommand{\arraystretch}{0.95}
  \resizebox{\textwidth}{!}{%
  \begin{tabular}{lcccccccccccc}
  \toprule
  & \multicolumn{12}{c}{\textbf{Datasets}} \\
  \cmidrule(lr){2-13}
  Method 
  & rbfm100k & airlines & creditcard & hyper100k & elec2 & http 
  & bike & chick & fried 
  & nzenergy & house & abalone \\
  \midrule
  
  \multicolumn{13}{c}{\textbf{Inference time (ms, mean $\pm$ std)}}\\
  \midrule
  HT    
  & 0.124$\pm$0.011 & 0.028$\pm$0.008 & 0.010$\pm$0.001 & 0.066$\pm$0.004 & 0.031$\pm$0.004 & 0.006$\pm$0.001
  & 0.024$\pm$0.004 & 0.018$\pm$0.004 & 0.026$\pm$0.004
  & 0.041$\pm$0.004 & 0.030$\pm$0.003 & 0.021$\pm$0.003 \\
  
  \AVDTbet{}
  & 0.124$\pm$0.013 & 0.010$\pm$0.001 & 0.008$\pm$0.001 & 0.060$\pm$0.006 & 0.066$\pm$0.020 & 0.006$\pm$0.001
  & 0.022$\pm$0.003 & 0.016$\pm$0.002 & 0.023$\pm$0.003
  & 0.037$\pm$0.004 & 0.026$\pm$0.003 & 0.018$\pm$0.002 \\
  
  ARF
  & 0.051$\pm$0.006 & 0.035$\pm$0.012 & 0.010$\pm$0.002 & 0.035$\pm$0.009 & 0.030$\pm$0.007 & 0.006$\pm$0.001
  & 0.016$\pm$0.002 & 0.014$\pm$0.002 & 0.018$\pm$0.003
  & 0.023$\pm$0.003 & 0.019$\pm$0.002 & 0.013$\pm$0.002 \\
  
  \AVARFbet{}
  & 0.130$\pm$0.012 & 0.016$\pm$0.001 & 0.012$\pm$0.003 & 0.043$\pm$0.006 & 0.055$\pm$0.015 & 0.006$\pm$0.001
  & 0.020$\pm$0.003 & 0.015$\pm$0.002 & 0.020$\pm$0.003
  & 0.031$\pm$0.004 & 0.024$\pm$0.003 & 0.016$\pm$0.002 \\
  
  \midrule
  \multicolumn{13}{c}{\textbf{Update time (ms, mean $\pm$ std)}}\\
  \midrule
  HT    
  & 0.074$\pm$0.009 & 0.078$\pm$0.026 & 0.110$\pm$0.023 & 0.147$\pm$4.795 & 0.094$\pm$0.334 & 0.017$\pm$0.001
  & 0.078$\pm$0.023 & 0.064$\pm$0.006 & 0.046$\pm$0.018
  & 0.608$\pm$0.670 & 0.139$\pm$0.075 & 0.056$\pm$0.010 \\
  
  \AVDTbet{}
  & 5.22$\pm$0.71 & 9.61$\pm$3.16 & 0.132$\pm$0.038 & 2.48$\pm$4.09 & 18.66$\pm$55.84 & 0.020$\pm$0.003
  & 3.16$\pm$0.29 & 1.20$\pm$0.23 & 4.78$\pm$0.26
  & 25.82$\pm$28.79 & 5.52$\pm$0.48 & 1.89$\pm$0.22 \\
  
  ARF
  & 0.228$\pm$2.449 & 2.18$\pm$14.21 & 0.079$\pm$0.012 & 0.469$\pm$1.107 & 0.379$\pm$0.870 & 0.028$\pm$0.006
  & 0.013$\pm$0.005 & 0.010$\pm$0.002 & 0.028$\pm$0.007
  & 0.074$\pm$0.079 & 0.021$\pm$0.004 & 0.011$\pm$0.001 \\
  
  \AVARFbet{}
  & 7.65$\pm$0.40 & 10.65$\pm$4.40 & 0.208$\pm$0.052 & 3.82$\pm$4.42 & 27.94$\pm$82.30 & 0.034$\pm$0.010
  & 0.401$\pm$0.034 & 0.190$\pm$0.033 & 0.486$\pm$0.028
  & 3.06$\pm$3.63 & 0.533$\pm$0.035 & 0.277$\pm$0.023 \\
  
  \bottomrule
  \end{tabular}
  }
  \caption{Per-example \textbf{inference and update time} (ms, mean $\pm$ std) across datasets.}
  \label{tab:runtime_all}
\end{table*}

\section{Experiments}

We evaluate our approach against
(i) the Hoeffding Tree (HT) and
(ii) its ensemble counterpart, Adaptive Random Forests (ARF).
In both cases, we replace the HT base learner with our proposed
Anytime-Valid Tree (AVT), yielding an Anytime-Valid Forest (AVF). In the main paper, we report results only for the betting-based variant
(\AVDTbet{}), as it performs better overall than the CS
version (\AVDTcs{}), whose results are deferred to the Appendix \ref{app:cs_comp}. Nevertheless, any
positive advantage observed for \AVDTbet{} over the baselines also carries over to \AVDTcs{}. While our primary focus is improving HT and ARF, additional streaming baselines are reported in Appendix~\ref{app:baselines_2}. All methods are implemented using the
\texttt{river} framework~\cite{montiel2021river}.

We evaluate all methods on 12 data streams, comprising
six regression and six classification tasks.

\textbf{Regression tasks:}
\texttt{bike}, predicting daily bike-sharing usage;
\texttt{chick}, modeling poultry growth trajectories;
\texttt{fried}, forecasting food consumption patterns;
\texttt{nzenergy}, forecasting regional energy consumption;
\texttt{house}, predicting house prices; and
\texttt{abalone}, estimating abalone age from physical measurements.

\textbf{Classification tasks:}
\texttt{elec2}, predicting electricity price movements (up/down);
\texttt{airlines}, predicting flight delays;
\texttt{http-KDD99}, detecting HTTP network intrusions;
\texttt{creditcard}, detecting fraudulent transactions;
\texttt{rbfm100k}, a synthetic stream with gradual drift; and
\texttt{hyper100k}, a synthetic stream with linear drift.

For regression tasks, we enforce bounded losses
via an adaptive scaling procedure.
At each time step, we maintain an online estimate
of the maximal observed loss and rescale the current loss, clipping it to $[0,1]$. For both regression and classification, we use the squared loss
$\ell(p,y)=\lVert p-y\rVert^2$.

\paragraph{Evaluation protocol.}
Evaluation is conducted using the standard prequential (test-then-train) protocol. We report classification accuracy and regression MAE, visualized in two ways: the left panels show the prequential performance curves averaged over $10$ independent runs (with shaded $95\%$ confidence intervals), together with the evolution of tree depth over time; for ensemble methods, we
report the \emph{average} tree depth across trees to enable direct comparison
with single-tree models; and the right panels display the distribution of
aggregate performance across runs.

\paragraph{Model Performance and Complexity.} Figure~\ref{fig:main_all_datasets_merged} summarizes these results.
Across the $12$ data streams, our anytime-valid tree (\AVDTbet{}) outperforms the
standard HT on the majority of datasets, with
\texttt{abalone} being the main exception where HT attains better
aggregate performance.
On several streams, including \texttt{bike},
\texttt{fried}, and \texttt{hyper100k} \AVDTbet{} is competitive with and occasionally surpasses the
ensemble baseline ARF. Importantly, gains persist throughout the stream, as reflected by the prequential curves.

Among all approaches, \AVARFbet{} achieves the strongest overall performance across nearly all datasets, consistently outperforming the standard ARF baseline. Overall, performance differences become more pronounced on datasets exhibiting consistent drops or increased volatility over time, where betting-based methods adapt more effectively.

In terms of model complexity, \AVARFbet{} in addition of providing the best model also produces the smallest trees on average, even compared to single-tree methods.
By contrast, \AVDTbet{} tends to grow deeper trees compared to its baseline HT.
This is expected: in the absence of ensembling, a single tree must capture greater heterogeneity in the data stream, requiring additional depth to match the accuracy achievable by an ensemble.

\paragraph{Computational Cost.}

Table~\ref{tab:runtime_all} reports per-instance inference and update times (mean $\pm$ std over $10$ runs).
Inference costs for \AVDTbet{} remain comparable to HT, since prediction involves only standard tree traversal.
The primary computational overhead arises during updates, where betting wealth is maintained for each candidate split.

As expected, betting-based methods incur higher update costs due to (i) maintaining shadow challenger statistics, and (ii) estimating the Universal Portfolio wealth.

Despite this additional cost, runtimes remain within practical bounds.
For most datasets, update times stay within a few milliseconds per instance.
Even on more demanding streams (e.g., \texttt{nzenergy}), costs remain compatible with real-time processing.

Two factors further mitigate this overhead.
First, \AVARFbet{} typically learns shallower trees, reducing the number of nodes requiring update; for example, it is even faster than its single-tree counterpart \AVDTbet{} on \texttt{nzenergy}. Second, computations across candidate splits and ensemble members are independent, making the method naturally amenable to parallelization.
Overall, the results demonstrate a favorable computational trade-off for the observed performance gains.

\section{Conclusion}

We introduced an anytime-valid framework for split selection in online decision trees.
Our approach corrects a fundamental limitation of existing methods based on fixed-sample concentration bounds by providing statistical guarantees that remain valid under data-dependent stopping rules, including non-stationary and dependent data streams.
The resulting procedure serves as a drop-in replacement for standard Hoeffding Trees and integrates seamlessly into ensemble methods such as Adaptive Random Forests, a state-of-the-art bagging-based approach for data stream learning.
Empirical evaluations on a diverse suite of real-world streaming datasets demonstrate that the resulting ensembles consistently outperform existing baselines while producing substantially smaller trees.

\begin{figure*}[!ht]
  \centering
  \setlength{\tabcolsep}{2pt}
  \renewcommand{\arraystretch}{0.9}

  \begin{tabular}{ccc}

  \begin{subfigure}[b]{0.31\textwidth}
    \centering
    \includegraphics[width=1\textwidth]{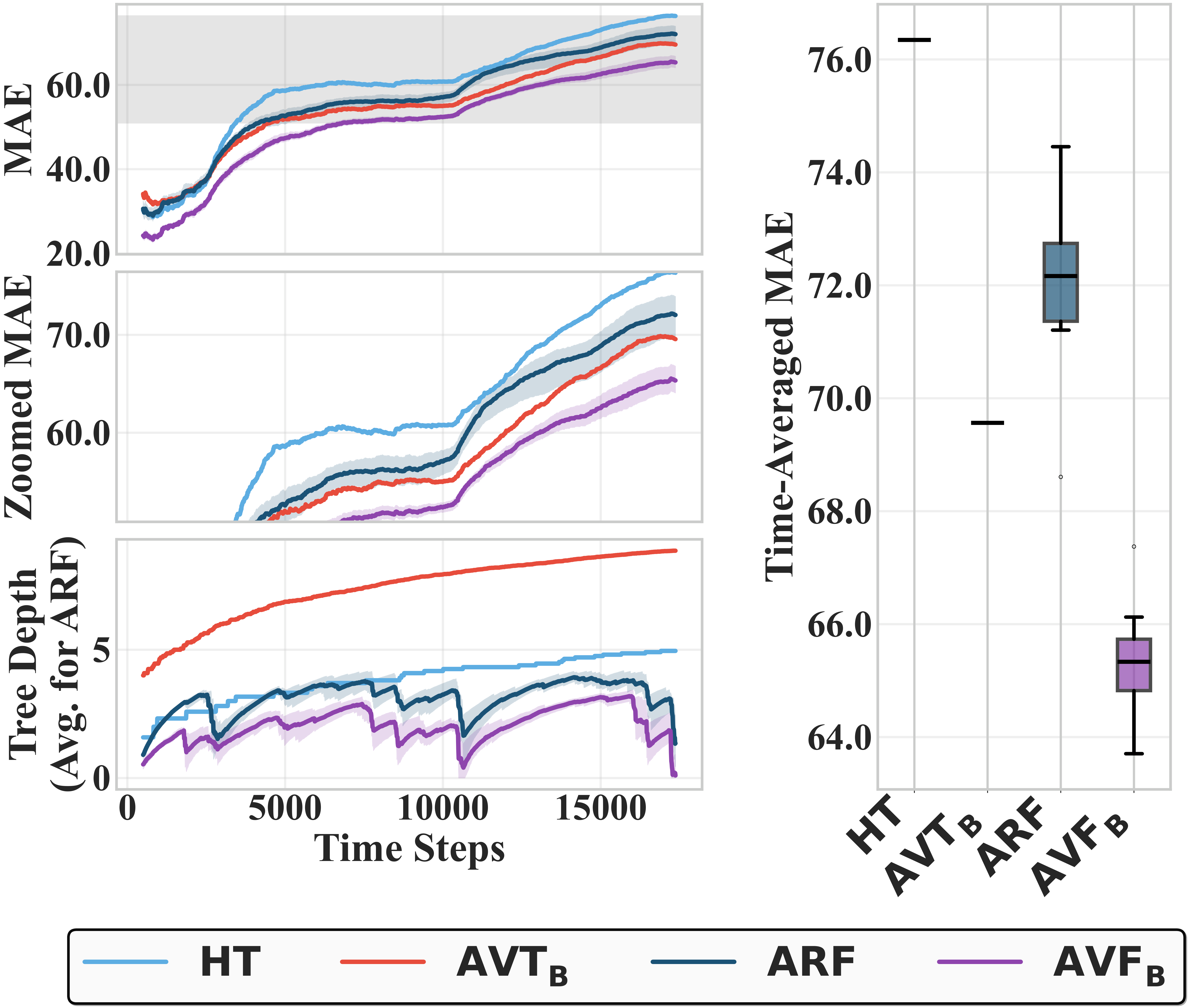}
    \caption{bike}
    \label{fig:reg_bike}
  \end{subfigure} &
  \begin{subfigure}[b]{0.31\textwidth}
    \centering
    \includegraphics[width=\textwidth]{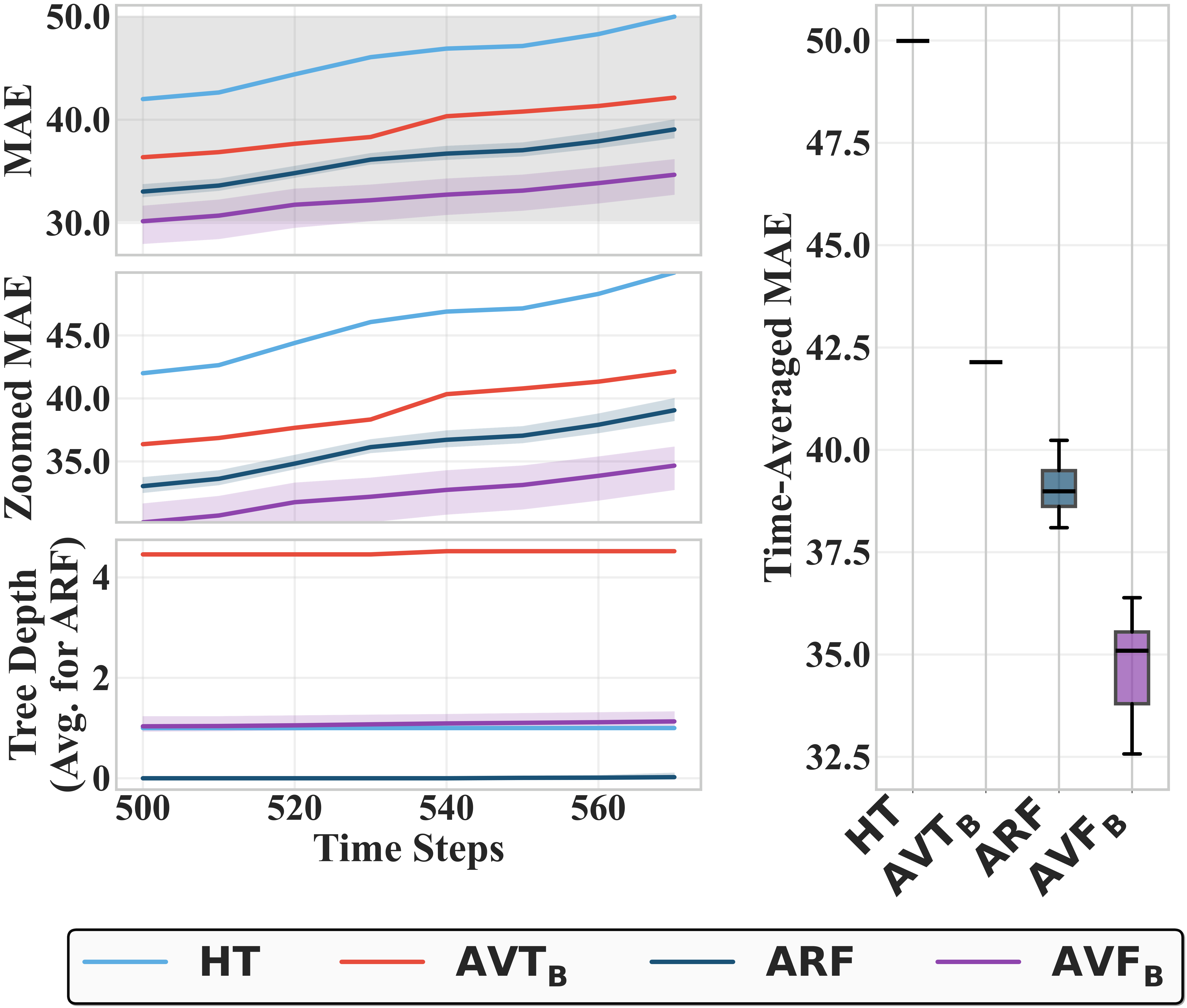}
    \caption{chick}
    \label{fig:reg_chick}
  \end{subfigure} &
  \begin{subfigure}[b]{0.31\textwidth}
    \centering
    \includegraphics[width=\textwidth]{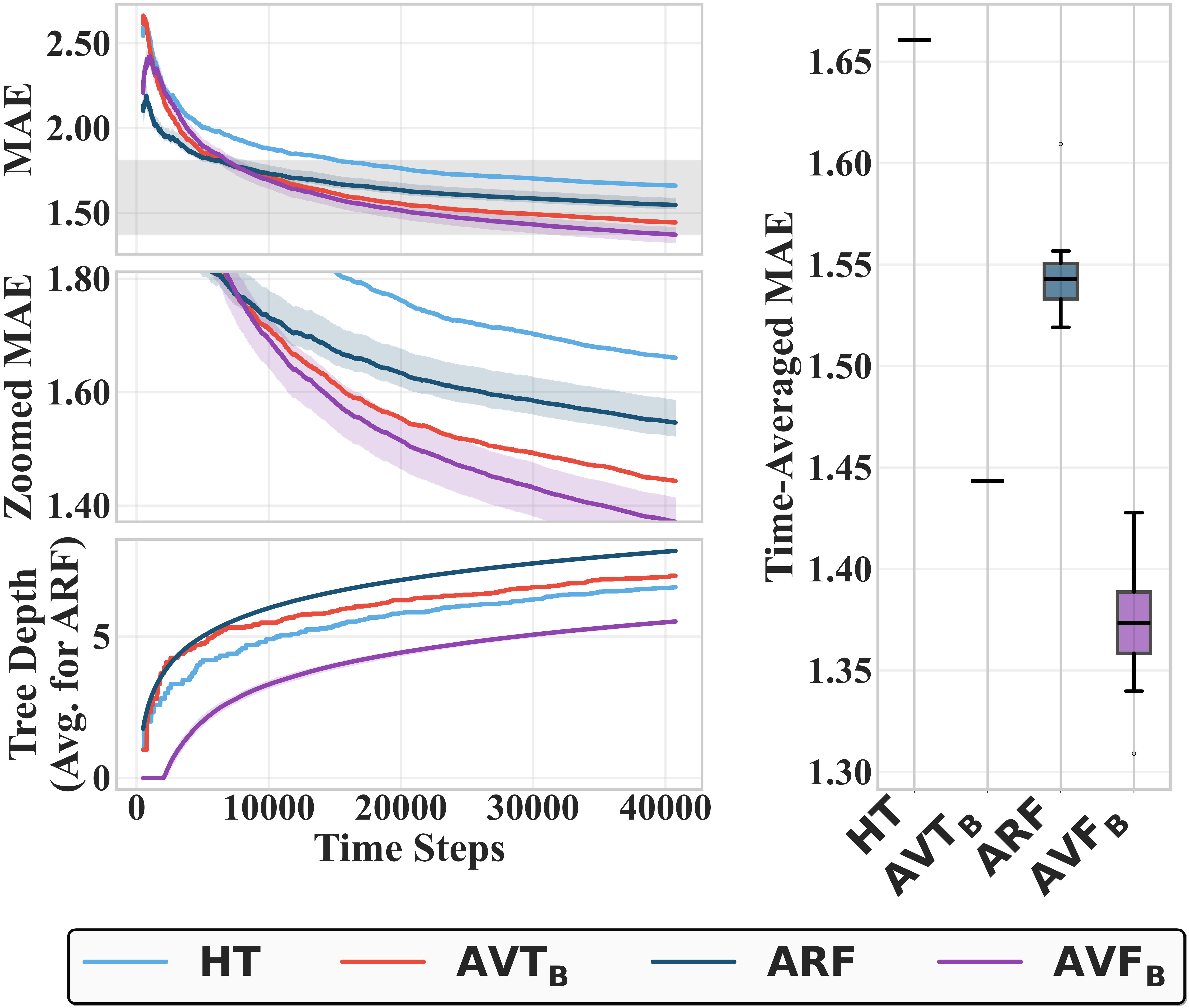}
    \caption{fried}
    \label{fig:reg_fried}
  \end{subfigure} \\[4pt]

  \begin{subfigure}[b]{0.31\textwidth}
    \centering
    \includegraphics[width=\textwidth]{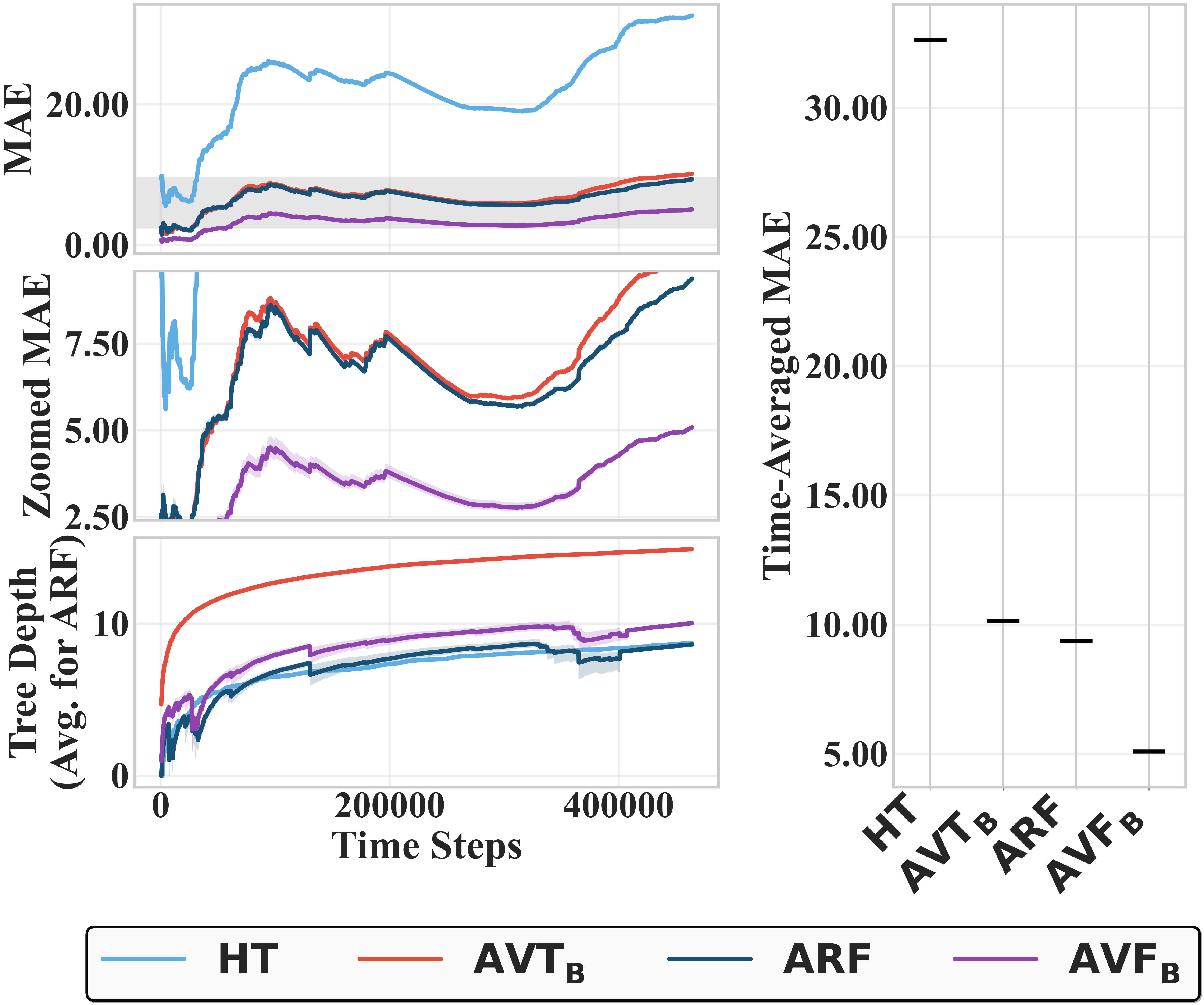}
    \caption{nzenergy}
    \label{fig:reg_nzenergy}
  \end{subfigure} &
  \begin{subfigure}[b]{0.31\textwidth}
    \centering
    \includegraphics[width=\textwidth]{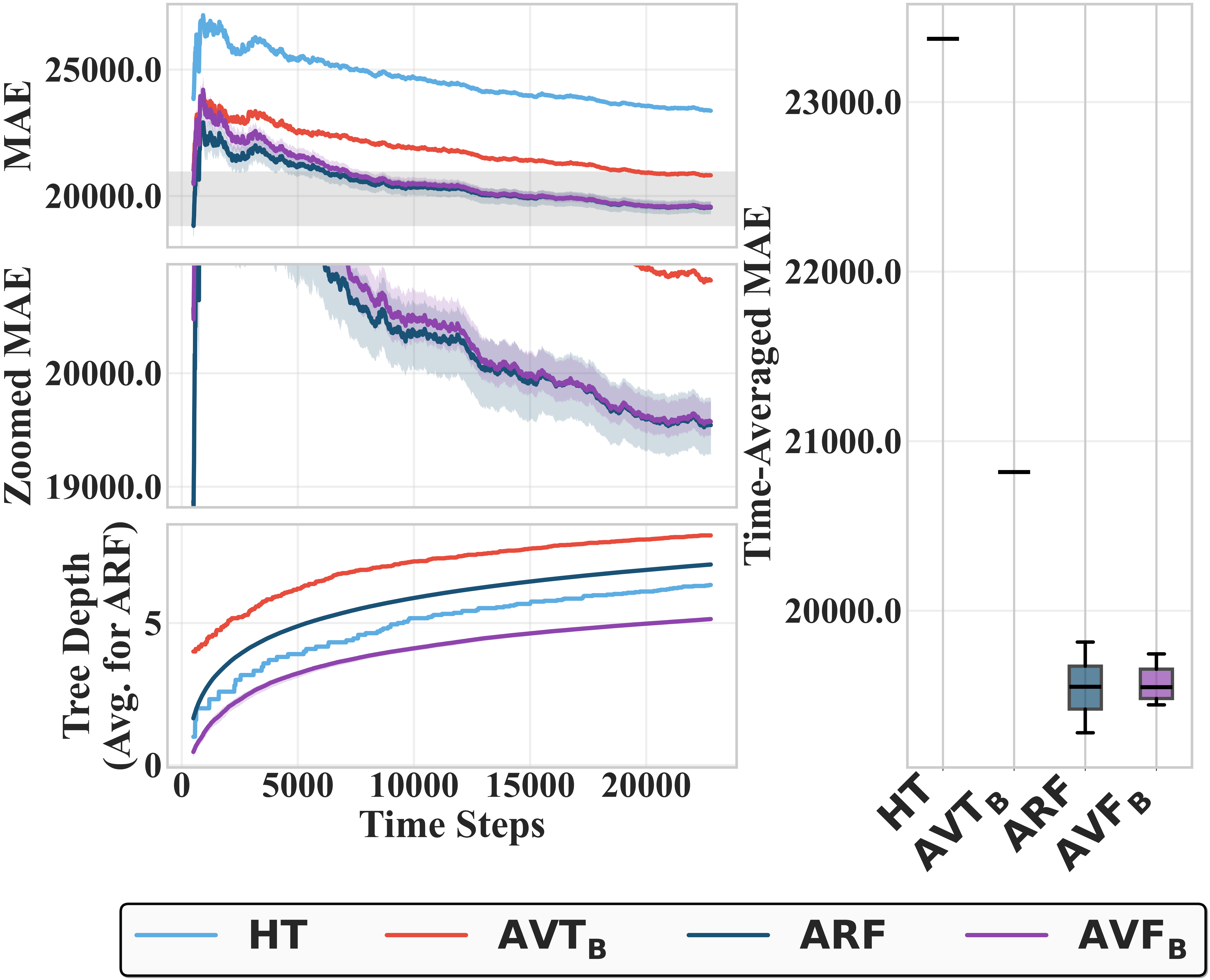}
    \caption{house}
    \label{fig:reg_house}
  \end{subfigure} &
  \begin{subfigure}[b]{0.31\textwidth}
    \centering
    \includegraphics[width=\textwidth]{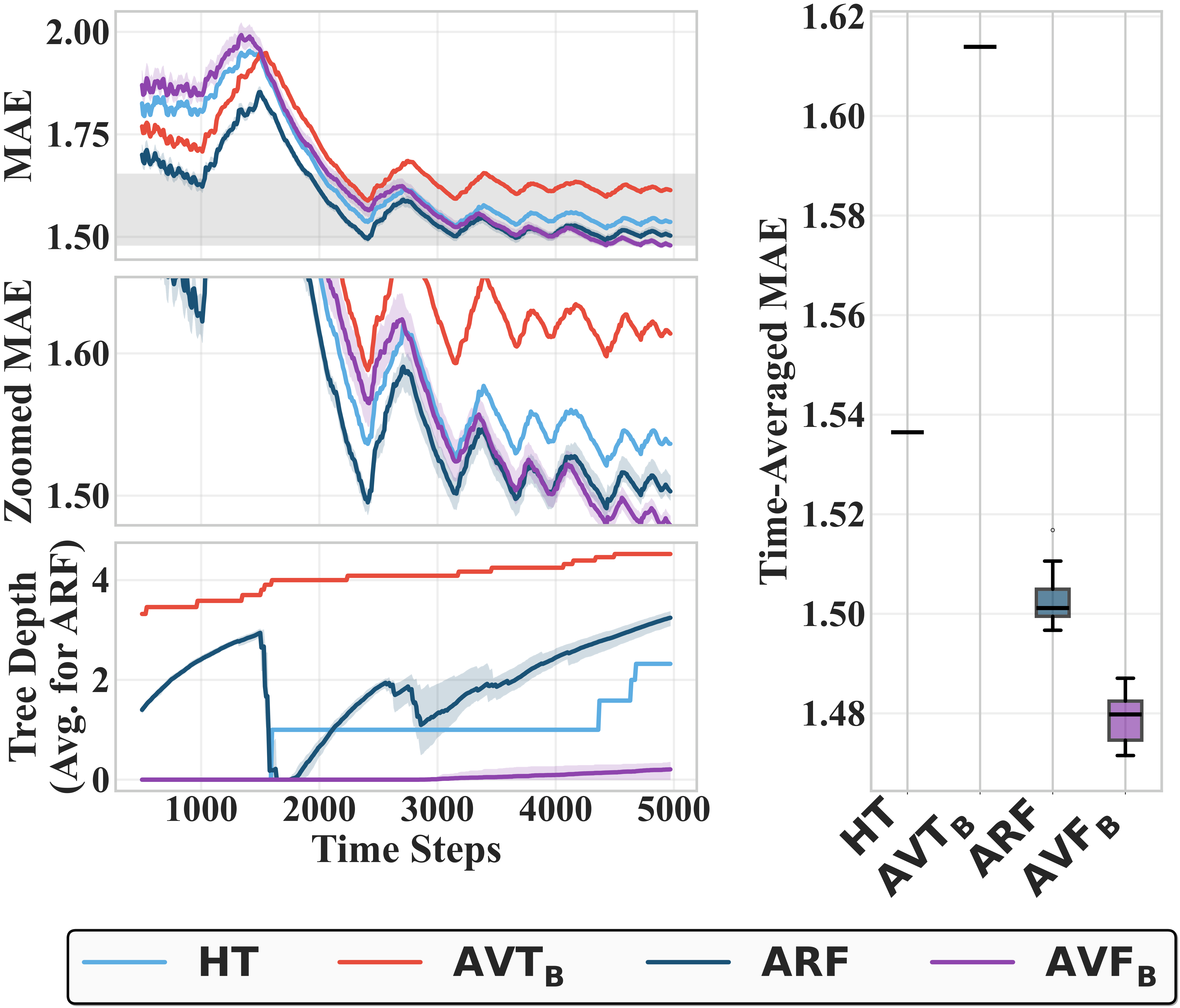}
    \caption{abalone}
    \label{fig:reg_abalone}
  \end{subfigure} \\[6pt]

  \begin{subfigure}[b]{0.31\textwidth}
    \centering
    \includegraphics[width=\textwidth]{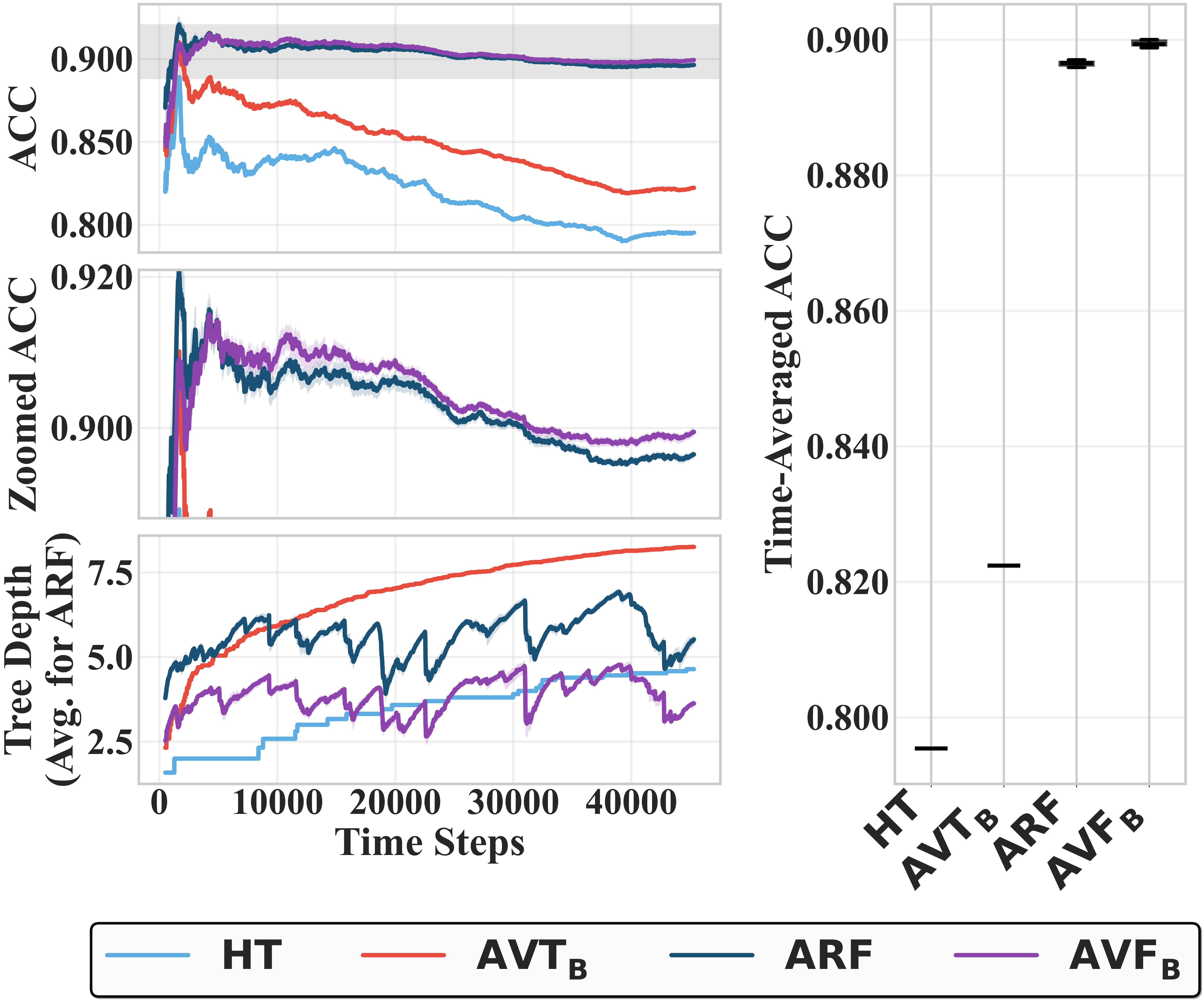}
    \caption{elec2}
    \label{fig:cls_elec2}
  \end{subfigure} &
  \begin{subfigure}[b]{0.31\textwidth}
    \centering
    \includegraphics[width=\textwidth]{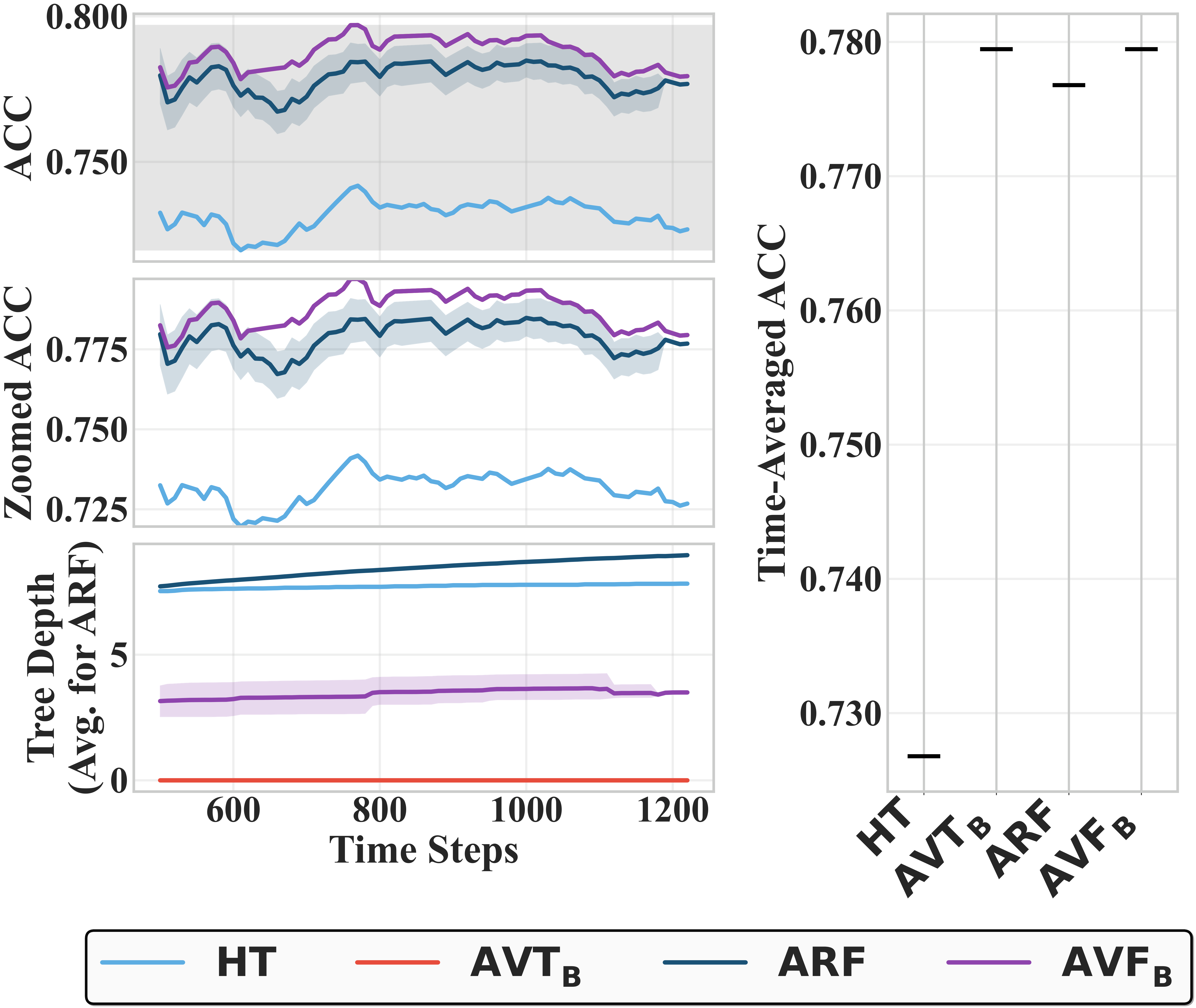}
    \caption{airlines}
    \label{fig:cls_airlines}
  \end{subfigure} &
  \begin{subfigure}[b]{0.31\textwidth}
    \centering
    \includegraphics[width=\textwidth]{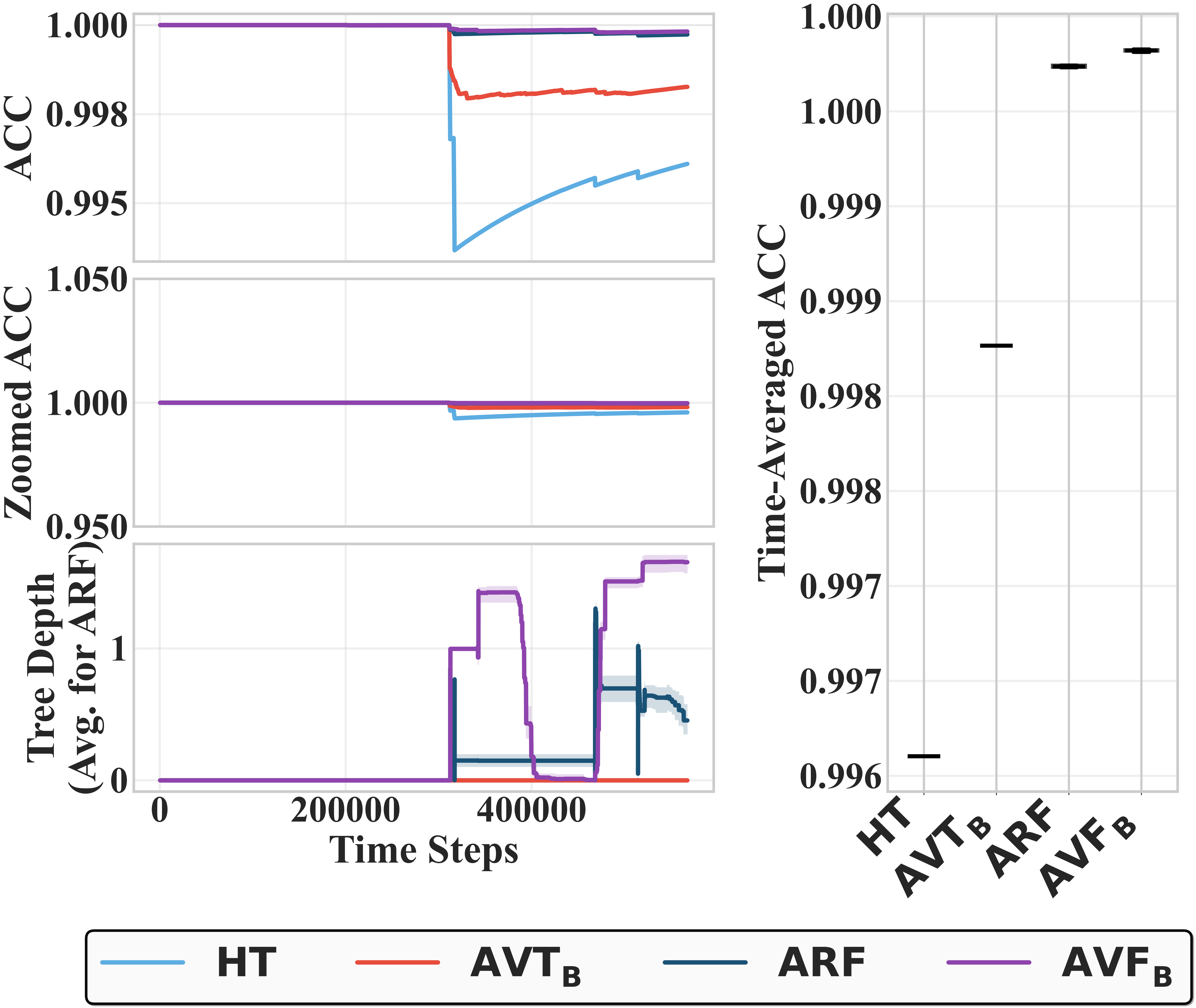}
    \caption{http-KDD99}
    \label{fig:cls_http}
  \end{subfigure} \\[4pt]

  \begin{subfigure}[b]{0.31\textwidth}
    \centering
    \includegraphics[width=\textwidth]{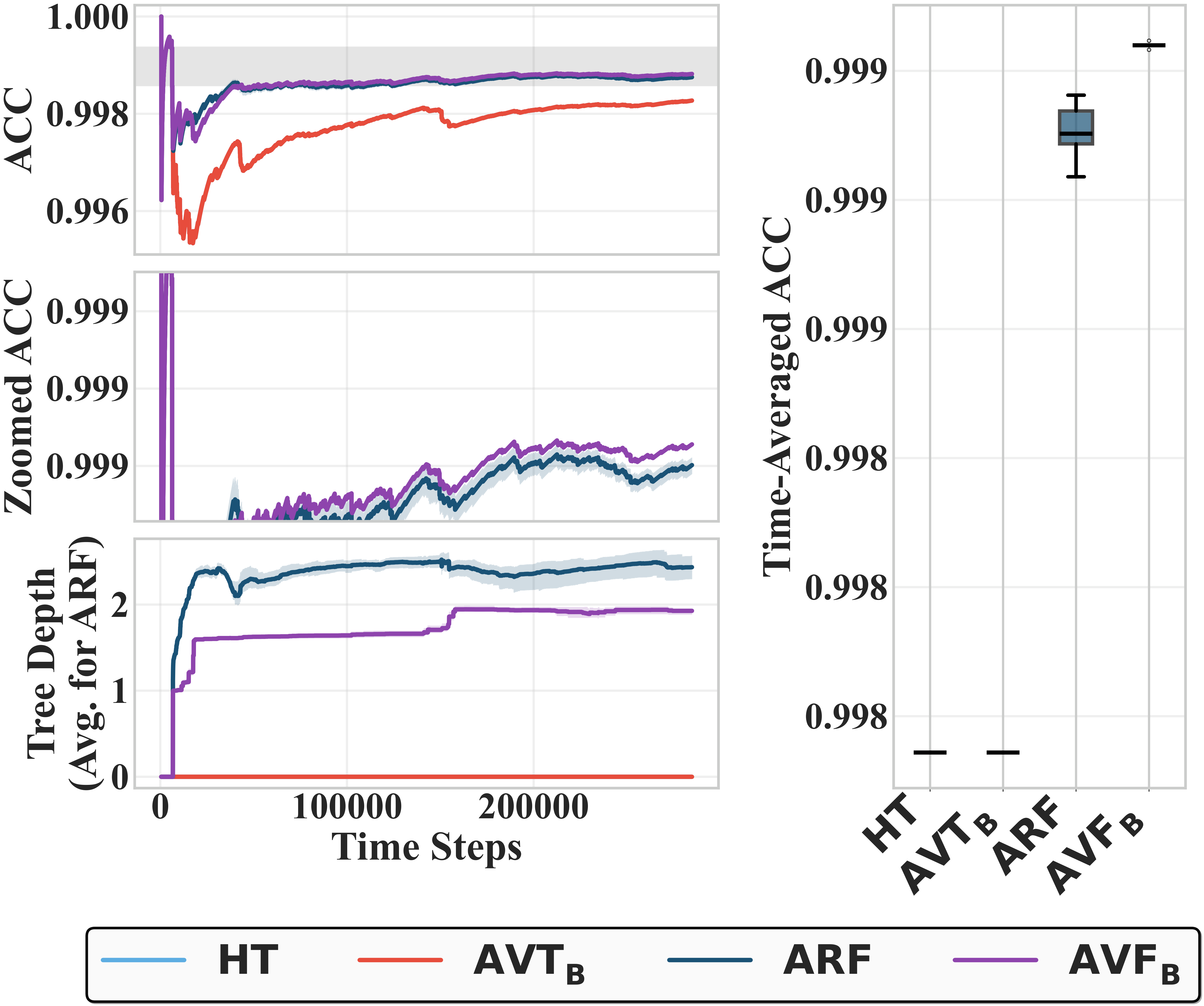}
    \caption{creditcard}
    \label{fig:cls_creditcard}
  \end{subfigure} &
  \begin{subfigure}[b]{0.31\textwidth}
    \centering
    \includegraphics[width=\textwidth]{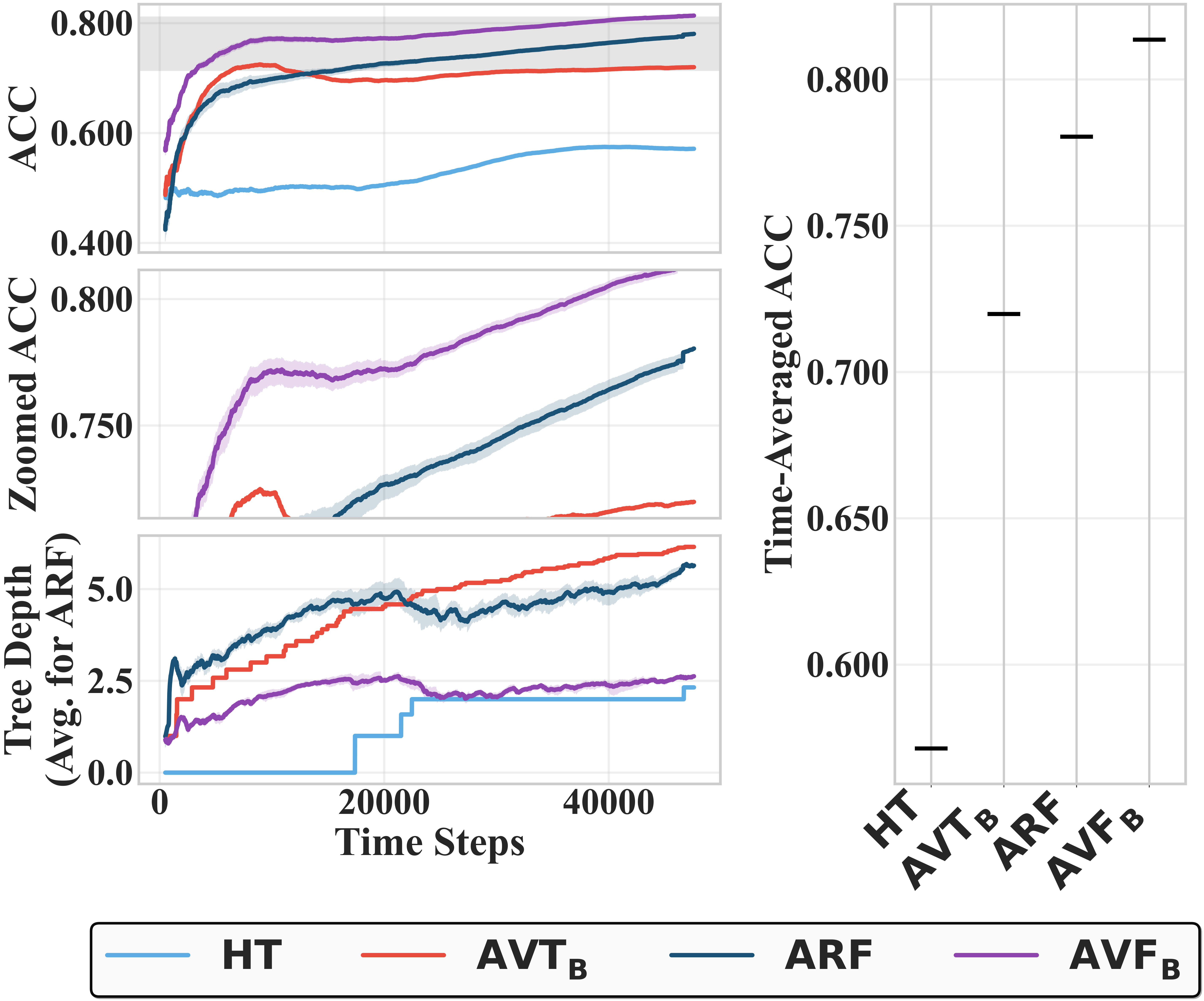}
    \caption{rbfm100k}
    \label{fig:cls_rbfm100k}
  \end{subfigure} &
  \begin{subfigure}[b]{0.31\textwidth}
    \centering
    \includegraphics[width=\textwidth]{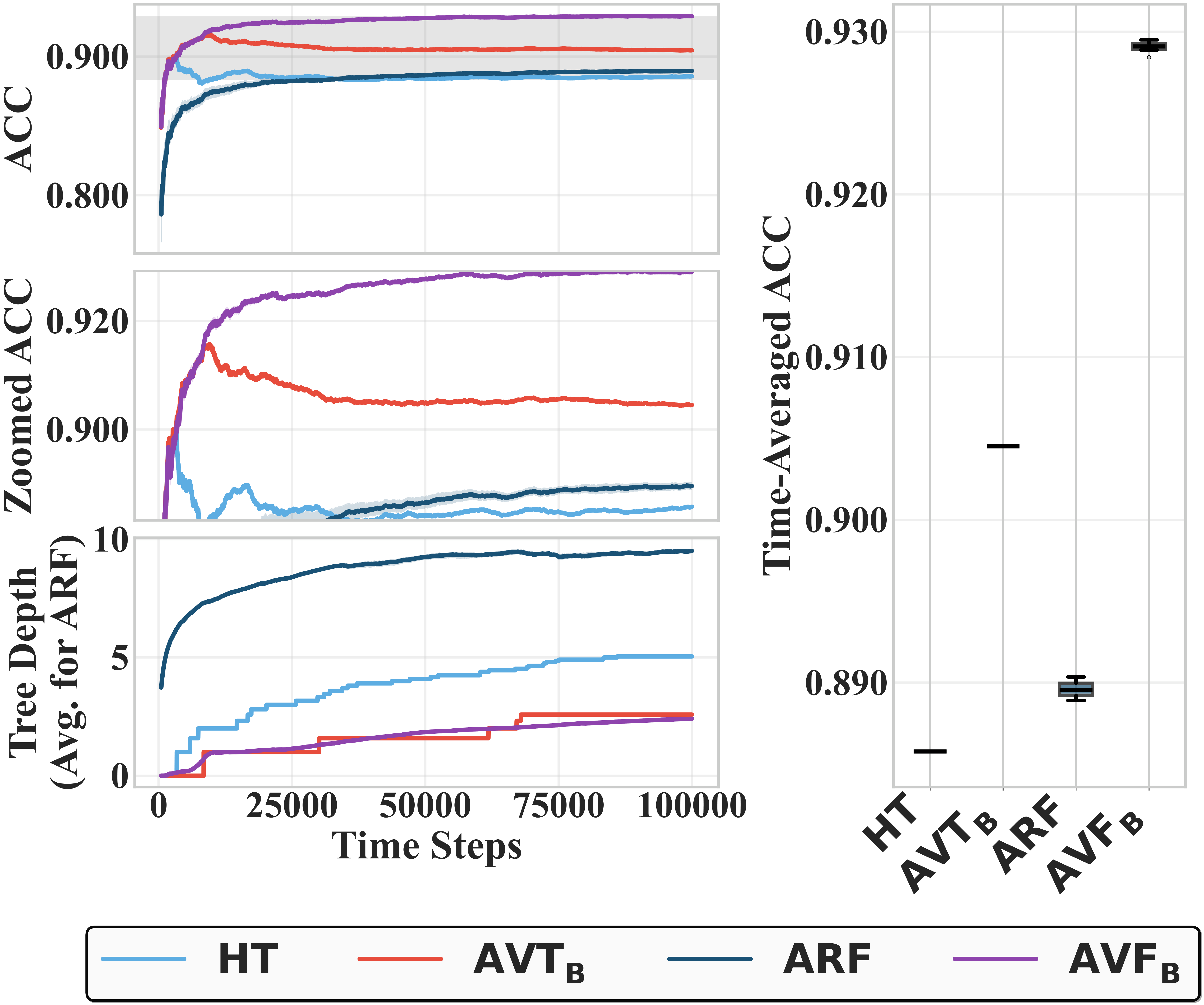}
    \caption{hyper100k}
    \label{fig:cls_hyper100k}
  \end{subfigure}

  \end{tabular}

  \caption{Regression datasets \textbf{(a)–(f)} and classification datasets \textbf{(g)–(l)}. Each panel shows prequential performance over $10$ runs with $95\%$ confidence intervals, zoomed ambiguous regions, average tree depth, and aggregated metrics.}

  \label{fig:main_all_datasets_merged}
\end{figure*}

\section*{Disclaimer}
This paper was prepared for informational
purposes by the Artificial Intelligence Research group of
JPMorgan Chase \& Co., and its affiliates (``JP Morgan'')
and is not a product of the Research Department of JP
Morgan. JP Morgan makes no representation, and warranty
whatsoever, and disclaims all liability, for the completeness,
accuracy or reliability of the information contained herein.
This document is not intended as investment research or
investment advice, or a recommendation, offer or solicitation for the purchase or sale of any security, financial
instrument, financial product or service, or to be used in
any way for evaluating the merits of participating in any transaction, and shall not constitute a solicitation under any jurisdiction or to any person, if such solicitation under such jurisdiction or to such person would be unlawful.

\textit{© 2026 JP Morgan Chase \& Co. All rights reserved.}








\section*{Impact Statement}

This paper presents work whose goal is to advance the field of Machine
Learning. There are many potential societal consequences of our work, none
which we feel must be specifically highlighted here.

\nocite{langley00}

\bibliography{example_paper}
\bibliographystyle{icml2026}

\newpage
\appendix
\onecolumn

\section{Proofs}
\label{app:proofs}

\subsection{Basic measurability and boundedness}
Fix a node--candidate pair $(v,c)$ and a test start time $s:=s^{v,c}$.
By construction, the incumbent and challenger predictions at time $t$
depend only on data observed up to time $t-1$ and any algorithmic randomness
revealed by then; thus they are $\F_{t-1}$-measurable. Since the loss $\ell$
is deterministic given a prediction and $Y_t$, the difference
$\Delta_t^{v,c}=\ell(m^v_{t-1},Y_t)-\ell(m^{v_c}_{t-1},Y_t)$ is $\F_t$-measurable.

Assume $\Delta_t^{v,c}\in[-1,1]$ a.s. and a predictable betting fraction
$\beta_t^{v,c}\in[0,1]$, i.e.\ $\beta_t^{v,c}$ is $\F_{t-1}$-measurable.
Then the multiplicative factor satisfies
$1+\beta_t^{v,c}\Delta_t^{v,c}\in[0,2]$ a.s., hence the wealth
\[
W_{s-1}^{v,c}=1,\qquad
W_t^{v,c}=W_{t-1}^{v,c}\bigl(1+\beta_t^{v,c}\Delta_t^{v,c}\bigr),\ t\ge s,
\]
is well-defined and nonnegative.

\subsection{Proof of Lemma~\ref{lem:anytime_valid_camready}}

\begin{lemma}[Anytime-valid single-split test]
\label{lem:anytime_valid_camready}
Under $H_0^{v,c}$, the process $(W_t^{v,c})_{t\ge s^{v,c}-1}$ is a nonnegative
$\F_t$-supermartingale. Consequently, for any $\alpha^{v,c}\in(0,1)$,
\[
\mathbb{P}_{H_0^{v,c}}\!\left(\sup_{t\ge s^{v,c}} W^{v,c}_t \ge \frac{1}{\alpha^{v,c}}\right)
\le \alpha^{v,c},
\qquad\text{and hence}\qquad
\mathbb{P}_{H_0^{v,c}}(\tau^{v,c}<\infty)\le \alpha^{v,c}.
\]
\end{lemma}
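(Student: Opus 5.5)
The plan is the standard testing-by-betting argument in three steps: prove the supermartingale property from predictability and the strong null, apply Ville's inequality, and then translate the conclusion to the stopping time.

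\textbf{Step 1 (supermartingale property).} By the measurability discussion above, $W_t^{v,c}$ is $\F_t$-measurable, nonnegative, and bounded by $2^{t-s+1}$, so it is integrable. Because $W_{t-1}^{v,c}$ and $\beta_t^{v,c}$ are $\F_{t-1}$-measurable, we can pull them out of the conditional expectation:
\[
\E\bigl[W_t^{v,c}\mid\F_{t-1}\bigr]
= W_{t-1}^{v,c}\bigl(1+\beta_t^{v,c}\,\E[\Delta_t^{v,c}\mid\F_{t-1}]\bigr)
= W_{t-1}^{v,c}\bigl(1+\beta_t^{v,c}\delta_t^{v,c}\bigr).
\]
Under $H_0^{v,c}$ we have $\delta_t^{v,c}\le 0$ for all $t\ge s$. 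Together with $\beta_t^{v,c}\ge 0$ and $W_{t-1}^{v,c}\ge 0$, this gives $\E[W_t^{v,c}\mid\F_{t-1}]\le W_{t-1}^{v,c}$. The Universal Portfolio fraction \eqref{eq:UP_beta} is a ratio of integrals over past $\Delta_u$, with $u\le t-1$, so it is indeed predictable and lies in $[0,1]$. If the $\varepsilon$-shifted variant is used, the same computation applies with $\delta_t-\varepsilon\le 0$ and $\beta_t\le 1/(1+\varepsilon)$, which keeps each factor nonnegative.

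\textbf{Step 2 (Ville's inequality).} A nonnegative supermartingale with $W_{s-1}^{v,c}=1$ satisfies $\Pp(\sup_t W_t^{v,c}\ge 1/\alpha^{v,c})\le \alpha^{v,c}$. If we want to avoid citing Ville directly, we prove it by optional stopping. Let $\sigma=\inf\{t: W_t\ge 1/\alpha\}$. For each fixed $N$, the stopped process gives
\[
\E[W_{\sigma\wedge N}]\le W_{s-1}=1,
\]
and by Markov's inequality $\Pp(\sigma\le N)\le \alpha\,\E[W_{\sigma\wedge N}]\le\alpha$. Letting $N\to\infty$ and using monotone convergence of the events gives the bound on the supremum.

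\textbf{Step 3 (stopping time).} The event $\{\tau^{v,c}<\infty\}$ is exactly $\{\exists t\ge s^{v}+1: W_t^{v,c}\ge 1/\alpha^{v,c}\}$, which is contained in the event controlled in Step 2. This gives the second claim.

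\textbf{Main obstacle.} The only delicate point is the indexing of the start time $s$. The test must begin at a time fixed or determined by the tree's growth, which is itself a stopping time, and the conditional-expectation computation must hold from that point on. I would handle this by defining $W_t^{v,c}\equiv 1$ before node $v$ exists (equivalently, taking $\beta_t=0$ there), so that the process is a supermartingale on all of $t\ge 0$. Step 1 then goes through unchanged, since the supermartingale inequality is trivial whenever $\beta_t=0$.
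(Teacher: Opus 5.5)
Your proposal is correct and follows the same route as the paper: the one-step conditional-expectation computation gives the supermartingale property under $H_0^{v,c}$, Ville's inequality bounds the supremum, and the inclusion $\{\tau^{v,c}<\infty\}\subseteq\{\sup_t W_t^{v,c}\ge 1/\alpha^{v,c}\}$ handles the stopping time; your additions (integrability, predictability of the UP fraction, the $\varepsilon$-shifted case, and the convention $\beta_t=0$ before the start time) are sound refinements the paper leaves implicit. One small caveat on your optional self-contained proof of Ville: the optional-stopping argument bounds $\Pp(\exists t: W_t^{v,c}\ge 1/\alpha^{v,c})$, which suffices for Step 3, but the stated supremum bound needs one more step, namely applying the argument at thresholds $a<1/\alpha^{v,c}$ and letting $a\uparrow 1/\alpha^{v,c}$, because the supremum can equal $1/\alpha^{v,c}$ without being attained.
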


Under the null hypothesis $H_0^{v,c}$ we have
$\E[\Delta_t^{v,c}\mid\F_{t-1}]\le 0$ for all $t\ge s$.
Using predictability of $\beta_t^{v,c}$ and $\F_{t-1}$-measurability of $W_{t-1}^{v,c}$,
\begin{align*}
\E[W_t^{v,c}\mid \F_{t-1}]
&=
\E\!\left[W_{t-1}^{v,c}\bigl(1+\beta_t^{v,c}\Delta_t^{v,c}\bigr)\,\Big|\,\F_{t-1}\right]\\
&=
W_{t-1}^{v,c}\Bigl(1+\beta_t^{v,c}\E[\Delta_t^{v,c}\mid\F_{t-1}]\Bigr)
\;\le\; W_{t-1}^{v,c}.
\end{align*}
Thus $(W_t^{v,c})_{t\ge s-1}$ is a nonnegative $\F_t$-supermartingale.

We now invoke Ville's inequality: if $(W_t)_{t\ge 0}$ is a nonnegative supermartingale
with $W_0=1$, then for any $a>0$,
$\Pp(\sup_{t\ge 0} W_t \ge a)\le 1/a$.
Applying this with $a=1/\alpha^{v,c}$ yields
\[
\Pp_{H_0^{v,c}}\!\left(\sup_{t\ge s} W_t^{v,c}\ge \frac{1}{\alpha^{v,c}}\right)
\le \alpha^{v,c}.
\]
Finally, $\{\tau^{v,c}<\infty\}\subseteq \{\sup_{t\ge s}W_t^{v,c}\ge 1/\alpha^{v,c}\}$,
so $\Pp_{H_0^{v,c}}(\tau^{v,c}<\infty)\le \alpha^{v,c}$ as claimed. 

We can prove similarly the same result for the weak hypothesis $H_{\mathrm w,0}^{v,c}$.
\qed

\subsection{Countability under adaptive tree growth}
We record a simple sufficient condition for countability, used by
Theorem~\ref{thm:global_control_camready}. Suppose (i) at each time $t$ the
algorithm maintains finitely many current nodes, and (ii) each node $v$ has a
finite candidate class $C_v$. Then the set of all tests ever instantiated up to
any finite time $t$ is finite; hence the set of tests over all times
$t\in\mathbb{N}$ is a countable union of finite sets and therefore countable.

\begin{theorem}[Global false-split control]
\label{thm:global_control_camready}
Assume that the set of all tests ever instantiated by the (possibly adaptive)
tree-growing algorithm is almost surely countable, and that each level
$\alpha^{v,c}$ is fixed at (or before) time $s^{v,c}$ (i.e.\ $\alpha^{v,c}$ is
$\F_{s^{v,c}-1}$-measurable). If the allocation satisfies
\[
\sum_{(v,c)} \alpha^{v,c} \le \alpha,
\]
then the probability of \emph{any} false split over the entire lifetime of the tree
is at most $\alpha$:
\[
\mathbb{P}\!\left(\exists\,(v,c): H_0^{v,c}\ \text{holds and}\ \tau^{v,c}<\infty\right)
\le \alpha.
\]
\end{theorem}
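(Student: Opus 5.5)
The plan is a union bound over the countable family of tests, with each term controlled by Lemma~\ref{lem:anytime_valid_camready}. The subtlety is that the family of tests is itself random: which nodes exist, when they are created ($s^{v,c}$), and which levels $\alpha^{v,c}$ they receive all depend on the data. So before summing, I would index the tests deterministically. Using the countability assumption, fix a deterministic enumeration of all \emph{potential} tests: each node is identified by its path of split decisions from the root, and each candidate by its index in the finite class $C_v$. For every potential slot $i$, define the indicator $I_i$ that slot $i$ is ever instantiated, together with its random start time $s_i$, level $\alpha_i$ (taken as $0$ if the slot is never instantiated), and stopping time $\tau_i$. The target event is then
\[
\bigcup_i \{I_i=1,\ H_0^{(i)}\text{ holds},\ \tau_i<\infty\}.
\]

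\textbf{Per-slot bound.} For each slot I would show that
\[
\Pp(I_i=1,\ H_0^{(i)},\ \tau_i<\infty)\le \E[\alpha_i \ind{I_i=1}].
\]
The argument is to rerun the supermartingale proof of Lemma~\ref{lem:anytime_valid_camready} conditionally on $\F_{s_i-1}$. Since $s_i$ is a stopping time and $\alpha_i$ is $\F_{s_i-1}$-measurable, the wealth $W^{(i)}_t$ started at $1$ at time $s_i-1$ is a nonnegative supermartingale under the strong null. Here the strong null for slot $i$ is understood as the conditional statement $\delta_t\le 0$ for all $t\ge s_i$. The conditional Ville inequality then gives
\[
\Pp\bigl(\sup_t W^{(i)}_t\ge 1/\alpha_i \,\big|\, \F_{s_i-1}\bigr)\le \alpha_i
\]
on $\{I_i=1\}$. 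Taking expectations yields the per-slot bound. The weak-null/CS case is identical, using that the confidence sequence is constructed at level $\alpha_i$ from time $s_i$, so it miscovers with conditional probability at most $\alpha_i$. A false stop requires $L_t>0\ge\bar\delta_t$, which is a miscoverage.

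\textbf{Summation.} Countable subadditivity followed by monotone convergence (Tonelli) gives
\[
\Pp(\text{any false split})\le \sum_i \E[\alpha_i\ind{I_i=1}]=\E\Bigl[\sum_{(v,c)}\alpha^{v,c}\Bigr]\le\alpha,
\]
where the last step uses the budget constraint, which holds pathwise.

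\textbf{Main obstacle.} The difficulty lies in the per-slot step: making rigorous that Ville's inequality applies conditionally at a random start time with a random level, and that "$H_0^{v,c}$ holds" is a well-posed event for a random node. My approach is to work on the event $\{I_i=1\}\in\F_{s_i-1}$ and apply optional stopping to the stopped supermartingale $W^{(i)}_{t\wedge\sigma}$, where $\sigma$ is the hitting time of $1/\alpha_i$. This gives
\[
\E\bigl[W^{(i)}_{\sigma}\ind{\sigma<\infty}\,\big|\,\F_{s_i-1}\bigr]\le 1,
\]
and hence the bound via Markov's inequality with the $\F_{s_i-1}$-measurable threshold $1/\alpha_i$. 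Finally, I would note that committing a false split is contained in $\{\tau^{v,c}<\infty\}$, which justifies the inclusion used at the start.
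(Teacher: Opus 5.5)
Your proposal is correct and has the same skeleton as the paper's proof: apply Lemma~\ref{lem:anytime_valid_camready} to each test and take a countable union bound. The paper does this in one line, writing $\sum_{(v,c)\in\mathcal T}\Pp(E^{v,c})\le\sum_{(v,c)\in\mathcal T}\alpha^{v,c}$ as if $\mathcal T$ and the levels were deterministic. Three ingredients of yours make that line rigorous when $\mathcal T$, $s^{v,c}$ and $\alpha^{v,c}$ are data-dependent: the fixed slot enumeration, the conditional Ville bound $\Pp(I_i=1,\,H_0^{(i)},\,\tau_i<\infty)\le\E[\alpha_i\ind{I_i=1}]$, and the Tonelli step.

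Your argument is also the only place where the hypothesis that $\alpha^{v,c}$ is $\F_{s^{v,c}-1}$-measurable actually does any work; the paper states it but never uses it. It further shows that the budget need only hold in expectation. One small point of care: the slot address should record left/right turns and candidate indices, not real-valued thresholds. Alternatively, simply index tests by order of instantiation. Otherwise the slot set is uncountable and countable subadditivity no longer applies.

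Your obstacle paragraph raises one issue that it does not actually resolve: what ``$H_0^{(i)}$ holds'' means for a data-dependent node. If it is the path event $\{\delta^{(i)}_t\le 0\ \forall t\ge s_i\}$, then $W^{(i)}$ is not a supermartingale on the whole space. Restricting to $\{I_i=1\}$ and using optional stopping does not change that. The standard repair is to replace the bet $\beta_t$ by $\beta_t\ind{\delta^{(i)}_u\le 0\ \forall u\in[s_i,t]}$. This is still predictable, because each $\delta^{(i)}_u$ is $\F_{u-1}$-measurable. The resulting wealth is a nonnegative supermartingale everywhere and coincides with $W^{(i)}$ on the null event, so your per-slot bound goes through verbatim. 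The confidence-sequence case needs no such fix, since coverage holds regardless of whether the null is true.
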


\subsection{Proof of Theorem~\ref{thm:global_control_camready}}
Let $\mathcal{T}$ denote the (a.s.\ countable) set of all tests instantiated by the
algorithm, indexed by pairs $(v,c)$. For each test define the ``false rejection''
event
\[
E^{v,c}:=\{H_0^{v,c}\ \text{holds and}\ \tau^{v,c}<\infty\}.
\]
By Lemma~\ref{lem:anytime_valid_camready}, $\Pp(E^{v,c})\le \alpha^{v,c}$ for each
$(v,c)\in\mathcal{T}$, regardless of dependence across tests. Hence by the union bound
over the countable family,
\[
\Pp\!\left(\bigcup_{(v,c)\in\mathcal{T}} E^{v,c}\right)
\;\le\;
\sum_{(v,c)\in\mathcal{T}} \Pp(E^{v,c})
\;\le\;
\sum_{(v,c)\in\mathcal{T}} \alpha^{v,c}
\;\le\; \alpha.
\]
This proves the stated global control.
\qed

\newpage 
\subsection{Proof of Theorem~\ref{thm:monotone_model}}
\label{app:proofs_monotone_commits}

We prove monotone expected performance of the \emph{deployed} predictor
$(\hat m_t)_{t\ge 0}$ produced by the procedure under i.i.d.\ data and convex
losses, on the event $\mathcal E$ that no false split is ever committed. For clarity, we focus on the classification setting; however, the result
extends directly to any bounded-output prediction problem. Recall the theorem says: 

\begin{theorem}[Strict monotonicity between and at commit times]
\label{app:thm:monotone_model}
Assume $(X,Y)\sim P$ are i.i.d.\ and $\ell(\hat y,y)$ is convex in its first argument.
Run Algorithm~\ref{alg:avht} and let $0=\tau_0<\tau_1<\tau_2<\cdots$ denote the (random)
commit times, and write $\hat m_t$ for the deployed predictor.

\emph{(Between commits.)}
For all $k\ge 0$ and $t\in(\tau_k,\tau_{k+1})$,
\[
\mathbb E\!\left[\ell(\hat m_t(X),Y)\right]
\;\leq\;
\mathbb E\!\left[\ell(\hat m_{t-1}(X),Y)\right].
\]

\emph{(At commits.)}
There exists $\varepsilon>0$ such that if splits are committed
only when the weak test certifies an advantage exceeding $\varepsilon$, then with
probability at least $1-\alpha$, for every commit time $\tau_k$,
\[
\mathbb E\!\left[\ell(\hat m_{\tau_k}(X),Y)\right]
\;\le\;
\mathbb E\!\left[\ell(\hat m_{\tau_k^-}(X),Y)\right]
\]
where $\hat m_{\tau_k^-}$ denotes the pre-commit model.
\end{theorem}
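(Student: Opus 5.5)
The proof splits into two parts that use different ingredients. Between commits, only the leaf statistics change; the tree structure is fixed. At commits, the structure changes, and we combine the global error control of Theorem~\ref{thm:global_control_camready} with stationarity. Throughout, we work conditionally on the structure and use the tower property.

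\emph{Between commits.} Fix $k$ and $t\in(\tau_k,\tau_{k+1})$. The partition $\{R(v)\}_{v\in\mathcal L}$ is then fixed, and $\hat m_t$ differs from $\hat m_{t-1}$ at most on the single leaf $v$ that receives $X_t$. Both models are piecewise constant, so the expected risk decomposes as
\[
\E[\ell(\hat m_t(X),Y)]=\sum_{v}P(X\in R(v))\,\E\bigl[\ell(p_t(v),Y)\mid X\in R(v)\bigr].
\]
It therefore suffices to show, leaf by leaf, that the expected risk of the plug-in prediction $p_t(v)$ does not exceed that of $p_{t-1}(v)$. Here the expectation is also taken over the sample, conditional on the structure.

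The key step is to write $p_t(v)$ as the average of $n$ i.i.d.\ label indicators, with $n=n_t(v)$ from the conditional law $P(Y\mid X\in R(v))$, and $p_{t-1}(v)$ as the average of $n-1$ of them. The leave-one-out identity gives
\[
p_t=\frac1n\sum_{i=1}^n p_t^{(-i)}, \qquad p_t^{(-i)}\overset{d}{=}p_{t-1}.
\]
By convexity of $\ell(\cdot,y)$ and Jensen's inequality,
\[
\ell(p_t,y)\le \frac1n\sum_i \ell\bigl(p_t^{(-i)},y\bigr).
\]
Taking expectations over an independent test point $(X,Y)$ in $R(v)$ and over the sample then yields the leafwise monotonicity. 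This is the argument of \citet{mattei2025ensembles}. Strong convexity gives strict improvement unless the leave-one-out estimates coincide.

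One subtlety is that the event $\{t<\tau_{k+1}\}$ depends on the data, so conditioning on it could break exchangeability. I would handle this by stating the between-commit claim conditionally on the structure. The stopping rule only reads the test statistics, and I would argue, or assume as the paper implicitly does, that exchangeability of the samples within each leaf is preserved. I expect this to be the delicate point of the first part.

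\emph{At commits.} I would apply the weak test with the $\varepsilon$-shifted rule $L_t>\varepsilon$. By Theorem~\ref{thm:global_control_camready}, adapted to the CS test, with probability at least $1-\alpha$ every committed $(v,c)$ satisfies
\[
\bar\delta^{v,c}_{\tau}>\varepsilon .
\]
That is, the averaged conditional loss advantage of the challenger over the incumbent, over $[s^v+1,\tau]$, exceeds $\varepsilon$.

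Under i.i.d.\ data, $\delta_u^{v,c}$ is the difference between the population risks on $R(v)$ of the incumbent and challenger fitted on data up to $u-1$. By the between-commit argument, both risk sequences are non-increasing in $u$ and converge. The remaining step is to transfer an averaged advantage to the advantage at the commit time $\tau$. We need
\[
\delta_{\tau+1}\ge \bar\delta_\tau-\bigl(\text{max variation of the incumbent/challenger risks over }[s^v,\tau]\bigr).
\]
The incumbent risk can only decrease, which works against us, but by at most the estimation-error gap. Since losses are bounded and the plug-in risk converges at rate $O(1/n)$ after $n_{\min}$ samples, this gap is bounded by a constant $\kappa(n_{\min})$. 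Choosing $\varepsilon\ge\kappa$ guarantees that the challenger's risk at $\tau$ is at most the incumbent's.

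The post-commit model $\hat m_{\tau_k}$ equals the challenger restricted to $R(v)$ and is unchanged elsewhere. Weighting by $P(X\in R(v))$ then gives the claimed inequality at every commit, on the $1-\alpha$ event.

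The main obstacle is this transfer from the averaged certified advantage to the instantaneous risk comparison at the stopping time. It requires an explicit uniform bound on how much the incumbent's plug-in risk can improve over the test window, and that bound is what determines the existence claim for $\varepsilon$.
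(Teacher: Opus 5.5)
You have the between-commit part right, and it follows the paper's argument. Both use a leafwise decomposition of the fresh-draw risk, then Marshall--Proschan leave-one-out plus Jensen on each plug-in leaf mean, then mixing over leaves. The paper handles the exchangeability issue you flag the way you suggest: it conditions on the structure at leaf creation and uses that leaf statistics involve only post-creation labels.

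\textbf{The gap is in the commit step.} It sits exactly at the transfer you call the main obstacle, and your sketch for it does not work. $\delta_u^{v,c}$ is the fresh-draw risk gap evaluated at the \emph{realized} leaf estimates, so it is an $\F_{u-1}$-measurable random variable. The between-commit result only says that the \emph{expected} plug-in risk, averaged over the sample, is non-increasing. Along a sample path, the incumbent's and the challenger's conditional risks both fluctuate, so three of your claims fail:
\begin{itemize}
\item ``The incumbent risk can only decrease'' is false pathwise.
\item You omit the symmetric failure mode: a rise in the challenger's realized risk lowers $\delta$ just as much.
\item The $O(1/n)$ rate is an in-expectation statement. A leaf mean that already has $n_{\min}$ samples can still drift across most of $[0,1]$ over a long enough window. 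So no deterministic constant $\kappa(n_{\min})$ bounds the pathwise variation of $\delta$ over $[s^v,\tau]$.
\end{itemize}
A high-probability bound is possible, but it costs an extra failure probability, which would also have to be spread over all tests ever instantiated. Your $1-\alpha$ accounting does not include it.

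\textbf{The missing idea is a deterministic one-step drift bound.} The paper assumes Brier loss, which is $2$-Lipschitz on $[0,1]$; convexity alone does not give this. It also requires every leaf involved to have at least $n_{\min}$ post-creation samples. Then each update moves a leaf mean by at most $1/(n_{\min}+1)$. Since $\delta$ integrates the loss-gap function against the fixed $P$, this gives $|\delta_{t+1}-\delta_t|\le \rho:=4/(n_{\min}+1)$ almost surely. Telescoping and averaging over the window yield
\[
\delta_t^{v,c}\;\ge\;\bar\delta_t^{v,c}-\rho\,\frac{t-s^{v,c}-1}{2}.
\]
The paper therefore commits only when $L_t>\varepsilon+\rho(t-s^{v,c}-1)/2$. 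Alternatively it uses the adaptive penalty $\frac{1}{t-s^{v,c}}\sum_j j\rho_{s^{v,c}+j}$ computed from the actual leaf counts.

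On the CS coverage event, which has probability at least $1-\alpha$ with no further budget spent, this gives $\delta_t^{v,c}\ge\varepsilon\ge 0$ at the commit. Lemma~\ref{lem:commit_improves_app}, together with the leaf-update lemma, then turns this into the risk inequality.

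Note that this correction grows with the window length rather than being a single constant. So the paper does not obtain the uniform $\kappa$ you were aiming for. The paper also offers a direct alternative: certify the current gap through a confidence set for the joint cell--label probabilities (Lemma~\ref{lem:stationary_risk_certificate_app}).
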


\paragraph{Standing notation.}
Let $(\mathcal F_t)_{t\ge 0}$ be the natural filtration generated by the stream
and the internal randomness of the algorithm.
At each time $t\ge 0$, $\hat m_t$ denotes the deployed tree predictor after
processing $t$ samples; hence $\hat m_t$ is $\mathcal F_t$-measurable.
Let $(X,Y)\sim P$ be an independent fresh draw, independent of $\mathcal F_t$.
Define the (conditional) one-step-ahead risk
\[
R_t
\;:=\;
\mathbb E\!\left[\ell(\hat m_t(X),Y)\,\middle|\,\mathcal F_t\right],
\qquad
\bar R_t
\;:=\;
\mathbb E\!\left[\ell(\hat m_t(X),Y)\,\middle|\,\mathcal E\right].
\]
Note that $R_t$ is a random variable (measurable w.r.t.\ $\mathcal F_t$), while
$\bar R_t$ is a deterministic scalar.

We decompose each update $t\mapsto t+1$ into two parts:
(i) a \emph{parameter update} of the leaf statistics (empirical probabilities),
and (ii) possibly a \emph{structural update} (a committed split).
We show that each part is non-increasing in expected risk on $\mathcal E$,
which implies the claimed monotonicity.

\subsubsection{Step 1: Marshall--Proschan implies monotone plug-in risk}

The next lemma is the key tool controlling the fact that for convex losses, the expected risk of an empirical-mean predictor is nonincreasing with sample size.

\begin{lemma}[Marshall--Proschan \cite{marshall1964inequality, mattei2025ensembles} monotonicity for empirical means]
\label{lem:MP_mean_app}
Let $Z_1,\dots,Z_n$ be exchangeable random variables in $[0,1]$ and let
$L:[0,1]\to\mathbb R$ be convex.
Define $\bar Z_n:=\frac1n\sum_{i=1}^n Z_i$.
Then for all $n\ge 2$,
\[
\mathbb E\!\left[L(\bar Z_n)\right]
\;\le\;
\mathbb E\!\left[L(\bar Z_{n-1})\right].
\]
\end{lemma}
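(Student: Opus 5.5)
The plan is the classical Marshall--Proschan leave-one-out symmetrization: write $\bar Z_n$ as an average of leave-one-out means, apply Jensen's inequality to the convex $L$, and use exchangeability to identify each leave-one-out term with $\bar Z_{n-1}$ in distribution. No other result from the paper is needed.

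First, for each $i\in\{1,\dots,n\}$ define the leave-one-out mean
\[
\bar Z_{n-1}^{(-i)} := \frac{1}{n-1}\sum_{j\neq i} Z_j .
\]
A direct count shows that every index $j$ appears in exactly $n-1$ of the $n$ sums, so
\[
\frac{1}{n}\sum_{i=1}^n \bar Z_{n-1}^{(-i)} = \frac{1}{n(n-1)}\,(n-1)\sum_{j=1}^n Z_j = \bar Z_n .
\]
Thus $\bar Z_n$ is a uniform convex combination of the $n$ leave-one-out means, and each of these lies in $[0,1]$, the domain of $L$.

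Second, apply Jensen's inequality pointwise, i.e.\ the finite form of convexity with equal weights $1/n$:
\[
L(\bar Z_n) \le \frac1n\sum_{i=1}^n L\bigl(\bar Z_{n-1}^{(-i)}\bigr).
\]
Since $L$ is convex on the compact interval $[0,1]$, it is bounded above by $\max(L(0),L(1))$. It is also bounded below by an affine minorant, which is finite on $[0,1]$; at the endpoints we can use a one-sided supporting line, or take the expectations in the extended sense. Hence all the expectations below are well defined, and taking expectations gives
\[
\mathbb E[L(\bar Z_n)] \le \frac1n\sum_{i=1}^n \mathbb E\bigl[L(\bar Z_{n-1}^{(-i)})\bigr].
\]

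Third, use exchangeability. Let $\pi$ be the transposition of $i$ and $n$. The vector $(Z_{\pi(1)},\dots,Z_{\pi(n)})$ has the same law as $(Z_1,\dots,Z_n)$. Therefore $\bar Z_{n-1}^{(-i)}$ has the same distribution as $\bar Z_{n-1}^{(-n)}=\bar Z_{n-1}$. It follows that each summand on the right equals $\mathbb E[L(\bar Z_{n-1})]$, and the bound collapses to the claim.

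The argument has no serious obstacle. The only delicate point is integrability of $L$ near the endpoints when $L$ is merely convex and not continuous there. I would handle this with the affine-minorant remark above, or simply note that convex functions on $[0,1]$ are bounded above and bounded below by an affine function, so every expectation lies in $(-\infty,\infty)$.
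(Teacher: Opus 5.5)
Your proposal is correct and follows essentially the same route as the paper: write $\bar Z_n$ as the uniform average of the leave-one-out means, apply Jensen's inequality, and use exchangeability so that each $\bar Z_{n-1}^{(-i)}$ has the law of $\bar Z_{n-1}$. The paper phrases the averaging identity through the matrix $U$ (all ones off the diagonal), whereas you count directly; your transposition argument for the exchangeability step and your boundedness remark make the argument slightly tidier.
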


\begin{proof}
Let $\mathbf 1\in\mathbb R^n$ denote the all-ones vector and define
\[
U
=
\begin{pmatrix}
0 & 1 & 1 & \cdots & 1 \\
1 & 0 & 1 & \cdots & 1 \\
1 & 1 & 0 & \cdots & 1 \\
\vdots & \vdots & \vdots & \ddots & \vdots \\
1 & 1 & 1 & \cdots & 0
\end{pmatrix}
\in \mathbb R^{n\times n}.
\]
Then $U\mathbf 1=(n-1)\mathbf 1$, hence $\mathbf 1=\frac{1}{n-1}U\mathbf 1$.

Let $Z=(Z_1,\dots,Z_n)^\top$. Note that
\[
\bar Z_n \;=\;\frac1n \mathbf 1^\top Z.
\]
Moreover, the $j$-th component of $U^\top Z$ equals $\sum_{i\neq j} Z_i$, hence
the vector of leave-one-out empirical means is $\frac{1}{n-1}U^\top Z$, and
\[
\frac1n \mathbf 1^\top\Big(\frac{1}{n-1}U^\top Z\Big)
=
\frac{1}{n(n-1)}(U\mathbf 1)^\top Z
=
\frac{1}{n}\mathbf 1^\top Z
=
\bar Z_n.
\]
Therefore
\[
\bar Z_n
=
\frac{1}{n}\sum_{j=1}^n
\bar Z_{n-1}^{(-j)},
\qquad
\bar Z_{n-1}^{(-j)}
:=
\frac{1}{n-1}\sum_{i\neq j} Z_i,
\]
i.e., $\bar Z_n$ is a uniform convex combination of the leave-one-out means.

Since $L$ is convex, Jensen’s inequality gives
\[
L(\bar Z_n)
\le
\frac{1}{n}\sum_{j=1}^n
L\!\left(\bar Z_{n-1}^{(-j)}\right).
\]
Taking expectations and using exchangeability (the law of $\bar Z_{n-1}^{(-j)}$
does not depend on $j$) yields
\[
\mathbb E[L(\bar Z_n)]
\le
\frac{1}{n}\sum_{j=1}^n \mathbb E\!\left[L(\bar Z_{n-1}^{(-j)})\right]
=
\mathbb E\!\left[L(\bar Z_{n-1}^{(-1)})\right]
=
\mathbb E\!\left[L(\bar Z_{n-1})\right],
\]
where the last equality uses $(Z_1,\dots,Z_n)\stackrel{d}{=}(Z_2,\dots,Z_n)$ under exchangeability.
\end{proof}

\begin{lemma}[Plug-in risk is nonincreasing in sample size]
\label{lem:plugin_risk_monotone_app}
Assume $Y_1,Y_2,\dots$ are i.i.d.\ Bernoulli$(p)$.
Let $\hat p_n := \frac1n\sum_{i=1}^n Y_i$.
Assume that for each $y\in\{0,1\}$, the map $\hat y\mapsto \ell(\hat y,y)$ is convex.
Let $Y\sim\mathrm{Bernoulli}(p)$ be an independent fresh draw, independent of
$(Y_i)_{i\ge 1}$.
Then for all $n\ge 1$,
\[
\mathbb E\!\left[\ell(\hat p_{n+1},Y)\right]
\;\le\;
\mathbb E\!\left[\ell(\hat p_{n},Y)\right].
\]
\end{lemma}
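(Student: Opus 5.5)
Conditioning on the fresh draw $Y$ and using its independence from $(Y_i)_{i\ge1}$ should reduce the claim to Lemma~\ref{lem:MP_mean_app}. Concretely, I would write
\[
\mathbb E\!\left[\ell(\hat p_n,Y)\right]
=
p\,\mathbb E\!\left[\ell(\hat p_n,1)\right]+(1-p)\,\mathbb E\!\left[\ell(\hat p_n,0)\right].
\]
This holds by Fubini/the tower property, because $Y$ is independent of $\hat p_n$ and all quantities are bounded. It therefore suffices to show, for each fixed $y\in\{0,1\}$, that $\mathbb E[\ell(\hat p_{n+1},y)]\le \mathbb E[\ell(\hat p_n,y)]$; summing with weights $p,1-p\ge0$ then gives the claim.

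For fixed $y$, I would set $L(z):=\ell(z,y)$ on $[0,1]$, which is convex by assumption, and $Z_i:=Y_i$. These are i.i.d., hence exchangeable, and take values in $[0,1]$. Lemma~\ref{lem:MP_mean_app} applied with $n+1\ge 2$ variables gives $\mathbb E[L(\bar Z_{n+1})]\le\mathbb E[L(\bar Z_n)]$, i.e.\ $\mathbb E[\ell(\hat p_{n+1},y)]\le\mathbb E[\ell(\hat p_n,y)]$. This holds for every $n\ge1$, exactly as required.

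The only technical points are that these expectations are finite and that the interchange is legitimate. A convex function on the compact interval $[0,1]$ is bounded above by $\max\{L(0),L(1)\}$ and bounded below, being convex and finite on a closed interval (e.g.\ bounded losses as assumed in the paper). Hence $L(\hat p_n)$ is integrable, and with it Fubini and the Jensen step inside Lemma~\ref{lem:MP_mean_app}. A subtle case is log loss, which may take value $+\infty$ at the endpoints. There I would either invoke the standing assumption $\ell\in[0,1]$ or note that the inequality still holds in the extended reals, since Jensen for convex functions with values in $(-\infty,+\infty]$ remains valid. I expect no real obstacle: the main work is done by Lemma~\ref{lem:MP_mean_app}, and the step requiring care is the decomposition over the values of the fresh label $Y$ via independence.
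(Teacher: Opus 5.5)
Your proposal is correct and matches the paper's proof: fix $y\in\{0,1\}$, apply the Marshall--Proschan lemma to the exchangeable $Y_1,\dots,Y_{n+1}$ with $L=\ell(\cdot,y)$, then average over the independent fresh label $Y$. Your integrability remarks are extra care that the paper omits. They are easy to justify here, since $\hat p_n$ takes only finitely many values, so finiteness of $\ell(\cdot,y)$ on $[0,1]$ is all that is needed.
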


\begin{proof}
Fix $y\in\{0,1\}$ and define the convex function $L_y(\hat y):=\ell(\hat y,y)$.
Apply Lemma~\ref{lem:MP_mean_app} to the exchangeable variables
$Y_1,\dots,Y_{n+1}$ with $L=L_y$:
\[
\mathbb E\!\left[L_y\!\left(\frac{1}{n+1}\sum_{i=1}^{n+1}Y_i\right)\right]
\le
\mathbb E\!\left[L_y\!\left(\frac{1}{n}\sum_{i=1}^{n}Y_i\right)\right].
\]
Since $\hat p_{n+1}=\frac{1}{n+1}\sum_{i=1}^{n+1}Y_i$ and
$\hat p_n=\frac{1}{n}\sum_{i=1}^{n}Y_i$, this becomes
$\mathbb E[\ell(\hat p_{n+1},y)]\le \mathbb E[\ell(\hat p_n,y)]$.
Now average over the independent $Y\sim\mathrm{Bernoulli}(p)$ to conclude
\[
\mathbb E\!\left[\ell(\hat p_{n+1},Y)\right]
\le
\mathbb E\!\left[\ell(\hat p_{n},Y)\right].
\]
\end{proof}

\subsubsection{Step 2: Parameter updates (no split) cannot increase expected risk}

We now lift Lemma~\ref{lem:plugin_risk_monotone_app} from a single Bernoulli mean
to the full tree. When no split is committed, each new observation updates the
statistics of exactly one leaf, and convexity ensures that this update cannot
increase the expected risk.

\begin{lemma}[Leaf-statistics update is risk-nonincreasing]
\label{lem:leaf_update_monotone_app}
Let $(X,Y)\sim P$ be a fresh draw, independent of the data stream.
Fix a time $t\ge 1$ such that no structural change (split commit) occurs between
times $t-1$ and $t$; only leaf statistics are updated using the observation
$(X_t,Y_t)$.
Then
\[
\mathbb E\!\left[\ell(\hat m_t(X),Y)\right]
\;\le\;
\mathbb E\!\left[\ell(\hat m_{t-1}(X),Y)\right].
\]
\end{lemma}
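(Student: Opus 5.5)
The plan is to fix the tree structure at time $t-1$ and condition on it. Then I would decompose the risk leaf by leaf, and apply Lemma~\ref{lem:plugin_risk_monotone_app} to the single leaf that receives $(X_t,Y_t)$.

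Since no split is committed between $t-1$ and $t$, the partition $\{R(u)\}_{u\in\mathcal L_{t-1}}$ is the same for $\hat m_{t-1}$ and $\hat m_t$. For a fresh $(X,Y)$ independent of the stream, I would write
\[
\mathbb E\bigl[\ell(\hat m_t(X),Y)\bigr]=\sum_{u}\mathbb E\Bigl[\Pp(X\in R(u))\,\mathbb E\bigl[\ell(p_t(u),Y)\mid X\in R(u)\bigr]\Bigr],
\]
and the analogous expression at $t-1$. Only the leaf $v$ containing $X_t$ changes its statistics. Every other leaf satisfies $p_t(u)=p_{t-1}(u)$, so those terms agree pathwise. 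Thus it suffices to compare, for the routed leaf $v$, the plug-in predictor $p_t(v)$ built from $n_{t-1}(v)+1$ labels with $p_{t-1}(v)$ built from $n_{t-1}(v)$ labels.

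The main step is to argue that, conditionally on the structure (the $\sigma$-field generated by the tree shape and the counts $n_{t-1}(u)$), the labels stored in leaf $v$ are i.i.d.\ draws from $P(Y\mid X\in R(v))$. The fresh label $Y$ given $X\in R(v)$ has the same law. Under i.i.d.\ sampling, points falling in $R(v)$ after the creation time $s^v$ are i.i.d.\ from the conditional law. For one-hot coordinates $Y_{\cdot,k}$ with a coordinatewise or jointly convex loss, I would apply Lemma~\ref{lem:MP_mean_app} directly to the vector-valued exchangeable sample. Jensen in the proof of that lemma works verbatim for $L$ convex on the simplex. This yields $\mathbb E[\ell(p_t(v),y)]\le \mathbb E[\ell(p_{t-1}(v),y)]$ for each $y$, and I would then average over $Y\mid X\in R(v)$, exactly as in Lemma~\ref{lem:plugin_risk_monotone_app}. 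I would also handle the randomness of which leaf receives $X_t$ by conditioning on the event $\{X_t\in R(v)\}$, which is independent of the past under i.i.d.\ data, and summing over $v$ weighted by $\Pp(X_t\in R(v)\mid\text{structure})$.

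The obstacle I expect is this conditional i.i.d.\ claim. The tree structure, and in particular the creation time $s^v$ and the event that no split occurred at $t$, are data-dependent: the split decision depends on the leaf's labels through the betting wealth. Conditioning on ``no commit'' can therefore bias the stored labels. My first attempt would be to condition only on the structure at $t-1$ and on the event that $v$ was created at $s^v$. The split that created $v$ used data from the parent, but labels arriving in $R(v)$ after $s^v$ are fresh and independent of that decision. I would then treat the ``no commit at $t$'' restriction as part of the statement's hypothesis, read as holding for a fixed structural path. If that is too strong, I would fall back to stating the result conditionally on $\mathcal F_{s^v}$ and on the counts, where exchangeability of post-$s^v$ labels in $R(v)$ holds cleanly.
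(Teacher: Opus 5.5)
Your plan follows the paper's proof. The paper also conditions on a $\sigma$-field $\mathcal H$ meant to freeze the deployed partition, writes the fresh-draw risk as a mixture over leaves, and notes that only the leaf receiving $X_t$ changes. It then applies Lemma~\ref{lem:plugin_risk_monotone_app} to that leaf after conditioning on its count and on $\{X_t\in A(u)\}$, and averages back. Your vector-valued use of Lemma~\ref{lem:MP_mean_app} for $K$ classes is a harmless extension of the paper's Bernoulli version. The step you flag is the crux, and neither of your resolutions closes it. The paper does not close it either: it asserts an ``honesty/no-reuse'' property and concludes that post-creation labels in each leaf are i.i.d.\ given $\mathcal H$, independent of the samples used by the split-selection tests. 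That holds only for the test that \emph{created} the leaf; the tests run \emph{at} the leaf afterwards use exactly those labels.

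Reading the hypothesis as a fixed structural path means conditioning on every test at $v$ having stayed below its threshold on $(s^v,t]$. Each $W^{v,c}_r$ is built from $\Delta^{v,c}_u$, which involves the prequential means $p_{u-1}(v)$, $p_{u-1}(v_c^{L})$ and $p_{u-1}(v_c^{R})$. So this event is a non-symmetric function of the \emph{ordered} post-creation labels in $R(v)$. It also depends on $Y_t$, because the wealth is updated with $\Delta_t$ before the commit check in Algorithm~\ref{alg:avht}. Under this conditioning the labels are neither i.i.d.\ nor exchangeable, so Lemma~\ref{lem:MP_mean_app} cannot be invoked.

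This matters because Marshall--Proschan holds only on average over the sample, and label-dependent conditioning can reverse it. For Brier loss, on an event such as $\{p_{t-1}(v)=p(v)\}$ with $X_t\in R(v)$, the expected leaf risk \emph{increases} by $p(v)(1-p(v))/(n+1)^2$ after the update.

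Your fallback, conditioning on $\mathcal F_{s^v}$ and the routing counts, does give a clean i.i.d.\ sample by the strong Markov property of the i.i.d.\ stream at the stopping time $s^v$. But what it proves is monotonicity for a counterfactual predictor that keeps $v$ as a leaf whatever the tests do. $\hat m_t$ agrees with that predictor on $R(v)$ only on the no-split event, and reinserting that indicator brings back the same selection.

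To get the lemma for $\hat m_t$ you need an extra ingredient. One option is honesty enforced in the algorithm by sample splitting: compute the tests at $v$ from shadow statistics on a subsample disjoint from the one feeding the deployed leaf mean. The structure is then independent of the estimation labels given the routing, and your argument goes through as written. Otherwise, the statement should be weakened to the frozen-structure predictor.
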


\begin{proof}
Fix such a time $t$ and let $\mathcal L:=\mathcal L_{t-1}$ be the (random) set of
leaves of the deployed tree at time $t-1$, with associated regions
$\{A(u):u\in\mathcal L\}$. Since there is no structural update between $t-1$ and
$t$, the deployed partition $\{A(u)\}_{u\in\mathcal L}$ is the same at both
times.

\paragraph{A sigma-field that fixes the deployed partition.}
For each leaf $u\in\mathcal L$, let $\tau(u)\le t-1$ denote its (random) creation
time, and define the post-creation index set and corresponding sample count
\[
I_{t-1}(u)
:=
\{r:\ \tau(u)< r\le t-1,\ X_r\in A(u)\},
\qquad
N_{t-1}(u):=|I_{t-1}(u)|.
\]
We define the sigma-field $\mathcal H$ generated by the tree structure and split thresholds up to the creation of the leaf $u$. By construction, conditional on $\mathcal H$ the deployed partition
$\{A(u):u\in\mathcal L\}$ and all associated routing information are fixed and
deterministic. Moreover, for each leaf $u$, the empirical statistic
$\hat p_{t-1}(u)$ depends only on labels $(Y_r)_{r\in I_{t-1}(u)}$ observed
\emph{after} the creation of $u$. In particular, these labels are independent of the samples used to determine the
partition and to perform the split-selection tests.

\paragraph{Leafwise decomposition of the fresh risk.}
Because $\hat m_{t-1}$ is piecewise-constant on the deployed partition,
conditioning on $\mathcal H$ yields
\begin{align*}
\mathbb E\!\left[\ell(\hat m_{t-1}(X),Y)\,\middle|\,\mathcal H\right]
&=
\sum_{u\in\mathcal L}
\mathbb P\!\left(X\in A(u)\,\middle|\,\mathcal H\right)\;
\mathbb E\!\left[\ell(\hat p_{t-1}(u),Y)\,\middle|\,\mathcal H,\,X\in A(u)\right].
\end{align*}
The analogous decomposition holds at time $t$ with $\hat p_t(u)$ in place of
$\hat p_{t-1}(u)$.

\paragraph{Within each leaf, post-creation labels are i.i.d.\ given $\mathcal H$.}
By the honesty/no-reuse property mentioned above, which stated that the empirical statistics depend only on labels observed after the creation of the nodes, for every $s\ge \tau(u)+1$,
\[
\hat p_s(u)
=
\frac{1}{N_s(u)}\sum_{r\in I_s(u)} Y_r,
\qquad
I_s(u):=\{r:\ \tau(u)<r\le s,\ X_r\in A(u)\}.
\]
Fix $u\in\mathcal L$. Conditional on $\mathcal H$ and on $\{X\in A(u)\}$, the set
$A(u)$ and the time $\tau(u)$ are fixed, and the stream is i.i.d. Hence the
labels $(Y_r)_{r\in I_s(u)}$ are i.i.d.\ under
$\mathbb P(\,\cdot\mid \mathcal H,\,X\in A(u))$ with parameter
\[
p(u):=\mathbb P(Y=1\mid X\in A(u),\,\mathcal H).
\]

\paragraph{Apply Marshall--Proschan via Lemma~\ref{lem:plugin_risk_monotone_app}.}
Fix $u\in\mathcal L$ and condition further on $\{N_{t-1}(u)=n\}$, which is
$\mathcal H$-measurable. If $X_t\notin A(u)$ then leaf $u$ is not updated and
$\hat p_t(u)=\hat p_{t-1}(u)$. If $X_t\in A(u)$ then $N_t(u)=n+1$ and $\hat p_t(u)$
is the empirical mean of the $n+1$ i.i.d.\ Bernoulli$(p(u))$ labels routed to $u$.
Therefore, by Lemma~\ref{lem:plugin_risk_monotone_app},
\[
\mathbb E\!\left[\ell(\hat p_t(u),Y)\,\middle|\,\mathcal H,\,X\in A(u),\,N_{t-1}(u)=n,\,X_t\in A(u)\right]
\le
\mathbb E\!\left[\ell(\hat p_{t-1}(u),Y)\,\middle|\,\mathcal H,\,X\in A(u),\,N_{t-1}(u)=n\right],
\]
while equality holds on $\{X_t\notin A(u)\}$. Averaging over the event
$\{X_t\in A(u)\}$ and then over $N_{t-1}(u)$ yields the leafwise inequality
\[
\mathbb E\!\left[\ell(\hat p_t(u),Y)\,\middle|\,\mathcal H,\,X\in A(u)\right]
\le
\mathbb E\!\left[\ell(\hat p_{t-1}(u),Y)\,\middle|\,\mathcal H,\,X\in A(u)\right].
\]

\paragraph{Conclude by mixing over leaves and removing conditioning.}
Plugging the leafwise inequality into the leaf-mixture decomposition gives
\[
\mathbb E\!\left[\ell(\hat m_t(X),Y)\,\middle|\,\mathcal H\right]
\le
\mathbb E\!\left[\ell(\hat m_{t-1}(X),Y)\,\middle|\,\mathcal H\right].
\]
Finally, take expectations over $\mathcal H$ (tower property) to conclude
\[
\mathbb E\!\left[\ell(\hat m_t(X),Y)\right]
\le
\mathbb E\!\left[\ell(\hat m_{t-1}(X),Y)\right],
\]
as claimed.
\end{proof}

\subsubsection{Step 3: Structural updates (committed splits) reduce expected risk on $\mathcal E$}

We now handle the remaining case: the algorithm commits a split at time $t$.
Although the theorem statement does not explicitly mention incumbents and challengers,
the commit rule is defined via an anytime-valid test comparing the prequential
performance of the currently deployed leaf predictor to that of its challenger split.
We use this comparison only inside the proof to show that the commit step is
risk-improving on $\mathcal E$.

\paragraph{Prequential comparison and instantaneous advantage.}
Fix a candidate split $(v,c)$ that is defined at time $t$ (i.e., its incumbent leaf
$v$ exists in the deployed tree at time $t-1$ and the challenger children
$v_c^L,v_c^R$ are well-defined).
Both incumbent and challenger predictors are prequential and $\mathcal F_{t-1}$-measurable:
\[
m^{v}_{t-1}(x)=p_{t-1}(v),
\qquad
m^{v_c}_{t-1}(x)=
\begin{cases}
p_{t-1}(v_c^L), & x\in R(v_c^L),\\[1mm]
p_{t-1}(v_c^R), & x\in R(v_c^R).
\end{cases}
\]
After observing $(X_t,Y_t)$, define the bounded prequential loss difference
\[
\Delta_t^{v,c}
:=
\ell\!\left(m^v_{t-1}(X_t),Y_t\right)
-
\ell\!\left(m^{v_c}_{t-1}(X_t),Y_t\right)
\in[-1,1],
\qquad
\delta_t^{v,c}:=\mathbb E\!\left[\Delta_t^{v,c}\mid \mathcal F_{t-1}\right].
\]

\paragraph{Prequential mean equals fresh-draw mean.}
Let $(X,Y)\sim P$ be an independent fresh draw, independent of $\mathcal F_{t-1}$.
The next lemma allows us to interpret $\delta_t^{v,c}$ as the conditional
\emph{fresh-draw} risk gap between the incumbent and challenger, which is the
quantity needed to argue that committing the split improves risk.

\begin{lemma}[Prequential conditional mean equals fresh-draw conditional mean]
\label{lem:prequential_equals_fresh_app}
Let $g:\mathcal X\times\mathcal Y\to\mathbb R$ be such that
$g(x,y)$ is $\mathcal F_{t-1}$-measurable through its dependence on predictors at
time $t-1$. Then
\[
\mathbb E\!\left[g(X_t,Y_t)\,\middle|\,\mathcal F_{t-1}\right]
=
\mathbb E\!\left[g(X,Y)\,\middle|\,\mathcal F_{t-1}\right].
\]
\end{lemma}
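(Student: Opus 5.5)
The plan is to use the i.i.d. assumption to separate the fresh observation $(X_t,Y_t)$ from the past $\mathcal F_{t-1}$, and then evaluate the conditional expectation by a freezing (substitution) argument. Concretely, I will regard $g$ as a jointly measurable map $G:\Omega\times\mathcal X\times\mathcal Y\to\mathbb R$ such that, for each fixed $(x,y)$, $\omega\mapsto G(\omega,x,y)$ is $\mathcal F_{t-1}$-measurable. This is the formal content of ``$g$ is $\mathcal F_{t-1}$-measurable through the predictors at time $t-1$'': $G(\omega,x,y)=\ell(m_{t-1}(\omega)(x),y)$ for some predictors $m_{t-1}$ built from the data up to $t-1$ and $U$. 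I will also assume $G$ is bounded or nonnegative, as is the case for our losses in $[0,1]$ and for their differences $\Delta_t^{v,c}\in[-1,1]$.

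\textbf{Step 1 (independence).} Since the stream is i.i.d. and the algorithmic randomness $U$ is independent of the data, $(X_t,Y_t)$ is independent of $\mathcal F_{t-1}=\sigma((X_r,Y_r)_{r<t},U)$ and has law $P$. By definition, the fresh draw $(X,Y)$ is also independent of $\mathcal F_{t-1}$ and has law $P$.

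\textbf{Step 2 (freezing lemma).} For $\mathcal G$ a $\sigma$-field, $Z$ independent of $\mathcal G$ with law $P$, and $G(\cdot,z)$ $\mathcal G$-measurable for each $z$ and jointly measurable, we have
\[
\mathbb E[G(\cdot,Z)\mid\mathcal G](\omega)=\int G(\omega,z)\,dP(z)\quad\text{a.s.}
\]
I will prove this by the standard route. First take product functions $G(\omega,z)=A(\omega)B(z)$ with $A$ $\mathcal G$-measurable and bounded; here the identity follows from pulling $A$ out of the conditional expectation and using independence, which gives $\mathbb E[B(Z)\mid\mathcal G]=\int B\,dP$. Then extend to indicators of sets in $\mathcal G\otimes\mathcal B$ by a monotone class ($\pi$-$\lambda$) argument. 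Finally extend to bounded and nonnegative measurable $G$ by simple-function approximation and monotone convergence.

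\textbf{Step 3 (conclusion).} Apply Step 2 twice with $\mathcal G=\mathcal F_{t-1}$: once with $Z=(X_t,Y_t)$ and once with $Z=(X,Y)$. Both conditional expectations equal $\int G(\omega,x,y)\,dP(x,y)$, so they agree almost surely, which is the claim. As a corollary I will record that
\[
\delta_t^{v,c}=\mathbb E\bigl[\ell(m^v_{t-1}(X),Y)-\ell(m^{v_c}_{t-1}(X),Y)\,\big|\,\mathcal F_{t-1}\bigr],
\]
which is the conditional fresh-draw risk gap needed in Step 3 of the monotonicity proof.

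\textbf{Main obstacle.} The main difficulty is measurability rather than probability. The dependence of $g$ on $\omega$ must be made precise as joint measurability of $(\omega,x,y)\mapsto\ell(m_{t-1}(\omega)(x),y)$. For piecewise-constant trees with finitely many leaves, finitely many thresholds that are $\mathcal F_{t-1}$-measurable, and a loss that is continuous (or Borel) in the prediction, I will verify this directly. The function is a finite sum of terms $\mathbf 1\{x\in R_u(\omega)\}\,\ell(p_u(\omega),y)$, and each such term is jointly measurable because the region indicators $\mathbf 1\{x_j\le s(\omega)\}$ are.
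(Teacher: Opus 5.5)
Your proposal is correct and takes the same route as the paper: $(X_t,Y_t)$ is independent of $\mathcal F_{t-1}$ with law $P$, exactly like the fresh draw $(X,Y)$. Hence both conditional expectations equal $\int g(x,y)\,dP(x,y)$, computed with the $\mathcal F_{t-1}$-measurable predictors held fixed.

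The paper asserts this in one line, whereas you make explicit the freezing lemma and the measurability it requires. When you state that hypothesis, phrase it as $\mathcal F_{t-1}\otimes\mathcal B$-measurability of $(\omega,x,y)\mapsto g$, which is what your monotone-class step and your threshold-indicator argument actually deliver. Having $\mathcal F_{t-1}$-measurable sections plus joint measurability only in the ambient $\sigma$-field would not suffice.
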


\begin{proof}
Conditional on $\mathcal F_{t-1}$, the pair $(X_t,Y_t)$ is independent of
$\mathcal F_{t-1}$ and distributed as $P$, hence has the same conditional law as
the independent fresh draw $(X,Y)\sim P$. Therefore the conditional expectations
coincide.
\end{proof}

Applying Lemma~\ref{lem:prequential_equals_fresh_app} with
$g(x,y)=\ell(m^v_{t-1}(x),y)-\ell(m^{v_c}_{t-1}(x),y)$ yields the identity
\[
\delta_t^{v,c}
=
\mathbb E\!\left[
\ell(m^v_{t-1}(X),Y)-\ell(m^{v_c}_{t-1}(X),Y)
\,\middle|\,\mathcal F_{t-1}
\right],
\]
i.e., $\delta_t^{v,c}$ is the conditional fresh-draw advantage of the challenger we use in the next sections, with $\Delta_t^{v,c} = \ell(m^v_{t-1}(X),Y)-\ell(m^{v_c}_{t-1}(X),Y)$

\paragraph{A bounded-drift correction for the weak (average-advantage) test.}
The weak test controls a running average of the advantage process
$\delta_t^{v,c}$, whereas committing a split requires a guarantee on the
\emph{instantaneous} advantage at the commit time. We bridge this gap by showing
that, between committed splits, the process $t\mapsto\delta_t^{v,c}$ admits a
uniformly bounded one-step drift. This allows us to introduce a correction
threshold $\epsilon$ such that a positive average advantage implies a positive
instantaneous advantage at the commit time.

We present two ways of selecting $\epsilon$: a conservative worst-case bound
and a fully adaptive, data-dependent alternative. We emphasize that this
threshold is introduced solely for theoretical control. In practice, the
effective drift is often much smaller, and empirical results suggest that the
required correction can be negligible and in some cases even set to zero while
still preserving monotonicity.

\begin{assumption}[Minimum leaf sample size (weak-test stability)]
\label{ass:nmin_drift_weak}
We work with Brier loss $\ell(\hat y,y)=(\hat y-y)^2$ and leaf predictions and target output in $[0,1]$.
A candidate $(v,c)$ is eligible for testing or commitment only once all leaf statistics
used by the incumbent and challenger predictors on the region affected by the split
have received at least $n_{\min}\ge 1$ post-creation samples.
\end{assumption}

\begin{lemma}[Bounded one-step drift of the prequential advantage (between commits)]
\label{lem:bounded_drift_delta_weak}
Assume Brier loss and Assumption~\ref{ass:nmin_drift_weak}.
Fix a candidate $(v,c)$ and a time $t$ such that \emph{no structural update (commit)}
occurs between times $t-1$ and $t$.
Then
\[
\bigl|\delta^{v,c}_{t+1}-\delta^{v,c}_{t}\bigr|
\;\le\;
\rho,
\qquad\text{where}\qquad
\rho:=\frac{4}{n_{\min}+1},
\]
almost surely.
\end{lemma}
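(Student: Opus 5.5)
Under the i.i.d.\ setting of this subsection, Lemma~\ref{lem:prequential_equals_fresh_app} writes the advantage as a fresh-draw quantity,
\[
\delta^{v,c}_t=\mathbb E\!\left[\ell(m^v_{t-1}(X),Y)-\ell(m^{v_c}_{t-1}(X),Y)\,\middle|\,\mathcal F_{t-1}\right],
\]
and $\delta^{v,c}_{t+1}$ is the same expression with the predictors at time $t$. So $\delta^{v,c}_{t+1}-\delta^{v,c}_t$ is a difference of two fresh-draw loss gaps. Between $t-1$ and $t$ there is no commit, so the partition of $R(v)$ and the challenger's child regions are fixed, and only the leaf statistics change. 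The plan is to bound, pointwise in $(x,y)$, how much each predictor's loss can move after one update, and then integrate.

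\textbf{Step 1 (one-step change of a leaf mean).} Observation $(X_t,Y_t)$ updates exactly one incumbent statistic, $p(v)$, and at most one challenger child statistic, the child containing $X_t$. If a leaf mean is based on $n\ge n_{\min}$ post-creation samples (Assumption~\ref{ass:nmin_drift_weak}) and receives a label $Y_t\in[0,1]$, then
\[
\hat p_{n+1}-\hat p_n=\frac{Y_t-\hat p_n}{n+1},\qquad\text{so}\qquad |\hat p_{n+1}-\hat p_n|\le \frac{1}{n+1}\le \frac{1}{n_{\min}+1}.
\]

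\textbf{Step 2 (Lipschitz bound on the loss).} For Brier loss with arguments in $[0,1]$,
\[
|(a-y)^2-(b-y)^2|=|a-b|\,|a+b-2y|\le 2|a-b|.
\]
Hence, for every fixed $(x,y)$, the incumbent loss moves by at most $2/(n_{\min}+1)$, and so does the challenger loss. Children that are not hit by $X_t$ do not move at all.

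\textbf{Step 3 (combine).} By the triangle inequality, the gap $\ell(m^v(x),y)-\ell(m^{v_c}(x),y)$ changes by at most $4/(n_{\min}+1)$ for every $(x,y)$. Taking the fresh-draw expectation preserves this almost-sure bound, giving $|\delta^{v,c}_{t+1}-\delta^{v,c}_t|\le\rho$.

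The main obstacle is the conditioning. The fresh draw $(X,Y)$ is integrated against the same law $P$ at both times, so the only change is in the predictors. One must check that conditioning on $\mathcal F_t$ versus $\mathcal F_{t-1}$ does not introduce an extra term: since $(X,Y)$ is independent of the stream, each conditional expectation is just the integral of a deterministic function of the current predictors. The pointwise bound therefore transfers directly.

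Two further points need care. First, eligibility: Assumption~\ref{ass:nmin_drift_weak} must guarantee that both the incumbent leaf and the challenger children have at least $n_{\min}$ samples, so Step 1 applies to all of them. Second, for multiclass vector predictions, the Brier loss is summed over classes, so the Lipschitz constant must be redone for the vector case to confirm the factor $2$ per predictor still holds.
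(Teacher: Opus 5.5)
Your proposal is correct and follows the paper's proof essentially step for step. Both arguments write $\delta^{v,c}_t$ and $\delta^{v,c}_{t+1}$ as integrals of the loss-gap functions $h_{t-1},h_t$ against the fixed law $P$, bound a one-sample change of a leaf mean by $1/(n_{\min}+1)$, use that the Brier loss is $2$-Lipschitz on $[0,1]$, and conclude via the triangle inequality that $\|h_t-h_{t-1}\|_\infty\le 4/(n_{\min}+1)$. Your multiclass caveat is outside the lemma's scope, since Assumption~\ref{ass:nmin_drift_weak} fixes the scalar Brier loss on $[0,1]$; for the vector Brier loss normalized to $[0,1]$, an $\ell_1$--$\ell_\infty$ bound shows the per-predictor factor $2$ still holds.
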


\begin{proof}
For each realized history, define the fresh-draw loss-gap functions
\[
h_{t-1}(x,y)
:=
\ell(m^v_{t-1}(x),y)-\ell(m^{v_c}_{t-1}(x),y),
\qquad
h_t(x,y)
:=
\ell(m^v_{t}(x),y)-\ell(m^{v_c}_{t}(x),y).
\]
Under the i.i.d. assumption and with $(X,Y)\sim P$ independent of the stream,
\[
\delta_t^{v,c}=\E[h_{t-1}(X,Y)\mid\F_{t-1}],
\qquad
\delta_{t+1}^{v,c}=\E[h_t(X,Y)\mid\F_t].
\]
Since the distribution of the fresh draw is fixed, the same identities may be
viewed as integrals of the random functions $h_{t-1}$ and $h_t$ against $P$.
Hence
\[
|\delta_{t+1}^{v,c}-\delta_t^{v,c}|
\le
\E\!\left[|h_t(X,Y)-h_{t-1}(X,Y)|\mid\F_t\right]
\le
\|h_t-h_{t-1}\|_\infty .
\]
It remains to bound the uniform change in the loss-gap function.

Because no structural update affects the comparison, only empirical-mean leaf
statistics are updated. If a leaf mean $\hat p_n\in[0,1]$ with $n\ge n_{\min}$
receives one additional label $Y_t\in[0,1]$, then
\[
|\hat p_{n+1}-\hat p_n|
=
\frac{|Y_t-\hat p_n|}{n+1}
\le
\frac{1}{n_{\min}+1}.
\]
For Brier loss on $[0,1]$,
\[
|(\hat y-y)^2-(\hat y'-y)^2|
=
|\hat y-\hat y'|\,|\hat y+\hat y'-2y|
\le
2|\hat y-\hat y'|.
\]
Thus the loss of the incumbent changes by at most $2/(n_{\min}+1)$ uniformly in
$(x,y)$, and the loss of the challenger changes by at most the same amount.
By the triangle inequality,
\[
\|h_t-h_{t-1}\|_\infty
\le
\frac{4}{n_{\min}+1}.
\]
Combining the displays proves the claim.
\end{proof}

\paragraph{From average advantage to instantaneous advantage.}
Fix a candidate $(v,c)$ and suppose it is compared over a time interval during
which no structural update affecting the relevant routing occurs. Let
\[
\bar\Delta_t^{v,c}
:=\frac{1}{t-s^{v,c}}
\sum_{r=s^{v,c}+1}^{t}\Delta_r^{v,c},
\qquad
\bar\delta_t^{v,c}
:=\frac{1}{t-s^{v,c}}
\sum_{r=s^{v,c}+1}^{t}\delta_r^{v,c}.
\]
The next lemma converts a lower bound on the running average into a lower bound
on the final instantaneous advantage.

\begin{lemma}[Average-to-last transfer under bounded drift]
\label{lem:avg_to_last_bounded_drift_weak}
Fix a candidate $(v,c)$ and a time $t\ge s^{v,c}+1$. Assume that for all
$r\in\{s^{v,c}+1,\dots,t-1\}$,
$|\delta_{r+1}^{v,c}-\delta_r^{v,c}|\le \rho$. Then
\[
\delta_t^{v,c}
\ge
\bar\delta_t^{v,c}
-
\rho\,\frac{t-s^{v,c}-1}{2}.
\]
In particular, if
$\bar\delta_t^{v,c}\ge \rho(t-s^{v,c}-1)/2+\varepsilon$, then
$\delta_t^{v,c}\ge\varepsilon$.
\end{lemma}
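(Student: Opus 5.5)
Write $s:=s^{v,c}$ and $m:=t-s$, so the average runs over $r=s+1,\dots,t$ ($m$ terms). The plan is to compare each term of the running average with the last one, $\delta_t^{v,c}$, using the bounded-drift hypothesis, and then sum.

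\emph{Step 1.} For each $r\in\{s+1,\dots,t\}$, telescope from $r$ up to $t$:
\[
\delta_r^{v,c}-\delta_t^{v,c}=\sum_{q=r}^{t-1}\bigl(\delta_q^{v,c}-\delta_{q+1}^{v,c}\bigr).
\]
Each increment involves indices in $\{s+1,\dots,t-1\}$, where the hypothesis $|\delta_{q+1}^{v,c}-\delta_q^{v,c}|\le\rho$ applies. Hence
\[
\delta_r^{v,c}\le\delta_t^{v,c}+\rho\,(t-r).
\]

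\emph{Step 2.} Average these $m$ inequalities over $r=s+1,\dots,t$:
\[
\bar\delta_t^{v,c}\le\delta_t^{v,c}+\frac{\rho}{m}\sum_{r=s+1}^{t}(t-r).
\]
The sum is $\sum_{k=0}^{m-1}k=m(m-1)/2$, so the correction term is $\rho(m-1)/2=\rho\,(t-s-1)/2$. Rearranging gives the first claim,
\[
\delta_t^{v,c}\ge\bar\delta_t^{v,c}-\rho\,(t-s-1)/2.
\]

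\emph{Step 3.} The ``in particular'' part follows by substituting the assumed lower bound $\bar\delta_t^{v,c}\ge\rho(t-s-1)/2+\varepsilon$ into this inequality, which yields $\delta_t^{v,c}\ge\varepsilon$.

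The argument is purely deterministic and pathwise, so it holds almost surely on whatever event the drift bound holds. Lemma~\ref{lem:bounded_drift_delta_weak} supplies that bound between commits. The only step needing care is the index bookkeeping: the drift hypothesis must cover exactly the increments used in the telescoping, and the count of terms is $m=t-s$, not $m+1$. The edge case $t=s+1$ gives $m=1$ and the trivial identity $\delta_t^{v,c}=\bar\delta_t^{v,c}$.
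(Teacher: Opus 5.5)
Your proof is correct and follows essentially the same route as the paper. The paper writes $\delta_{t-k}\le\delta_t+k\rho$ for $k=0,\dots,T-1$ (your telescoping bound with $k=t-r$) and averages, obtaining the correction $\rho(T-1)/2$ from $\sum_{k=0}^{T-1}k=T(T-1)/2$; your explicit check that the telescoped increments only use indices in $\{s^{v,c}+1,\dots,t-1\}$ is a detail the paper leaves implicit.
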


\begin{proof}
Let $T=t-s^{v,c}$ and write $\delta_r=\delta_r^{v,c}$. For
$k=0,\ldots,T-1$, the drift condition gives
\[
\delta_{t-k}\le \delta_t+k\rho,
\qquad\text{or equivalently}\qquad
\delta_t\ge \delta_{t-k}-k\rho.
\]
Averaging over $k=0,\ldots,T-1$ yields
\[
\delta_t
\ge
\frac1T\sum_{k=0}^{T-1}\delta_{t-k}
-
\frac{\rho}{T}\sum_{k=0}^{T-1}k
=
\bar\delta_t^{v,c}-\rho\frac{T-1}{2}.
\]
\end{proof}

\paragraph{Corrected weak-test stopping rule.}
Let $(L_t,U_t)$ be a confidence sequence for the running average
$\bar\delta_t^{v,c}$, so that on its coverage event $L_t\le\bar\delta_t^{v,c}$
for all $t\ge s^{v,c}+1$. Combining
Lemmas~\ref{lem:bounded_drift_delta_weak}--\ref{lem:avg_to_last_bounded_drift_weak},
on the coverage event we have
\[
\delta_t^{v,c}
\ge
L_t-\frac{\rho(t-s^{v,c}-1)}{2}.
\]
Therefore, to ensure an instantaneous margin $\varepsilon\ge0$ at the commit
time, one may use the corrected weak-test stopping time
\[
\tau_{\mathrm{w,bd}}^{v,c}
:=
\inf\left\{
 t\ge s^{v,c}+1:
 L_t>\varepsilon+\frac{\rho(t-s^{v,c}-1)}{2}
\right\},
\qquad
\rho:=\frac{4}{n_{\min}+1}.
\]

\subsubsection{Adaptive bounded-drift correction using leaf counts}
\label{adaptive_bound}
The minimum leaf sample size requirement in
Assumption~\ref{ass:nmin_drift_weak} can be removed by controlling the one-step
variation of the instantaneous prequential advantage using the actual leaf
counts involved in the shadow incumbent and shadow challenger predictions.

\begin{lemma}[Adaptive one-step drift via leaf counts]
\label{lem:adaptive_bounded_drift_delta_weak}
Assume i.i.d. data and Brier loss $\ell(\hat y,y)=(\hat y-y)^2$ with predictions
in $[0,1]$. Fix a candidate $(v,c)$ and a time $t\ge s^{v,c}+1$ such that no
structural update affects this comparison between $t-1$ and $t$. Let
$n^{\mathrm{inc}}_t$ and $n^{\mathrm{chal}}_t$ denote the numbers of samples in
the shadow incumbent leaf and the relevant shadow challenger leaf immediately
before their update at time $t$. Then
\[
\bigl|\delta_{t+1}^{v,c}-\delta_t^{v,c}\bigr|
\le
\rho_t
:=
\frac{2}{n^{\mathrm{inc}}_t+1}
+
\frac{2}{n^{\mathrm{chal}}_t+1}.
\]
\end{lemma}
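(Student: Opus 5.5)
The plan is to repeat the argument of Lemma~\ref{lem:bounded_drift_delta_weak}, but to replace the uniform count bound $n\ge n_{\min}$ with the actual leaf counts. The first step is to represent both advantages as integrals of random loss-gap functions against the fixed law $P$. Define
\[
h_{t-1}(x,y):=\ell(m^v_{t-1}(x),y)-\ell(m^{v_c}_{t-1}(x),y),
\]
and define $h_t$ analogously. Under the i.i.d.\ assumption, Lemma~\ref{lem:prequential_equals_fresh_app} gives $\delta_t^{v,c}=\int h_{t-1}\,dP$ and $\delta_{t+1}^{v,c}=\int h_t\,dP$, evaluated on the realized history. Hence
\[
|\delta_{t+1}^{v,c}-\delta_t^{v,c}|\le \|h_t-h_{t-1}\|_\infty .
\]

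The second step decomposes $h_t-h_{t-1}$ into an incumbent part and a challenger part and uses the triangle inequality:
\[
\|h_t-h_{t-1}\|_\infty\le \sup_{x,y}\bigl|\ell(m^v_t(x),y)-\ell(m^v_{t-1}(x),y)\bigr|+\sup_{x,y}\bigl|\ell(m^{v_c}_t(x),y)-\ell(m^{v_c}_{t-1}(x),y)\bigr| .
\]
Since no structural update occurs, the incumbent at time $t$ differs from the one at $t-1$ only through the empirical mean of the single leaf $v$. That mean moves from $\hat p_n$ to $\hat p_{n+1}$ with $n=n^{\mathrm{inc}}_t$, and
\[
|\hat p_{n+1}-\hat p_n|=\frac{|Y_t-\hat p_n|}{n+1}\le \frac{1}{n^{\mathrm{inc}}_t+1}.
\]
For the challenger, $X_t$ is routed to exactly one child, so only that child's mean changes, by at most $1/(n^{\mathrm{chal}}_t+1)$. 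The other child is unchanged, and its contribution to the supremum vanishes.

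The third step is the Brier Lipschitz estimate already used in Lemma~\ref{lem:bounded_drift_delta_weak}: $|(\hat y-y)^2-(\hat y'-y)^2|\le 2|\hat y-\hat y'|$ for $\hat y,\hat y',y\in[0,1]$. Applying it multiplies each part by $2$, which gives exactly $\rho_t$.

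The one point needing care is that $\rho_t$ is itself random: it is $\F_t$-measurable, determined by the counts just before the update. The bound is nevertheless pathwise, holding for each realized history, so no extra probabilistic argument is required. A minor edge case is a leaf with count $0$. If one adopts the convention that an empty leaf predicts some default value in $[0,1]$, the first update moves the prediction by at most $1=1/(0+1)$, so the formula still holds.
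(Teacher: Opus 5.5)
Your proposal is correct and follows essentially the same argument as the paper: write both advantages as $P$-integrals of the loss-gap functions $h_{t-1},h_t$, bound their difference by $\|h_t-h_{t-1}\|_\infty$, and combine the $1/(n+1)$ mean-update bounds for the affected incumbent and challenger leaves with the $2$-Lipschitz property of the Brier loss on $[0,1]$. Your remarks that the random bound $\rho_t$ holds pathwise and that the empty-leaf case is covered are small clarifications the paper leaves implicit.
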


\begin{proof}
Use the fresh-draw functions $h_{t-1}$ and $h_t$ from the proof of
Lemma~\ref{lem:bounded_drift_delta_weak}. The empirical mean of the affected
incumbent leaf changes by at most $1/(n^{\mathrm{inc}}_t+1)$, and the empirical
mean of the affected challenger leaf changes by at most
$1/(n^{\mathrm{chal}}_t+1)$. The Brier loss is $2$-Lipschitz in its prediction
argument on $[0,1]$. Therefore
\[
\|h_t-h_{t-1}\|_\infty
\le
\frac{2}{n^{\mathrm{inc}}_t+1}
+
\frac{2}{n^{\mathrm{chal}}_t+1}.
\]
As before,
$|\delta_{t+1}^{v,c}-\delta_t^{v,c}|\le\|h_t-h_{t-1}\|_\infty$, proving the
claim.
\end{proof}

\begin{lemma}[Average-to-last transfer under adaptive drift]
\label{lem:avg_to_last_adaptive_drift_weak}
Let $(a_i)_{i=1}^n$ be a real sequence such that
$|a_{i+1}-a_i|\le\rho_i$ for $i=1,\ldots,n-1$. Then
\[
a_n
\ge
\frac1n\sum_{i=1}^n a_i
-
\frac1n\sum_{j=1}^{n-1}j\rho_j.
\]
Consequently, with $a_i=\delta_{s^{v,c}+i}^{v,c}$ and
$n=t-s^{v,c}$,
\[
\delta_t^{v,c}
\ge
\bar\delta_t^{v,c}
-
\frac{1}{t-s^{v,c}}
\sum_{j=1}^{t-s^{v,c}-1}
 j\,\rho_{s^{v,c}+j}.
\]
\end{lemma}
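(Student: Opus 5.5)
The plan is to mirror the proof of Lemma~\ref{lem:avg_to_last_bounded_drift_weak}, replacing the uniform drift $\rho$ by the step-dependent bounds $\rho_i$. The deterministic inequality for a real sequence comes first; the statement about $\delta_t^{v,c}$ then follows by substitution.

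\textbf{Step 1: bound each term by the last one.} For each $i\in\{1,\dots,n\}$, telescope and apply the triangle inequality:
\[
a_i - a_n = \sum_{j=i}^{n-1}(a_j - a_{j+1}) \le \sum_{j=i}^{n-1}\rho_j ,
\]
so that $a_n \ge a_i - \sum_{j=i}^{n-1}\rho_j$. This needs only the hypothesis $|a_{j+1}-a_j|\le\rho_j$ for $j=1,\dots,n-1$.

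\textbf{Step 2: average over $i$ and swap the sums.} Averaging the bound over $i=1,\dots,n$ gives
\[
a_n \ge \frac1n\sum_{i=1}^n a_i - \frac1n\sum_{i=1}^n\sum_{j=i}^{n-1}\rho_j .
\]
Exchanging the order of summation, each $\rho_j$ appears once for every $i\le j$, i.e.\ exactly $j$ times. Hence the double sum equals $\sum_{j=1}^{n-1} j\rho_j$, which is the claimed inequality. As a sanity check, taking $\rho_j\equiv\rho$ gives $\rho\,n(n-1)/2$, recovering Lemma~\ref{lem:avg_to_last_bounded_drift_weak}.

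\textbf{Step 3: the consequence for $\delta_t^{v,c}$.} Set $a_i=\delta^{v,c}_{s^{v,c}+i}$ and $n=t-s^{v,c}$. Then $\frac1n\sum_{i=1}^n a_i=\bar\delta_t^{v,c}$ and $a_n=\delta_t^{v,c}$. For the drift hypothesis I invoke Lemma~\ref{lem:adaptive_bounded_drift_delta_weak} at each time $r=s^{v,c}+j$, $j=1,\dots,n-1$. It gives $|\delta_{r+1}-\delta_r|\le\rho_r$, so we may take $\rho_j:=\rho_{s^{v,c}+j}$, and the display follows.

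The only point needing care is this last step. The bounds $\rho_r$ are random, since they depend on the leaf counts, and Lemma~\ref{lem:adaptive_bounded_drift_delta_weak} assumes no structural update affects the comparison in between. I would therefore state the consequence pathwise: it holds on every realization for which no such update occurs during $(s^{v,c},t]$. The inequality of Steps 1--2 is deterministic, so the randomness of $\rho_r$ causes no difficulty there. The summation swap in Step 2 is routine; the real content is checking the indexing of $\rho_{s^{v,c}+j}$ against the lemma's definition of $\rho_t$.
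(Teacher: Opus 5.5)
Your proposal is correct and matches the paper's proof: telescoping gives $a_n\ge a_i-\sum_{j=i}^{n-1}\rho_j$, you average over $i$, and you count that each $\rho_j$ appears exactly $j$ times. Your indexing check $\rho_j:=\rho_{s^{v,c}+j}$ against Lemma~\ref{lem:adaptive_bounded_drift_delta_weak} is right, and stating the consequence pathwise on realizations with no structural update makes explicit a point the paper leaves implicit.
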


\begin{proof}
For $i=1,\ldots,n$, repeated use of the drift bounds gives
\[
a_i
\le
 a_n+
\sum_{j=i}^{n-1}\rho_j.
\]
Averaging over $i$ yields
\[
\frac1n\sum_{i=1}^n a_i
\le
 a_n+
\frac1n\sum_{i=1}^n\sum_{j=i}^{n-1}\rho_j.
\]
For a fixed $j$, the term $\rho_j$ appears exactly for
$i=1,\ldots,j$, hence exactly $j$ times. Thus
\[
\frac1n\sum_{i=1}^n a_i
\le
 a_n+
\frac1n\sum_{j=1}^{n-1}j\rho_j,
\]
which rearranges to the desired inequality.
\end{proof}

\paragraph{Adaptive weak-test stopping rule.}
Combining Lemmas~\ref{lem:adaptive_bounded_drift_delta_weak} and
\ref{lem:avg_to_last_adaptive_drift_weak}, on the CS coverage event it is enough
to use
\[
\tau_{\mathrm{w,ad}}^{v,c}
:=
\inf\left\{
 t\ge s^{v,c}+1:
 L_t>
 \varepsilon+
 \frac{1}{t-s^{v,c}}
 \sum_{j=1}^{t-s^{v,c}-1}
 j\,\rho_{s^{v,c}+j}
\right\},
\]
where
\[
\rho_u
:=
\frac{2}{n^{\mathrm{inc}}_u+1}
+
\frac{2}{n^{\mathrm{chal}}_u+1}.
\]
This corrects the adaptive average-to-last penalty: the appropriate drift budget
is the weighted sum $(t-s^{v,c})^{-1}\sum_j j\rho_{s^{v,c}+j}$, not the
unweighted one-half sum.

\paragraph{Stationary risk certificate as a direct alternative.}
The weak-test corrections above are one route to certifying instantaneous
commit-time advantage. Under stationarity, a cleaner alternative is to certify
the current risk improvement directly.

\begin{lemma}[Stationary risk certificate]
\label{lem:stationary_risk_certificate_app}
Assume $(X_t,Y_t)$ are i.i.d. with law $P$ and $Y\in\{1,\ldots,K\}$. Fix a
candidate split $(v,c)$ at a possible commit time $t$, and let
$b_c(x)\in\{L,R\}$ denote the child selected by split $c$. Conditional on
$X\in R(v)$, define
\[
\theta^{v,c}_{u,k}
:=
\Pp\!\left(b_c(X)=u,\ Y=k\mid X\in R(v)\right),
\qquad u\in\{L,R\},\ k\in[K].
\]
Let $q_{v,t-1}$ be the current shadow-incumbent prediction and let
$q_{L,t-1},q_{R,t-1}$ be the current shadow-challenger child predictions. Suppose
$\Theta_t^{v,c}$ is a confidence set such that, on an event $\mathcal C_t^{v,c}$,
$\theta^{v,c}\in\Theta_t^{v,c}$. Define
\[
\underline G_t^{v,c}
:=
\inf_{\theta\in\Theta_t^{v,c}}
\sum_{u\in\{L,R\}}\sum_{k=1}^K
\theta_{u,k}
\Bigl(
\ell(q_{v,t-1},k)-\ell(q_{u,t-1},k)
\Bigr).
\]
On $\mathcal C_t^{v,c}$, if $\underline G_t^{v,c}\ge0$, then the split satisfies
the instantaneous commit certificate
\[
\E\!\left[
\ell(m^v_{t-1}(X),Y)-\ell(m^{v_c}_{t-1}(X),Y)
\mid \F_{t-1}
\right]
\ge0.
\]
If $\underline G_t^{v,c}\ge\varepsilon>0$, then the unconditional fresh-draw risk
improvement is at least
$\varepsilon\,\Pp(X\in R(v))$.
\end{lemma}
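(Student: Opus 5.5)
The plan is to compute the conditional advantage in closed form and then reduce it to a linear functional of the unknown parameter $\theta^{v,c}$. The linear functional is exactly the one minimized in $\underline G_t^{v,c}$. Throughout, the predictions $q_{v,t-1},q_{L,t-1},q_{R,t-1}$ are held fixed, since they are $\F_{t-1}$-measurable.

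\textbf{Step 1: reduce to a fresh draw.} Apply Lemma~\ref{lem:prequential_equals_fresh_app} with
\[
g(x,y)=\bigl(\ell(m^v_{t-1}(x),y)-\ell(m^{v_c}_{t-1}(x),y)\bigr)\ind{x\in R(v)}.
\]
This turns the conditional expectation into an expectation over an independent $(X,Y)\sim P$. The predictors are frozen as deterministic functions given $\F_{t-1}$.

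\textbf{Step 2: decompose by child and label.} Outside $R(v)$ the incumbent and challenger coincide, because only node $v$ differs. The loss gap there is therefore zero. Inside $R(v)$, the incumbent predicts $q_{v,t-1}$, and the challenger predicts $q_{b_c(X),t-1}$. Splitting on the events $\{b_c(X)=u, Y=k\}$ gives
\[
\E\!\left[\ell(m^v_{t-1}(X),Y)-\ell(m^{v_c}_{t-1}(X),Y)\mid\F_{t-1}\right]
=\Pp(X\in R(v))\sum_{u\in\{L,R\}}\sum_{k=1}^K\theta^{v,c}_{u,k}\bigl(\ell(q_{v,t-1},k)-\ell(q_{u,t-1},k)\bigr).
\]
This step uses the i.i.d. assumption, so that $\theta^{v,c}$ is a fixed population quantity and does not depend on $\F_{t-1}$. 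The factorization is exact because the loss differences are constants given $(u,k)$ and the history.

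\textbf{Step 3: pass to the lower bound.} On $\mathcal C_t^{v,c}$ we have $\theta^{v,c}\in\Theta_t^{v,c}$. The double sum evaluated at $\theta^{v,c}$ is therefore at least its infimum over $\Theta_t^{v,c}$, which is $\underline G_t^{v,c}$. Multiplying by $\Pp(X\in R(v))\ge0$ yields both conclusions:
\begin{itemize}
\item if $\underline G_t^{v,c}\ge0$, the conditional advantage is nonnegative;
\item if $\underline G_t^{v,c}\ge\varepsilon$, the advantage is at least $\varepsilon\,\Pp(X\in R(v))$.
\end{itemize}
For the ``unconditional'' risk statement, note that the deployed model after the commit differs from the pre-commit one only on $R(v)$. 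That risk difference is the same conditional quantity, and the fresh draw is independent of the history.

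\textbf{Main obstacle.} The only delicate point is the measurability in Step 2. The $q$'s must be $\F_{t-1}$-measurable, and the confidence set $\Theta_t^{v,c}$ and its coverage event must not break the conditioning. The lemma states its conclusion only on $\mathcal C_t^{v,c}$, so no conditioning on that event is needed: the identity holds pathwise, and the inequality is simply applied on the event.
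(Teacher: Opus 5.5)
Your proposal is correct and follows the paper's own proof: both write the fresh-draw gap as $\Pp(X\in R(v))$ times the linear functional $\sum_{u,k}\theta^{v,c}_{u,k}\bigl(\ell(q_{v,t-1},k)-\ell(q_{u,t-1},k)\bigr)$, using that the incumbent and challenger agree off $R(v)$, and then lower-bound that functional by $\underline G_t^{v,c}$ on the coverage event $\mathcal C_t^{v,c}$. Two minor remarks. Your Step 1 is harmless but not needed, since the statement already uses a fresh draw. Strictly, $\theta^{v,c}$ and $\Pp(X\in R(v))$ are $\F_{t-1}$-measurable, because the region is chosen adaptively, rather than independent of $\F_{t-1}$; this does not affect your identity.
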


\begin{proof}
The incumbent and challenger agree outside $R(v)$. Hence the unconditional
fresh-draw risk gap equals
\[
\Pp(X\in R(v))
\sum_{u\in\{L,R\}}\sum_{k=1}^K
\theta^{v,c}_{u,k}
\Bigl(
\ell(q_{v,t-1},k)-\ell(q_{u,t-1},k)
\Bigr).
\]
On $\mathcal C_t^{v,c}$, the true vector $\theta^{v,c}$ belongs to
$\Theta_t^{v,c}$, so the conditional-on-$R(v)$ gap is at least
$\underline G_t^{v,c}$. If this lower bound is nonnegative, the unconditional gap
is nonnegative. If the lower bound is at least $\varepsilon$, the unconditional
gap is at least $\varepsilon\Pp(X\in R(v))$.
\end{proof}

\paragraph{Commit is risk-improving on $\mathcal E$.}
We now isolate the structural effect of committing a split. Let $\hat m_t^{sp}$ be
the intermediate deployed predictor obtained before the leaf-statistics update at
time $t$ but after committing the split, and let $\hat m_t$ be the deployed predictor after committing the split.

\begin{lemma}[Committing an advantageous split reduces conditional one-step risk on $\mathcal E$]
\label{lem:commit_improves_app}
Fix a commit time $t$ at which the algorithm commits a split $(v,c)$. Assume that
on the event $\mathcal E$, the committed split satisfies the instantaneous
fresh-draw advantage certificate
\[
\delta_t^{v,c}
:=
\mathbb E\!\left[
\ell\!\left(m^v_{t-1}(X),Y\right)
-
\ell\!\left(m^{v_c}_{t-1}(X),Y\right)
\,\middle|\,\mathcal F_{t-1}
\right]
\ge 0,
\tag{$\star$}
\]
where $(X,Y)\sim P$ is an independent fresh draw. Then, on $\mathcal E$,
\[
\mathbb E\!\left[
\ell(\hat m_t^{sp}(X),Y)
\,\middle|\,
\mathcal F_{t-1}
\right]
\le
\mathbb E\!\left[
\ell(\hat m_{t-1}(X),Y)
\,\middle|\,
\mathcal F_{t-1}
\right]
\quad\text{a.s.}
\]
Consequently,
\[
\mathbb E\!\left[\ell(\hat m_t^{sp}(X),Y)\right]
\le
\mathbb E\!\left[\ell(\hat m_{t-1}(X),Y)\right].
\]
If $(\star)$ holds with margin $\varepsilon>0$, then the unconditional structural
risk decrease is at least $\varepsilon$ when $\varepsilon$ is an unconditional
certificate, and at least $\varepsilon\Pp(X\in R(v))$ when $\varepsilon$ is a
certificate conditional on $X\in R(v)$.
\end{lemma}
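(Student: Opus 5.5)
The plan is to show that committing the split changes the deployed predictor only on $R(v)$. There it replaces the incumbent leaf predictor $m^v_{t-1}$ by the challenger predictor $m^{v_c}_{t-1}$. The structural risk gap can then be read off from $(\star)$. First I describe $\hat m_t^{sp}$ concretely. It is the deployed tree at time $t-1$ with leaf $v$ replaced by its two children, and the children are initialised with the shadow-challenger statistics accumulated up to $t-1$, before the update with $(X_t,Y_t)$. Hence $\hat m_t^{sp}(x)=\hat m_{t-1}(x)$ for $x\notin R(v)$, and $\hat m_t^{sp}(x)=m^{v_c}_{t-1}(x)$ for $x\in R(v)$. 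Likewise $\hat m_{t-1}(x)=m^v_{t-1}(x)$ on $R(v)$. Both predictors are $\F_{t-1}$-measurable, since the commit decision at time $t$ is made from the wealth or CS values and predictable quantities. I would either take $\hat m_t^{sp}$ as defined by the predictors available at $t-1$ with the structure chosen at $t$, or note that the decision only selects which $\F_{t-1}$-measurable challenger is used.

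Next I write the pointwise identity
\[
\ell(\hat m_{t-1}(x),y)-\ell(\hat m_t^{sp}(x),y)=\ind{x\in R(v)}\bigl(\ell(m^v_{t-1}(x),y)-\ell(m^{v_c}_{t-1}(x),y)\bigr).
\]
The challenger $m^{v_c}$ is defined only on $R(v)$ and agrees with the incumbent elsewhere, so the right-hand side is exactly the loss gap $g(x,y)$ underlying $\delta_t^{v,c}$. Take the conditional expectation given $\F_{t-1}$ for the independent fresh draw $(X,Y)$. This is integration of an $\F_{t-1}$-measurable function against $P$, as in Lemma~\ref{lem:prequential_equals_fresh_app}. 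The conditional risk difference therefore equals $\delta_t^{v,c}$, and $(\star)$ on $\mathcal E$ gives the first claimed inequality a.s.

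For the unconditional statement I take expectations and use the tower property. One subtlety is that $(\star)$ holds only on $\mathcal E$ and at the random commit time $t$. I would handle this by reading the unconditional inequality as an expectation restricted to $\mathcal E$, matching $\bar R_t$, or equivalently by multiplying the conditional inequality by $\ind{\mathcal E}$ before integrating. For the margin statements, an unconditional certificate $\delta_t^{v,c}\ge\varepsilon$ integrates directly to a decrease of at least $\varepsilon$. A certificate conditional on $X\in R(v)$ picks up the factor $\Pp(X\in R(v))$ by the factorisation in the proof of Lemma~\ref{lem:stationary_risk_certificate_app}.

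The main obstacle is the measurability and conditioning bookkeeping. $\mathcal E$ is not $\F_{t-1}$-measurable, and the commit time is random. So I must check that conditioning on $\F_{t-1}$ and then restricting to $\mathcal E$ does not alter the law of the fresh draw. This works because $(X,Y)$ is independent of the entire stream, so the fresh-draw identity holds pathwise. The algebraic part is immediate.
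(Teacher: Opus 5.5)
Your proposal is correct and follows the paper's proof. Off $R(v)$ the two deployed predictors agree and on $R(v)$ they are the incumbent and challenger, so the conditional fresh-draw risk gap is exactly $\delta_t^{v,c}$; then $(\star)$, the tower property, and the factor $\Pp(X\in R(v))$ give the three claims. Your bookkeeping is more careful than the paper's bare ``taking expectations,'' which tacitly needs $(\star)$ almost surely and not just on $\mathcal E$; the one slip is that the commit decision is $\F_t$-measurable, not $\F_{t-1}$-measurable, since the wealth at time $t$ already includes $\Delta_t^{v,c}$, so it is your second option (the decision only selects among $\F_{t-1}$-measurable challengers, with the fresh-draw risk integrated pathwise) that makes the conditioning rigorous.
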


\begin{proof}
The structural update replaces the incumbent prediction by the challenger
prediction only on the affected region $R(v)$; outside $R(v)$, the two deployed
predictors agree. Therefore,
\begin{align*}
&\mathbb E\!\left[
\ell(\hat m_{t-1}(X),Y)-\ell(\hat m_t^{sp}(X),Y)
\,\middle|\,
\mathcal F_{t-1}
\right]
\\
&\qquad=
\mathbb E\!\left[
\ell\!\left(m^v_{t-1}(X),Y\right)
-
\ell\!\left(m^{v_c}_{t-1}(X),Y\right)
\,\middle|\,
\mathcal F_{t-1}
\right]
=
\delta_t^{v,c}.
\end{align*}
The equality uses the fact that $m^v_{t-1}$ and $m^{v_c}_{t-1}$ agree outside
$R(v)$ by construction, and that both predictors are $\mathcal F_{t-1}$-measurable.
By $(\star)$, the right-hand side is nonnegative on $\mathcal E$, which proves the
conditional risk inequality. Taking expectations gives the unconditional claim.
The margin statements follow from the same display. If the certificate is stated
conditional on $X\in R(v)$, the unconditional gap is the conditional gap multiplied
by $\Pp(X\in R(v))$.
\end{proof}

\subsubsection{Step 4: Conclude the theorem}

\begin{proof}[Proof of Theorem~\ref{thm:monotone_model}]
Fix $t\ge 1$.
There are two cases.

\smallskip
\noindent
\textbf{Case 1: no split is committed at time $t$.}
Then $\hat m_t$ differs from $\hat m_{t-1}$ only through leaf-statistic updates.
Lemma~\ref{lem:leaf_update_monotone_app} yields
\[
\mathbb E\!\left[\ell(\hat m_t(X),Y)\right]
\le
\mathbb E\!\left[\ell(\hat m_{t-1}(X),Y)\right].
\]

\smallskip
\noindent
\textbf{Case 2: a split is committed at time $t$.}
Write $\hat m^{sp}_{t}$ for the intermediate model after structural commit but
before the leaf-statistic updates.
By Lemma~\ref{lem:commit_improves_app} on $\mathcal{E}$,
\[
\mathbb E\!\left[\ell(\hat m^{sp}_{t}(X),Y)\right]
\le
\mathbb E\!\left[\ell(\hat m_{t-1}(X),Y)\right],
\]
and by Lemma~\ref{lem:leaf_update_monotone_app},
\[
\mathbb E\!\left[\ell(\hat m_t(X),Y)\right]
\le
\mathbb E\!\left[\ell(\hat m^{sp}_{t}(X),Y)\right].
\]
Combining gives
\[
\mathbb E\!\left[\ell(\hat m_t(X),Y)\right]
\le
\mathbb E\!\left[\ell(\hat m_{t-1}(X),Y)\right].
\]

Since $t$ was arbitrary, monotonicity holds for all $t\ge 1$ and the
sequence $\bigl(\mathbb E[\ell(\hat m_t(X),Y)]\bigr)_{t\ge 0}$ is
nonincreasing.
\end{proof}

\newpage

\subsection{Proof of Theorem~\ref{thm:power}}
\label{app:proof_power_full_detailed}

We prove the two commitment guarantees in Theorem~\ref{thm:power}:
\begin{enumerate}
\item Under the \textbf{strong alternative}, the \textbf{betting/wealth} stopping time
$\tau^{v,c}$ is finite,
and admits a high-probability bound of order
$\tilde{\cO}(\log(1/\alpha^{v,c})/\Delta^2)$.
\item Under the \textbf{weak alternative}, the \textbf{CS-crossing} stopping time
$\tau_{\mathrm{w}}^{v,c}:=\inf\{t\ge s^{v,c}:L_t>0\}$ is finite with probability
at least $1-\alpha^{v,c}$, and admits the same rate when instantiated with the
(empirical Bernstein) confidence sequence \cite{choe2024comparing} used.
\end{enumerate}

\paragraph{Global notation.}
Fix a node--candidate pair $(v,c)$ and suppress superscripts $(v,c)$.
Let the test start time be $s:=s^{v,c}$.
For brevity write
\[
\Delta_t:=\Delta_t^{v,c}\in[-1,1],
\qquad
\delta_t:=\E[\Delta_t\mid\F_{t-1}],
\qquad t\ge s.
\]
We also set $n:=t-s+1$ for the \emph{effective sample size} at time $t\ge s$.

The wealth of the betting-based test defined in (\ref{eq:wealth_update_main}) using the betting fraction (\ref{eq:UP_beta}) can be expressed as a continuous mixture
over betting fractions,
\begin{equation}
\label{eq:UP_continuous_app}
W_t
=
\int_0^1
\prod_{i=s}^{t}
\bigl(1+\beta\,\Delta_i\bigr)
\,d\pi(\beta),
\end{equation}
where $\pi$ denotes the Jeffreys prior on $[0,1]$.

In practice, we have to resort to approximation. We use a finite discrete mixture that allows to maintain the theoretical guarantees.
\subsubsection{Part I: Wealth (betting) stopping time under the strong alternative}

We first prove the wealth-based guarantee for the actual discrete-mixture wealth
used by the algorithm.

\begin{assumption}[Persistent strong advantage for the original discrete mixture]
\label{ass:split_advantage_discrete_restate}
Fix $(v,c)$ and let $s=s^{v,c}$ be the first time at which this candidate's loss
difference is used as test evidence. There exist $\Delta\in(0,1]$ and
$n_0\in\mathbb N$ such that, with $t_0:=s+n_0$,
\[
\delta_t
=
\E[\Delta_t\mid\F_{t-1}]
\ge
\Delta
\qquad\forall t\ge t_0.
\]
Thus $n_0$ is the number of tested increments before the persistent advantage
begins. The betting grid contains a point
\[
\beta_\star\in\left[\frac{\Delta}{8},\frac{\Delta}{4}\right]
\]
with mixture weight $w_\star>0$.
\end{assumption}

\paragraph{Original discrete-mixture wealth and stopping rule.}
Let $\mathcal B=\{\beta_1,\ldots,\beta_K\}\subset[0,1)$ be the fixed grid of
betting fractions and let $w_k>0$ with $\sum_{k=1}^K w_k=1$. The wealth is
\begin{equation}
\label{eq:discrete_wealth_power}
W_t
=
\sum_{k=1}^K
w_k
\prod_{u=s}^{t}
\bigl(1+\beta_k\Delta_u\bigr),
\qquad
W_{s-1}=1.
\end{equation}
The betting stopping time is
\[
\tau:=\tau^{v,c}
:=
\inf\left\{t\ge s:
W_t\ge \frac1{\alpha^{v,c}}
\right\}.
\]

\paragraph{Goal.}
We show that for every $\eta\in(0,1)$, with probability at least $1-\eta$,
\[
\tau
\le
 t_0+N_\eta-1,
\]
where
\begin{equation}
\label{eq:N_eta_discrete_power}
N_\eta
:=
\left\lceil
\frac{32}{\Delta^2}
\left(
\log\frac1{\alpha^{v,c}}
+
\log\frac1{w_\star}
+
B_0
+
\log\frac1\eta
\right)
\right\rceil,
\qquad
B_0:=n_0\log\frac1{1-\beta_\star}.
\end{equation}

\subsubsection*{Step 0: Lower-bound the original mixture by one favorable grid point}

For every $t\ge s$, the mixture contains the component indexed by $\beta_\star$,
so deterministically
\begin{equation}
\label{eq:mixture_lower_one_component}
W_t
\ge
w_\star
\prod_{u=s}^{t}
\bigl(1+\beta_\star\Delta_u\bigr).
\end{equation}
Because $\Delta_u\in[-1,1]$ and $\beta_\star<1$, each factor is strictly positive:
\[
1+\beta_\star\Delta_u
\ge
1-\beta_\star
>
0.
\]
Thus we may take logarithms. At time $t=t_0+N-1$, split the log-product into the
pre-advantage block $u=s,\ldots,t_0-1$ and the post-advantage block
$u=t_0,\ldots,t_0+N-1$:
\begin{align}
\log W_{t_0+N-1}
&\ge
\log w_\star
+
\sum_{u=s}^{t_0-1}\log(1+\beta_\star\Delta_u)
+
\sum_{u=t_0}^{t_0+N-1}\log(1+\beta_\star\Delta_u).
\label{eq:logW_split_pre_post}
\end{align}

\subsubsection*{Step 1: Control the stale pre-advantage history}

No sign or mean condition is imposed before $t_0$. Nevertheless, boundedness
alone gives a deterministic lower bound. Since $\Delta_u\ge -1$,
\[
1+\beta_\star\Delta_u
\ge
1-\beta_\star,
\]
and hence
\[
\log(1+\beta_\star\Delta_u)
\ge
\log(1-\beta_\star)
=
-\log\frac1{1-\beta_\star}.
\]
There are $n_0=t_0-s$ pre-advantage tested increments, so
\begin{equation}
\label{eq:stale_history_bound}
\sum_{u=s}^{t_0-1}\log(1+\beta_\star\Delta_u)
\ge
-n_0\log\frac1{1-\beta_\star}
= -B_0.
\end{equation}
This is the only cost of using the original single-start wealth rather than a
restart mixture. We could, in fact, circumvent this cost entirely by utilizing the restart mixture defined below:

For late-emerging advantages, one may allocate capital across restart times.
For deterministic weights \(\pi_r\ge0\), \(\sum_{r=s}^{\infty}\pi_r\le1\), define
\[
\widetilde W_t
=
\sum_{r=s}^{t}\pi_r
\sum_{k=1}^K w_k\prod_{u=r}^{t}(1+\beta_k\Delta_u)
+
\sum_{r>t}\pi_r.
\]
Each started component is an e-process under the strong null, and unstarted
capital is held in cash, so \((\widetilde W_t)\) is a nonnegative
supermartingale. If the advantage starts at \(t_0\), the component \(r=t_0\)
pays a penalty \(\log(1/\pi_{t_0})\) instead of the stale-history term. This is a
valid optional variant but is a different algorithmic wealth process.

\subsubsection*{Step 2: Concentration of the post-advantage cumulative loss difference}

Fix a block length $N\ge1$ and define, for $u\ge t_0$,
\[
X_u:=\Delta_u-\delta_u.
\]
Then $(X_u,\F_u)$ is a martingale difference sequence:
\[
\E[X_u\mid\F_{u-1}]
=
\E[\Delta_u-\delta_u\mid\F_{u-1}]
=
\delta_u-\delta_u
=0.
\]
Moreover, since $\Delta_u\in[-1,1]$, its conditional mean $\delta_u$ also lies
in $[-1,1]$, and $X_u$ lies conditionally in an interval of length at most $2$.
By the conditional Hoeffding lemma, for every $\lambda>0$,
\[
\E\!\left[\exp(-\lambda X_u)\mid\F_{u-1}\right]
\le
\exp\left(\frac{\lambda^2}{2}\right).
\]
Iterating this inequality over $u=t_0,\ldots,t_0+N-1$ gives
\[
\E\!\left[
\exp\left(-\lambda\sum_{u=t_0}^{t_0+N-1}X_u\right)
\right]
\le
\exp\left(\frac{N\lambda^2}{2}\right).
\]
Therefore, by Chernoff's method, for any $a>0$,
\[
\Pp\!\left(
\sum_{u=t_0}^{t_0+N-1}X_u\le -a
\right)
\le
\exp\left(-\frac{a^2}{2N}\right).
\]
Taking $a=\sqrt{2N\log(1/\eta)}$, we obtain an event $\mathcal A_N$ satisfying
$\Pp(\mathcal A_N)\ge1-\eta$ on which
\begin{equation}
\label{eq:mart_diff_lower_discrete}
\sum_{u=t_0}^{t_0+N-1}X_u
\ge
-\sqrt{2N\log\frac1\eta}.
\end{equation}
On $\mathcal A_N$, using $\delta_u\ge\Delta$ for $u\ge t_0$,
\begin{align}
\sum_{u=t_0}^{t_0+N-1}\Delta_u
&=
\sum_{u=t_0}^{t_0+N-1}\delta_u
+
\sum_{u=t_0}^{t_0+N-1}X_u
\nonumber\\
&\ge
N\Delta
-
\sqrt{2N\log\frac1\eta}.
\label{eq:cum_delta_lower_discrete}
\end{align}
If
\begin{equation}
\label{eq:N_concentration_requirement_discrete}
N\ge \frac{8}{\Delta^2}\log\frac1\eta,
\end{equation}
then
\[
\sqrt{2N\log\frac1\eta}\le \frac{N\Delta}{2},
\]
and therefore on $\mathcal A_N$,
\begin{equation}
\label{eq:cum_delta_half_discrete}
\sum_{u=t_0}^{t_0+N-1}\Delta_u
\ge
\frac{N\Delta}{2}.
\end{equation}

\subsubsection*{Step 3: Convert cumulative advantage into log-wealth growth}

We use the elementary inequality
\[
\log(1+x)\ge x-x^2,
\qquad x\in[-1/2,1/2].
\]
Indeed, for $g(x)=\log(1+x)-x+x^2$, one has $g(0)=0$ and
$g'(x)=x(1+2x)/(1+x)$, so $g$ is minimized at zero on $[-1/2,1/2]$.
Since $\beta_\star\le\Delta/4\le1/4$ and $\Delta_u\in[-1,1]$, the product
$x=\beta_\star\Delta_u$ lies in $[-1/4,1/4]$. Hence
\[
\log(1+\beta_\star\Delta_u)
\ge
\beta_\star\Delta_u-
\beta_\star^2\Delta_u^2
\ge
\beta_\star\Delta_u-
\beta_\star^2.
\]
Summing over the post-advantage block and using
\eqref{eq:cum_delta_half_discrete}, on $\mathcal A_N$ we get
\begin{align}
\sum_{u=t_0}^{t_0+N-1}\log(1+\beta_\star\Delta_u)
&\ge
\beta_\star
\sum_{u=t_0}^{t_0+N-1}\Delta_u
-
N\beta_\star^2
\nonumber\\
&\ge
N\left(\frac{\beta_\star\Delta}{2}-\beta_\star^2\right).
\label{eq:post_log_growth_first}
\end{align}
Because $\beta_\star\le\Delta/4$,
\[
\beta_\star^2\le \frac{\beta_\star\Delta}{4},
\]
so
\[
\frac{\beta_\star\Delta}{2}-\beta_\star^2
\ge
\frac{\beta_\star\Delta}{4}.
\]
Because $\beta_\star\ge\Delta/8$,
\[
\frac{\beta_\star\Delta}{4}
\ge
\frac{\Delta^2}{32}.
\]
Therefore, on $\mathcal A_N$ and under
\eqref{eq:N_concentration_requirement_discrete},
\begin{equation}
\label{eq:post_log_growth_discrete}
\sum_{u=t_0}^{t_0+N-1}\log(1+\beta_\star\Delta_u)
\ge
\frac{N\Delta^2}{32}.
\end{equation}

\subsubsection*{Step 4: Solve for threshold crossing}

Combining \eqref{eq:logW_split_pre_post}, \eqref{eq:stale_history_bound}, and
\eqref{eq:post_log_growth_discrete}, on $\mathcal A_N$ we have
\begin{equation}
\label{eq:logW_final_discrete_lower}
\log W_{t_0+N-1}
\ge
\log w_\star
-
B_0
+
\frac{N\Delta^2}{32}.
\end{equation}
Thus $W_{t_0+N-1}\ge1/\alpha^{v,c}$ is guaranteed whenever
\[
\log w_\star
-
B_0
+
\frac{N\Delta^2}{32}
\ge
\log\frac1{\alpha^{v,c}},
\]
or equivalently whenever
\begin{equation}
\label{eq:N_threshold_discrete}
\frac{N\Delta^2}{32}
\ge
\log\frac1{\alpha^{v,c}}
+
\log\frac1{w_\star}
+
B_0.
\end{equation}
The choice $N=N_\eta$ in \eqref{eq:N_eta_discrete_power} implies
\eqref{eq:N_threshold_discrete}. It also implies the concentration requirement
\eqref{eq:N_concentration_requirement_discrete}, because the bracket in
\eqref{eq:N_eta_discrete_power} contains $\log(1/\eta)$ and $32\ge8$.
Therefore,
\[
\Pp\!\left(
W_{t_0+N_\eta-1}\ge\frac1{\alpha^{v,c}}
\right)
\ge
1-\eta.
\]
By definition of $\tau$,
\[
\Pp\!\left(
\tau\le t_0+N_\eta-1
\right)
\ge
1-\eta.
\]

\paragraph{Almost sure finiteness.}
For every $\eta\in(0,1)$, there is a deterministic finite $N_\eta$ such that
\[
\Pp\!\left(\tau>t_0+N_\eta-1\right)\le\eta.
\]
Since $\{\tau=\infty\}\subseteq\{\tau>t_0+N_\eta-1\}$, we have
$\Pp(\tau=\infty)\le\eta$ for every $\eta\in(0,1)$. Letting $\eta\downarrow0$
gives $\Pp(\tau=\infty)=0$, so $\tau<\infty$ almost surely.

\paragraph{Stale-history penalty.}
Since $\beta_\star\le\Delta/4\le1/4$,
\[
B_0=n_0\log\frac1{1-\beta_\star}
\le
\frac{n_0\beta_\star}{1-\beta_\star}
\le
\frac{4}{3}n_0\beta_\star
\le
\frac{n_0\Delta}{3}.
\]
Thus the original single-start wealth pays an explicit stale-history penalty. Validity holds for any finite grid;
the rate above additionally requires that the grid contain a betting fraction of
order $\Delta$, namely $\beta_\star\in[\Delta/8,\Delta/4]$.

\subsubsection{Part II: CS-crossing stopping time under the weak alternative}

We now prove the CS part under the weak alternative.

\paragraph{Running averages.}
For $t\ge s$ define
\[
\bar\Delta_t
:=
\frac{1}{t-s+1}\sum_{u=s}^t \Delta_u,
\qquad
\bar\delta_t
:=
\frac{1}{t-s+1}\sum_{u=s}^t \delta_u.
\]
The weak null and weak alternative are:
\[
H_{\mathrm{w},0}:\ \forall t\ge s,\ \bar\delta_t\le 0,
\qquad
H_{\mathrm{w},1}(\Delta):\ \exists t_0\ge s\ \text{s.t.}\ \forall t\ge t_0,\ \bar\delta_t\ge \Delta.
\]

\paragraph{Confidence sequence and stopping rule.}
Let $(L_t,U_t)_{t\ge s}$ be a $(1-\alpha^{v,c})$ CS for $\bar\delta_t$:
\begin{equation}
\label{eq:CS_cover_full}
\Pp\Bigl(\forall t\ge s:\ L_t\le \bar\delta_t\le U_t\Bigr)\ge 1-\alpha^{v,c}.
\end{equation}
Define the CS-crossing stopping time
\[
\tau_{\mathrm{w}}:=\tau_{\mathrm{w}}^{v,c}:=\inf\{t\ge s:\ L_t>0\}.
\]

\subsubsection*{Step 1: Generic crossing lemma}

\begin{lemma}[Crossing once width is below the advantage]
\label{lem:cs_cross_full}
Let $\mathcal G:=\{\forall t\ge s:\ L_t\le \bar\delta_t\le U_t\}$.
Assume the weak alternative $H_{\mathrm{w},1}(\Delta)$ holds for some $\Delta>0$ and $t_0\ge s$.
Define the CS width $w_t:=U_t-L_t$.
Then on $\mathcal G$, for all $t\ge t_0$,
\[
L_t \ge \Delta - w_t.
\]
In particular, on $\mathcal G$,
\[
\tau^{\mathrm{s}}
\le
\inf\{t\ge t_0:\ w_t<\Delta\}.
\]
Consequently, if $w_t\to 0$ on $\mathcal G$, then $\tau^{\mathrm{s}}<\infty$ on $\mathcal G$.
\end{lemma}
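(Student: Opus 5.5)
The argument is deterministic on the coverage event $\mathcal G$: I would fix an outcome in $\mathcal G$ and use only the two-sided sandwich $L_t\le\bar\delta_t\le U_t$ together with the alternative. No probabilistic reasoning is needed beyond the fact that $\mathcal G$ is the coverage event of \eqref{eq:CS_cover_full}.

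\textbf{Step 1.} For $t\ge t_0$, the weak alternative $H_{\mathrm{w},1}(\Delta)$ gives $\bar\delta_t\ge\Delta$. On $\mathcal G$ the upper bound gives $U_t\ge\bar\delta_t\ge\Delta$. Writing $L_t=U_t-w_t$ then yields $L_t\ge\Delta-w_t$, which is the first claim.

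\textbf{Step 2.} Let $t^\ast:=\inf\{t\ge t_0: w_t<\Delta\}$. If $t^\ast=\infty$ the bound is trivial. Otherwise Step 1 at $t=t^\ast$ gives $L_{t^\ast}\ge\Delta-w_{t^\ast}>0$. Hence $t^\ast$ belongs to the set $\{t\ge s: L_t>0\}$, so its infimum, the crossing time $\tau^{\mathrm s}=\tau_{\mathrm w}$, satisfies $\tau_{\mathrm w}\le t^\ast$. The strict inequality $w_t<\Delta$ is exactly what makes $L_{t^\ast}>0$ strict, matching the strict rule $L_t>0$ in the definition of $\tau_{\mathrm w}$.

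\textbf{Step 3.} Suppose $w_t\to0$ on $\mathcal G$. Then for each outcome there is a finite $t\ge t_0$ with $w_t<\Delta$, since $\Delta>0$. So $t^\ast<\infty$, and Step 2 gives $\tau_{\mathrm w}<\infty$ on $\mathcal G$.

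There is no real obstacle in this lemma itself. The only care needed is bookkeeping: the lemma's $\tau^{\mathrm s}$ must be identified with the CS-crossing time $\tau_{\mathrm w}$, and $L_t$ must be bounded below via $U_t$ rather than via $\bar\delta_t$ directly. The substantive work, showing that the empirical Bernstein width actually shrinks at rate $\tilde{\mathcal O}(\sqrt{\log(1/\alpha^{v,c})/n})$ so that the $\Delta^{-2}$ rate in Theorem~\ref{thm:power} follows, is deferred to the subsequent step and is not part of this lemma.
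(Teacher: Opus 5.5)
Your proof is correct and follows the paper's argument: on $\mathcal G$ you use $U_t\ge\bar\delta_t\ge\Delta$ for $t\ge t_0$ to get $L_t=U_t-w_t\ge\Delta-w_t$, so $w_t<\Delta$ forces $L_t>0$ and hence $\tau^{\mathrm{s}}\le t$. Your explicit treatment of the infimum and of the case $w_t\to0$ just writes out the steps the paper calls immediate.
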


\begin{proof}
Work on $\mathcal G$.
For $t\ge t_0$, the weak alternative gives $\bar\delta_t\ge \Delta$.
Since $\bar\delta_t\le U_t$ and $L_t\le \bar\delta_t$ on $\mathcal G$,
\[
L_t
=
U_t-(U_t-L_t)
\ge
\bar\delta_t-w_t
\ge
\Delta-w_t.
\]
If $w_t<\Delta$, then $L_t>0$ and therefore $\tau^{\mathrm{s}}\le t$.
The remaining statements follow immediately.
\end{proof}

Using \eqref{eq:CS_cover_full}, we immediately obtain:
\[
\Pp(\tau^{\mathrm{s}}<\infty)\ge \Pp(\mathcal G)\ge 1-\alpha^{v,c}.
\]

\subsubsection*{Step 2: Our explicit empirical Bernstein CS}

We now \emph{directly define} the empirical Bernstein CS used in our paper from \cite{choe2024comparing},
in the exact form needed for the proof.

\paragraph{Empirical Bernstein radius (time-uniform).}
Assume $\Delta_t\in[-1,1]$ a.s.
Define the predictable empirical variance proxy
\[
\hat V_t
:=
\frac{1}{t-s+1}\sum_{u=s}^{t}
\bigl(\Delta_u-\bar\Delta_{u-1}\bigr)^2,
\qquad
\bar\Delta_{s-1}:=0.
\]
Let
\[
\ell_n(\alpha)
:=
\log\frac{1}{\alpha}+\log\log\bigl(e n\bigr),
\qquad n\ge 1,
\]
and define the (two-sided) empirical Bernstein half-width
\begin{equation}
\label{eq:empbern_radius_full}
r_n(\alpha)
:=
\sqrt{\frac{2\hat V_t\,\ell_n(\alpha)}{n}}
+
\frac{3\,\ell_n(\alpha)}{n},
\qquad n=t-s+1.
\end{equation}
(Any equivalent time-uniform empirical Bernstein CS from our construction can be
used here; constants are universal.)

Define
\[
L_t:=\bar\Delta_t-r_{t-s+1}(\alpha^{v,c}),
\qquad
U_t:=\bar\Delta_t+r_{t-s+1}(\alpha^{v,c}),
\qquad t\ge s.
\]

\begin{theorem}[Time-uniform empirical Bernstein CS (our instantiation)]
\label{thm:empbern_cs_full}
For any $\alpha\in(0,1)$,
\[
\Pp\Bigl(\forall t\ge s:\ \bar\delta_t\in[L_t,U_t]\Bigr)\ge 1-\alpha.
\]
\end{theorem}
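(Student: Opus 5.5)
The plan is to obtain the two-sided statement from two one-sided time-uniform bounds, each at level $\alpha/2$. Each one-sided bound will come from a nonnegative supermartingale combined with Ville's inequality, as in Lemma~\ref{lem:anytime_valid_camready}, and then a stitching (peeling over geometric epochs) argument that produces the $\log\log(en)$ term. The halving of $\alpha$ is absorbed into the universal constants in $r_n(\alpha)$, and the remark after \eqref{eq:empbern_radius_full} explicitly allows any equivalent time-uniform empirical Bernstein CS up to such constants. I will treat the upper tail $\bar\delta_t\ge L_t$ in detail; the lower tail follows by applying the same argument to $-\Delta_u$.

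\textbf{Step 1 (Fan-type supermartingale).} Set $X_u:=\Delta_u-\delta_u$, which is a martingale difference sequence, and $S_t:=\sum_{u=s}^t X_u = n(\bar\Delta_t-\bar\delta_t)$. Let the predictor $\hat\mu_{u}:=\bar\Delta_{u-1}$ be $\F_{u-1}$-measurable, and note $|\Delta_u-\hat\mu_u|\le 2$. Fan et al.'s inequality $\exp(\lambda\xi-\psi_E(\lambda)\xi^2)\le 1+\lambda\xi$ for $\xi\ge-1$, applied to $\xi=-(\Delta_u-\hat\mu_u)/2$, will show that
\[
M_t(\lambda):=\exp\Bigl(-\lambda S_t/2-\psi_E(\lambda)\textstyle\sum_{u=s}^t(\Delta_u-\hat\mu_u)^2/4\Bigr)
\]
is a nonnegative supermartingale for each $\lambda\in[0,1)$. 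The key input is that $\E[X_u\mid\F_{u-1}]=0$, which is the only property of the data this uses. It holds without independence and with time-varying $\delta_u$, which is exactly why the CS targets the running average $\bar\delta_t$. In this normalization the intrinsic time is $V_t:=n\hat V_t/4$.

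\textbf{Step 2 (stitching).} For epochs $k\ge0$, take $V_t\in[\eta^k,\eta^{k+1})$, or $n\in[2^k,2^{k+1})$ to match the $\log\log(en)$ form. On epoch $k$ I will fix $\lambda_k$ tuned to that epoch and allocate $\alpha_k\propto\alpha/(k+1)^2$. Ville's inequality bounds the probability that $M_t(\lambda_k)$ ever exceeds $1/\alpha_k$, and a union bound over $k$ uses $\sum_k\alpha_k\le\alpha/2$. On the good event, within epoch $k$ we get
\[
-S_t\le \bigl(2\psi_E(\lambda_k)V_t+2\log(1/\alpha_k)\bigr)/\lambda_k .
\]
Using $\psi_E(\lambda)\le\lambda^2/(2(1-\lambda))$ and optimizing $\lambda_k$ yields a bound of the form $\sqrt{c_1V_t\,\ell}+c_2\ell$, where $\ell=\log(1/\alpha)+O(\log\log(en))$. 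Dividing by $n$ then gives $\bar\delta_t\ge\bar\Delta_t-r_n(\alpha)$ simultaneously for all $t$.

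\textbf{Main obstacle.} The delicate step is matching the exact constants $\sqrt{2\hat V_t\ell_n/n}+3\ell_n/n$. The stitching boundary of \citet{howard2021time} carries multiplicative factors such as $(1+\eta)/2$ and additive terms like $\log\zeta(s)$ inside the iterated log. I would first invoke their stitched empirical Bernstein boundary (Theorem~1 with the polynomial stitching function) as a black box. I would then verify that it is dominated by $r_n$ once the slack is folded into the constant $3$ and the $\log\log(en)$ term, the latter using $V_t\le n$ to express the epoch index in terms of $n$. If the constants do not close, I would enlarge the universal constants in $r_n$, which the statement permits.
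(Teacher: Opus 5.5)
Your Step~1 is sound. Fan's inequality with the predictable centring $\hat\mu_u=\bar\Delta_{u-1}$ gives a nonnegative supermartingale using only $\E[X_u\mid\F_{u-1}]=0$, and stitching over epochs of the intrinsic time is the standard route. The paper itself supplies no proof of this theorem, only the remark that ``constants are universal''. The gap is the step you flag as the main obstacle, and it cannot be closed: with the displayed constants the statement is false.

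The leading term $\sqrt{2\hat V_t(\log(1/\alpha)+\log\log(en))/n}$ sits exactly on the law-of-the-iterated-logarithm curve, with constant one and no $\log\log\log$ correction. That is a lower-class boundary. Concretely:
\begin{itemize}
\item Take $\Delta_u$ i.i.d.\ uniform on $\{-1,1\}$, so $\delta_u\equiv0=\bar\delta_t$, and write $S_n=\sum_{u=s}^{t}\Delta_u$ with $n=t-s+1$.
\item Then $n\hat V_t=n-2\sum_u\Delta_u\bar\Delta_{u-1}+\sum_u\bar\Delta_{u-1}^2=n+o(n/\log\log n)$ almost surely. The middle sum is a martingale with increments bounded by one, and the last is $O(\log n\log\log n)$.
\item Hence $n\,r_n=\sqrt{2n\hat V_t\ell_n}+3\ell_n\le\sqrt{2n(\log\log n+C)}$ for all large $n$, with $C=\log(1/\alpha)+1$.
\item For $\phi(n)=\sqrt{2(\log\log n+C)}$, the series $\sum_n\phi(n)n^{-1}e^{-\phi(n)^2/2}$ dominates a multiple of $\sum_n1/(n\log n)=\infty$.
\item By the Kolmogorov--Erd\H{o}s--Feller integral test, $S_n>n\,r_n$, i.e.\ $L_t>0=\bar\delta_t$, infinitely often almost surely.
\end{itemize}
The coverage probability is therefore $0$, not $\ge1-\alpha$. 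The weak test built on this CS would reject a true weak null almost surely.

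Your own construction shows where the constants go. With $\alpha_k\propto\alpha/(k+1)^2$ and $k\approx\log_\eta V_t$ (or $\log_2 n$):
\begin{itemize}
\item $\log(1/\alpha_k)$ contributes $2\log\log n+O(1)$ rather than $\log\log n$;
\item a single $\lambda_k$ per epoch of ratio $\eta>1$ costs a factor up to $(\eta^{1/4}+\eta^{-1/4})/2>1$ on the $\sqrt{2\hat V_t\ell/n}$ term;
\item the two-sided split turns $\log(1/\alpha)$ into $\log(2/\alpha)$.
\end{itemize}
These only enlarge the boundary. The iterated-log coefficient alone makes it exceed $r_n$ by a factor tending to $\sqrt2$ whenever $\hat V_t$ stays bounded away from zero. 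So the pathwise domination you need (valid boundary $\le r_n$) goes the wrong way.

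The slack also cannot be folded into the constant $3$. That constant multiplies the lower-order term $\ell_n/n$, while the deficit sits in the leading $\sqrt{\hat V_t\log\log n/n}$ term. Separately, epochs in $n$ with one $\lambda_k$ each would lose the $\hat V_t$-adaptivity, since the right $\lambda$ depends on the random $V_t$.

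Enlarging the constants is therefore not a fallback the statement permits but a change of statement: the theorem concerns the specific $L_t,U_t$ just defined. What your argument does prove is a corrected theorem. For example, Howard et al.'s polynomially stitched empirical-Bernstein boundary with scale $c=2$ has leading constant strictly above $\sqrt2$ (about $1.7$ there) and rescaled $\log(1/\alpha)$ and $\log\log$ terms. That radius still gives the $\tilde{\cO}(\log(1/\alpha^{v,c})/\Delta^2)$ crossing time in Theorem~\ref{thm:power}. The right fix is to restate the theorem with such constants.
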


\paragraph{Width bound.}
By construction,
\[
w_t
=
U_t-L_t
=
2r_{t-s+1}(\alpha^{v,c}).
\]

\subsubsection*{Step 3: Solving explicitly for the crossing time}

By Lemma~\ref{lem:cs_cross_full}, on $\mathcal G$ it suffices to have
\[
w_t<\Delta
\quad\Longleftrightarrow\quad
2r_{n}(\alpha^{v,c})<\Delta,
\qquad n=t-s+1.
\]
Using \eqref{eq:empbern_radius_full}, a sufficient condition is
\[
2\sqrt{\frac{2\hat V_t\,\ell_n(\alpha^{v,c})}{n}}
+
\frac{6\,\ell_n(\alpha^{v,c})}{n}
<\Delta.
\]
Since $\Delta_t\in[-1,1]$, we have the crude bound $\hat V_t\le 4$ a.s.,
so a deterministic sufficient condition is
\begin{equation}
\label{eq:det_suff_cond_cs}
2\sqrt{\frac{8\,\ell_n(\alpha^{v,c})}{n}}
+
\frac{6\,\ell_n(\alpha^{v,c})}{n}
<\Delta.
\end{equation}
We now solve \eqref{eq:det_suff_cond_cs}.

\paragraph{Controlling the square-root term.}
Require
\[
2\sqrt{\frac{8\,\ell_n}{n}}
\le
\frac{\Delta}{2}
\quad\Longleftrightarrow\quad
\sqrt{\frac{8\,\ell_n}{n}}
\le
\frac{\Delta}{4}
\quad\Longleftrightarrow\quad
\frac{8\,\ell_n}{n}
\le
\frac{\Delta^2}{16}
\quad\Longleftrightarrow\quad
n
\ge
\frac{128}{\Delta^2}\,\ell_n.
\]

\paragraph{Controlling the linear term.}
Also require
\[
\frac{6\,\ell_n}{n}
\le
\frac{\Delta}{2}
\quad\Longleftrightarrow\quad
n\ge \frac{12}{\Delta}\,\ell_n.
\]
Since $\Delta\le 1$, the $\Delta^{-2}$ requirement dominates asymptotically, so it is enough to enforce
\[
n\ \ge\ \frac{C}{\Delta^2}\,\ell_n(\alpha^{v,c})
\]
for a universal constant $C$.

\paragraph{Self-consistency and the $\log\log$ term.}
Recall $\ell_n(\alpha)=\log(1/\alpha)+\log\log(en)$.
If we pick
\[
n
\ge
\frac{C}{\Delta^2}\left(\log\frac{1}{\alpha^{v,c}}+\log\log\frac{e}{\alpha^{v,c}}\right),
\]
then $\log\log(en)$ is at most a constant times $\log\log(e/\alpha^{v,c})$ (up to additive constants),
so the inequality is self-consistent. Therefore, there exists a universal constant $C'>0$ such that
on $\mathcal G$,
\[
\tau^{\mathrm{s}}
\le
t_0-1
+
\frac{C'}{\Delta^2}\left(\log\frac{1}{\alpha^{v,c}}+\log\log\frac{e}{\alpha^{v,c}}\right),
\]
i.e.,
\[
\tau^{\mathrm{s}}-t_0
=
\tilde{\cO}\!\left(\frac{\log(1/\alpha^{v,c})}{\Delta^2}\right)
\quad\text{on }\mathcal G.
\]

\subsubsection{Final conclusion}

\begin{proof}[Proof of Theorem~\ref{thm:power}]
\emph{(Wealth/betting, strong alternative).}
Assume Assumption~\ref{ass:split_advantage_discrete_restate}. Part~I shows that
for any $\eta\in(0,1)$, with probability at least $1-\eta$,
\[
\tau^{v,c}
\le
s+n_0+N_\eta-1,
\]
where
\[
N_\eta
=
\left\lceil
\frac{32}{\Delta^2}
\left(
\log\frac1{\alpha^{v,c}}
+
\log\frac1{w_\star}
+
n_0\log\frac1{1-\beta_\star}
+
\log\frac1\eta
\right)
\right\rceil .
\]
In particular $\tau^{v,c}<\infty$ almost surely.

\emph{(CS-crossing, weak alternative).}
Assume the weak alternative: there exist $\Delta>0$ and $t_0\ge s$ such that
$\bar\delta_t\ge\Delta$ for all $t\ge t_0$.
Let $(L_t,U_t)$ be the empirical Bernstein CS defined above.
By Theorem~\ref{thm:empbern_cs_full}, the coverage event $\mathcal G$ holds with
probability at least $1-\alpha^{v,c}$.
On $\mathcal G$, Lemma~\ref{lem:cs_cross_full} implies that the stopping time
$\tau_{\mathrm{w}}^{v,c}=\inf\{t\ge s:L_t>0\}$ is finite, and solving the width
condition using \eqref{eq:empbern_radius_full} yields
\[
\tau_{\mathrm{w}}^{v,c}-t_0
=
\tilde{\cO}\!\left(\frac{\log(1/\alpha^{v,c})}{\Delta^2}\right)
\quad\text{with probability at least }1-\alpha^{v,c}.
\]
This completes the proof of Theorem~\ref{thm:power}.
\end{proof}

\section{Implementation Details and Parameters}
\label{app:parameters}

This appendix provides implementation-specific details underlying
Algorithm~\ref{alg:avht}.

\subsection{Discrete-Mixture Betting Strategy}
\label{app:discrete_mixture}

In theory, the betting-based test can be expressed as a continuous mixture
over betting fractions,
\begin{equation}
\label{eq:UP_continuous_app}
W_t
=
\int_0^1
\prod_{i=s}^{t}
\bigl(1+\beta\,\Delta_i\bigr)
\,d\pi(\beta),
\end{equation}
where $\pi$ denotes the Jeffreys prior on $[0,1]$.
In practice, we have to resort to approximation. We use a finite discrete mixture,
which eliminates approximation error entirely, and allow to maintain the theoretical guarantees

\paragraph{Discrete-mixture wealth.}
Fix a finite grid of betting fractions
\[
\mathcal B := \{\beta_1,\dots,\beta_K\} \subset [0,1],
\]
for example a geometric or uniform grid such as
$\mathcal B=\{0,0.01,0.02,\dots,0.99\}$.
Let $\{w_k\}_{k=1}^K$ be nonnegative prior weights satisfying
$\sum_{k=1}^K w_k=1$.

The betting wealth is defined as
\begin{equation}
\label{eq:UP_discrete_app}
W_t
:=
\sum_{k=1}^K
w_k
\prod_{i=s}^{t}
\bigl(1+\beta_k\,\Delta_i\bigr).
\end{equation}
For each $k$, the process
$W_t^{(k)}:=\prod_{i=s}^{t}(1+\beta_k\Delta_i)$
is a nonnegative supermartingale under the null hypothesis
$\mathbb E[\Delta_t\mid\mathcal F_{t-1}]\le 0$.
Therefore, $W_t$ is itself a nonnegative supermartingale.

\paragraph{Validity and efficiency.}
This construction has three key consequences:
(i) \emph{Exact anytime-validity}: a finite sum of supermartingales remains a
supermartingale, so Ville's inequality applies directly;
(ii) \emph{Approximation of the continuous Universal Portfolio}: standard
online portfolio results imply that a sufficiently fine grid approximates the
continuous mixture up to an additive $O(\log K)$ term in log-wealth;
and (iii) \emph{computational efficiency}: updating $K$grid points
incurs only $O(K)$ cost per step and is typically faster than Monte Carlo
integration. We use $K=100$.

Unless stated otherwise, all theoretical guarantees in the main text apply
exactly to the discrete-mixture wealth~\eqref{eq:UP_discrete_app}.

\subsection{Significance-Level Allocation}
\label{app:alpha-allocation}

We allocate significance levels only to tests that are actually
instantiated. Let \(\mathcal T\) denote the random countable set of
node--candidate tests ever created by the algorithm. It is enough for
global validity that, pathwise,
\[
    \sum_{(v,c)\in\mathcal T}\alpha^{v,c}\le \alpha .
\]

One can use the following online alpha-spending rule. For \(d\ge0\), \(m\ge1\),
and \(q\ge1\), define
\[
    a_d=\frac{6}{\pi^2(d+1)^2},\qquad
    b_m=\frac{6}{\pi^2m^2},\qquad
    g_q=\frac{6}{\pi^2q^2}.
\]
Thus \(\sum_{d\ge0}a_d=\sum_{m\ge1}b_m=\sum_{q\ge1}g_q=1\).

For a node \(v\), let \(d(v)\) be its depth and let \(r(v)\) be its
creation rank among nodes of the same depth. Candidate splits may be
generated adaptively. Let \(q\) index the candidate-generation calls made
at node \(v\). On the \(q\)-th call, after removing duplicate or previously
tested candidates, suppose \(B_{v,q}\) new candidates remain. If
\(B_{v,q}=0\), no level is spent. Otherwise, every new candidate \(c\) in
that batch receives
\[
    \boxed{
    \alpha^{v,c}
    =
    \alpha_{\rm tree}\,
    a_{d(v)}\,b_{r(v)}\,g_q\,
    \frac{1}{B_{v,q}}
    } .
\]
For a single tree, take \(\alpha_{\rm tree}=\alpha\). For a forest with
\(M\) trees, take \(\alpha_{\rm tree}=\alpha/M\).

The level \(\alpha^{v,c}\) is fixed before the first observation used to
update the corresponding test statistic. Hence it is predictable at the
test start time.

To verify summability, fix any realized run of the algorithm. Let
\(\mathcal V\) be the set of nodes ever created and let
\(\mathcal C_{v,q}\) be the set of new candidates instantiated at node
\(v\) on proposal call \(q\). Then
\[
\begin{aligned}
\sum_{(v,c)\in\mathcal T}\alpha^{v,c}
&=
\alpha_{\rm tree}
\sum_{v\in\mathcal V}
a_{d(v)}b_{r(v)}
\sum_{q\ge1}g_q
\sum_{c\in\mathcal C_{v,q}}\frac{1}{B_{v,q}}
\\
&\le
\alpha_{\rm tree}
\sum_{d\ge0}a_d
\sum_{m\ge1}b_m
\sum_{q\ge1}g_q
=
\alpha_{\rm tree}.
\end{aligned}
\]
Therefore
\[
    \sum_{(v,c)\in\mathcal T}\alpha^{v,c}\le \alpha_{\rm tree}
\]
pathwise.
\subsection{Candidate Split Generation}
\label{app:candidate_generation}

To avoid biasing the comparison, candidate splits are generated using exactly
the same mechanisms as in the baseline Hoeffding Tree.
Depending on the configuration, each node employs either a Gaussian splitter
or a histogram-based splitter, maintaining feature-wise sufficient statistics. By default, HT uses the gaussien splitter.
When queried, the node proposes a finite set of promising candidates based on
empirical impurity reduction. In our experiments, we use the default setting of proposing up to 10 candidates per time step; however, we automatically discard duplicates and ensure that previously proposed splits are not re-evaluated.

Overall, we use exactly the same configuration as the default Hoeffding Tree. Our method only differs from the baseline only in the statistical test used to commit splits.

\subsection{Empirical Bernstein Confidence Sequence}
\label{app:empbern}

For the confidence-sequence--based test, we use a time-uniform empirical
Bernstein confidence sequence as in \cite{choe2024comparing}.
Assume $\Delta_t\in[-1,1]$ almost surely and let $s$ denote the test start time.

\paragraph{Empirical Bernstein radius (time-uniform).}
Define the running average
\[
\bar\Delta_t
:=
\frac{1}{t-s+1}\sum_{u=s}^{t}\Delta_u,
\qquad t\ge s,
\]
and the predictable empirical variance proxy
\[
\hat V_t
:=
\frac{1}{t-s+1}
\sum_{u=s}^{t}
\bigl(\Delta_u-\bar\Delta_{u-1}\bigr)^2,
\qquad
\bar\Delta_{s-1}:=0.
\]

Let
\[
\ell_n(\alpha)
:=
\log\frac{1}{\alpha}
+
\log\log\!\bigl(e n\bigr),
\qquad n\ge 1,
\]
and define the empirical Bernstein half-width
\begin{equation}
\label{eq:empbern_radius_app}
r_n(\alpha)
:=
\sqrt{\frac{2\,\hat V_t\,\ell_n(\alpha)}{n}}
+
\frac{3\,\ell_n(\alpha)}{n},
\qquad n=t-s+1.
\end{equation}

The resulting confidence sequence is
\[
L_t := \bar\Delta_t - r_{t-s+1}(\alpha^{v,c}),
\qquad
U_t := \bar\Delta_t + r_{t-s+1}(\alpha^{v,c}),
\qquad t\ge s.
\]

\newpage
\section{Additional experiments} \label{app:add_exp}

This section presents supplementary experiments that further characterize the empirical behavior of our anytime-valid decision tree framework. We first isolate and compare the proposed betting-based and confidence-sequence–based split-certification mechanisms in Appendix \ref{app:cs_comp}, and then report additional comparisons against established streaming ensemble baselines in Appendix \ref{app:baselines_2}.

\subsection{Comparison between \AVDTbet{} and \AVDTcs{}} \label{app:cs_comp}

We compare \AVDTbet{} and \AVDTcs{} across a diverse collection of real-world
regression and classification data streams. The two methods differ only in the underlying anytime-valid testing mechanism used to certify splits betting-based
tests for \AVDTbet{} versus confidence-sequence–based tests for \AVDTcs{}, allowing
for a clean empirical comparison of their practical behavior. Figure~\ref{app:fig:all_datasets_merged} reports full prequential and distribution performance
trajectories on twelve benchmark datasets.

Overall, \AVDTcs{} underperforms \AVDTbet{}, though it remains
competitive with and typically outperforms—the ARF baseline.

\begin{figure*}[!ht]
  \centering
  \setlength{\tabcolsep}{2pt}
  \renewcommand{\arraystretch}{0.9}

  \begin{tabular}{ccc}

  \begin{subfigure}[b]{0.31\textwidth}
    \centering
    \includegraphics[width=1\textwidth]{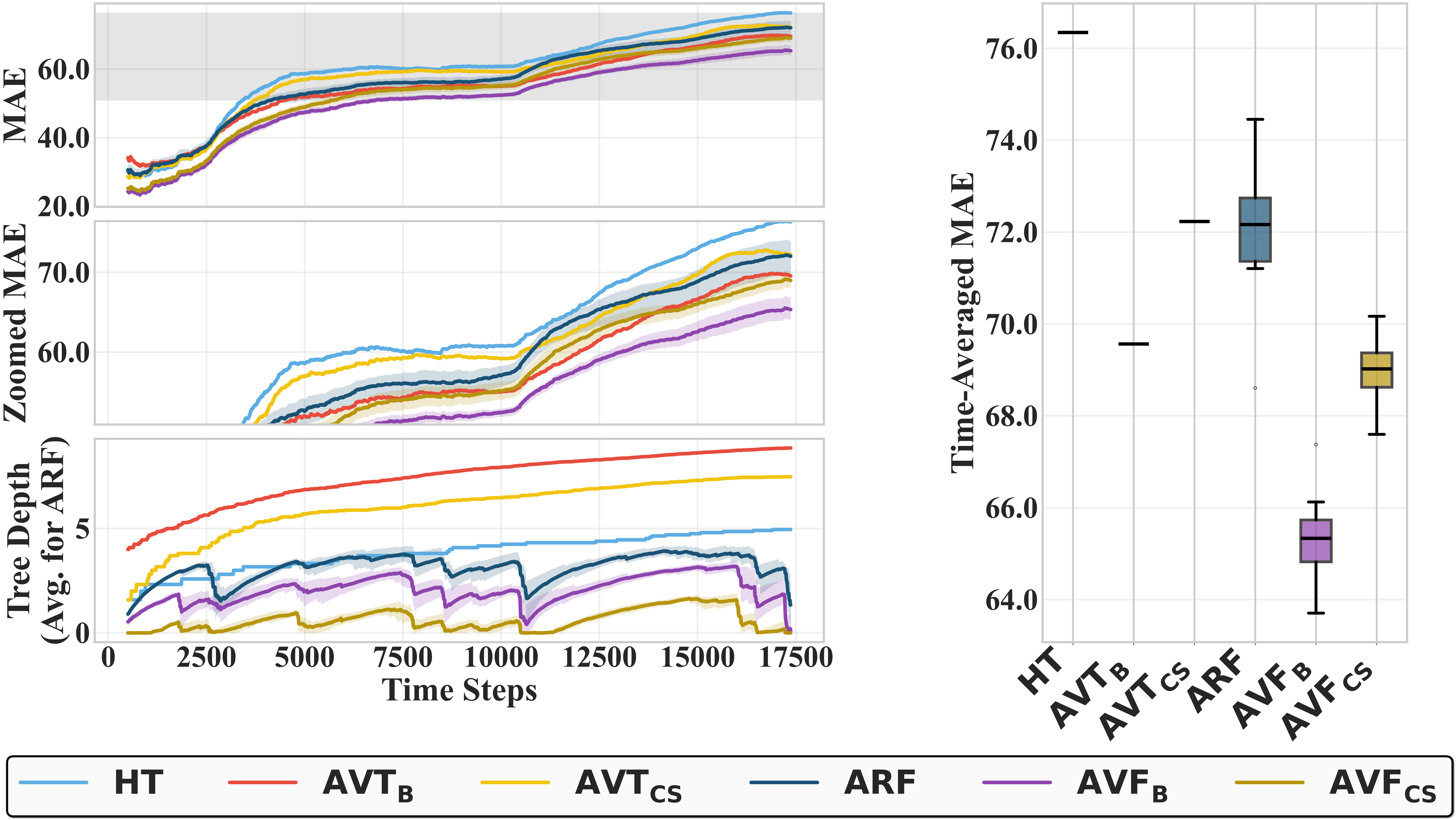}
    \caption{bike}
    \label{fig:reg_bike}
  \end{subfigure} &
  \begin{subfigure}[b]{0.31\textwidth}
    \centering
    \includegraphics[width=\textwidth]{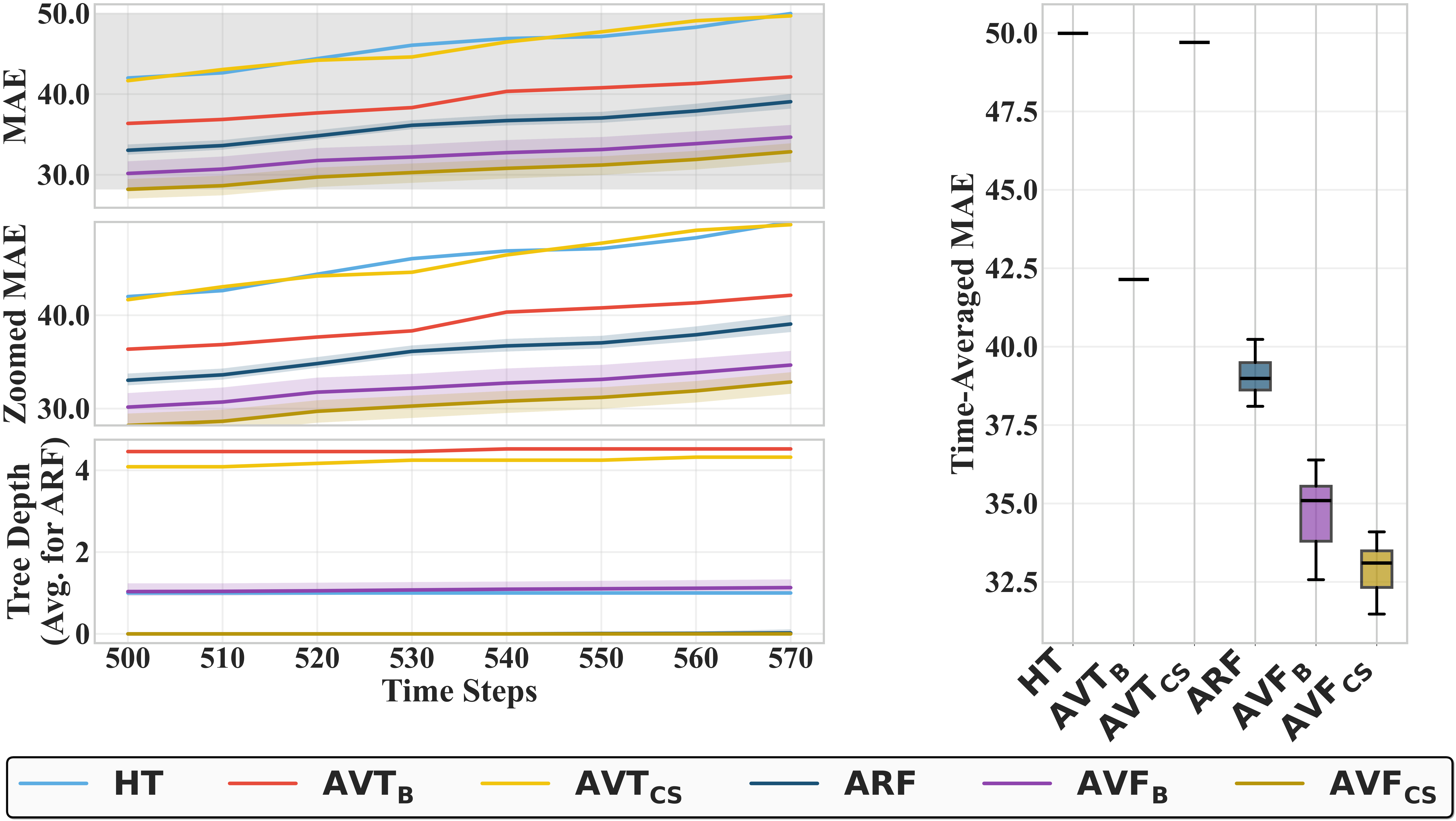}
    \caption{chick}
    \label{fig:reg_chick}
  \end{subfigure} &
  \begin{subfigure}[b]{0.31\textwidth}
    \centering
    \includegraphics[width=\textwidth]{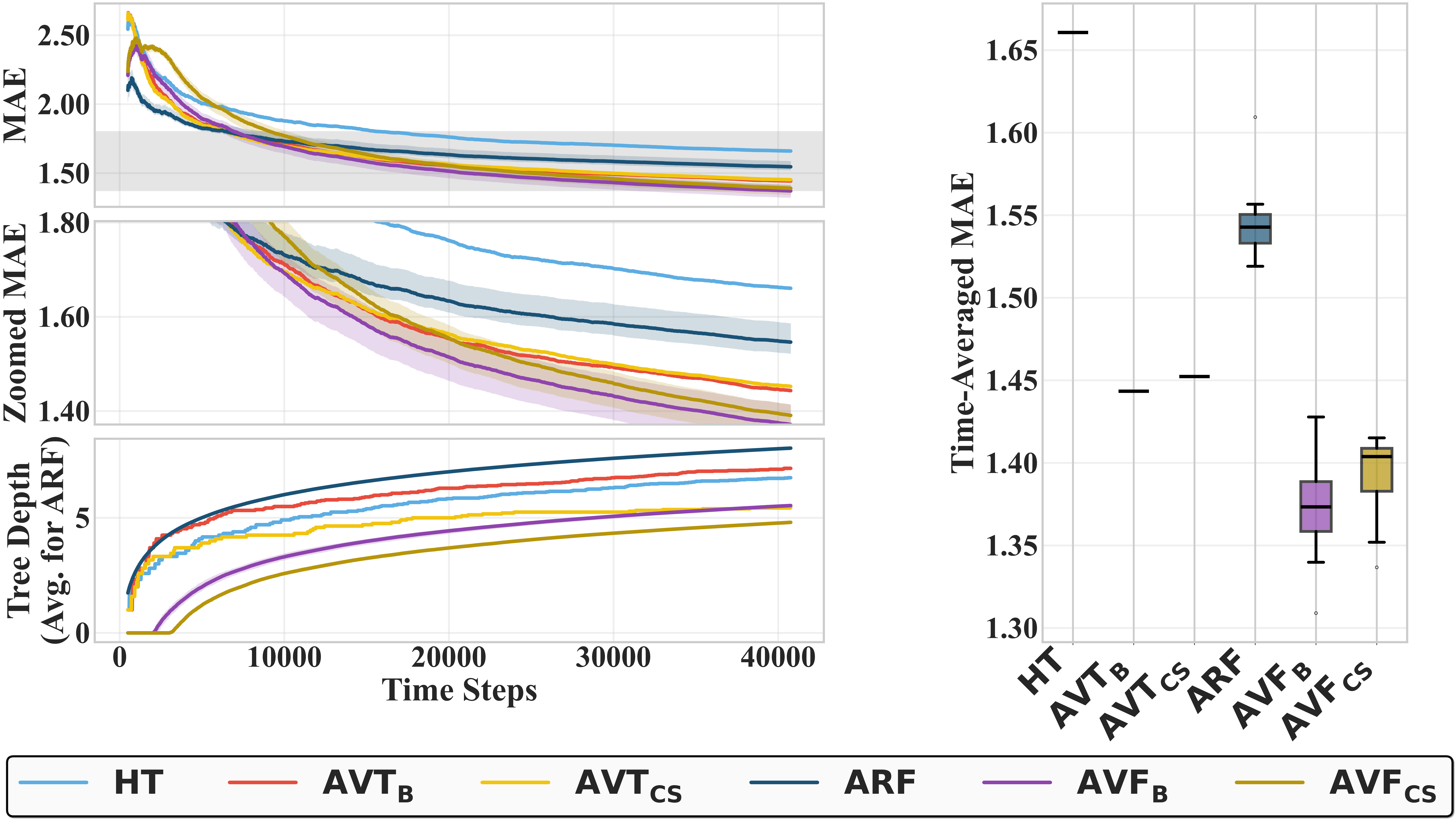}
    \caption{fried}
    \label{fig:reg_fried}
  \end{subfigure} \\[4pt]

  \begin{subfigure}[b]{0.31\textwidth}
    \centering
    \includegraphics[width=\textwidth]{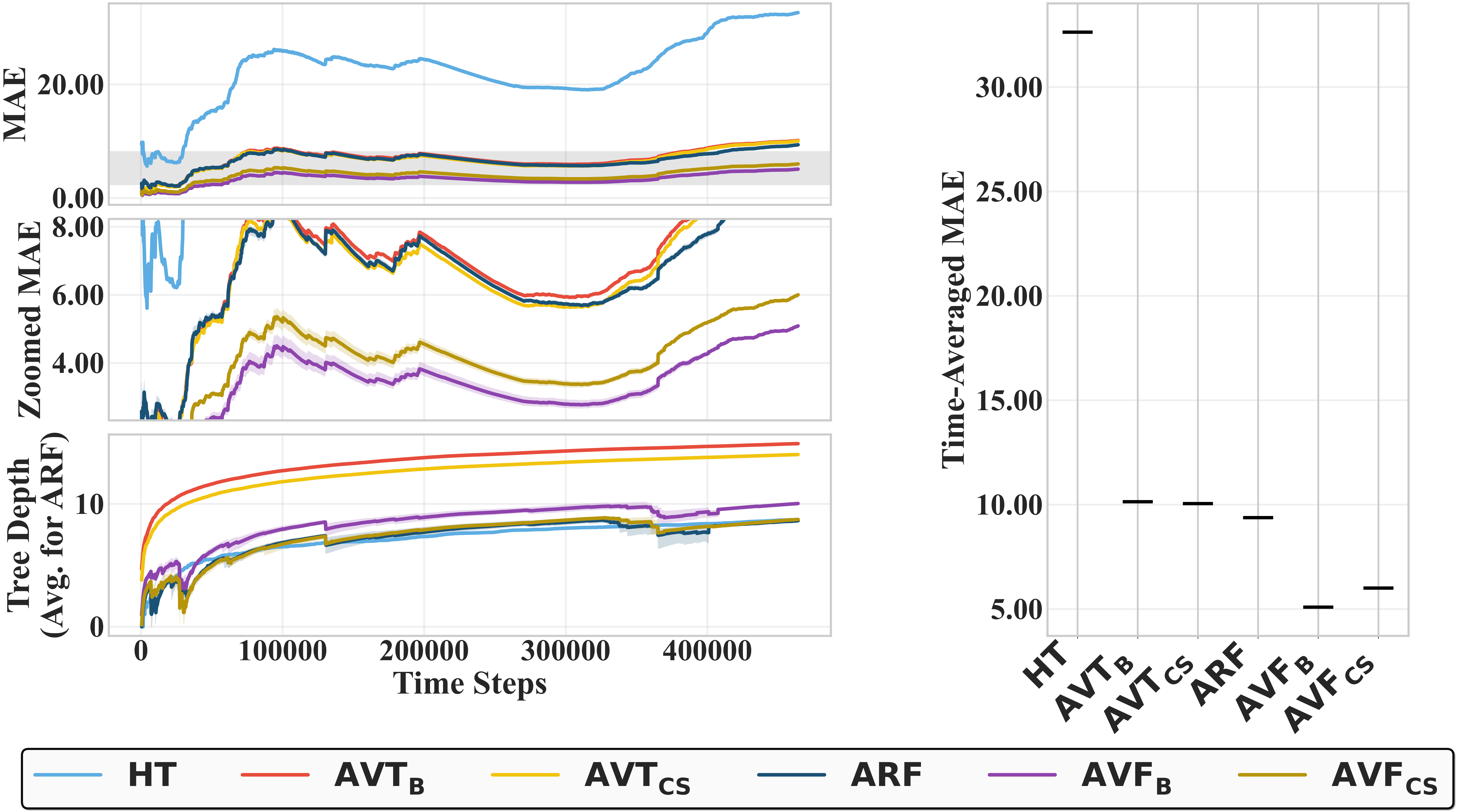}
    \caption{nzenergy}
    \label{fig:reg_nzenergy}
  \end{subfigure} &
  \begin{subfigure}[b]{0.31\textwidth}
    \centering
    \includegraphics[width=\textwidth]{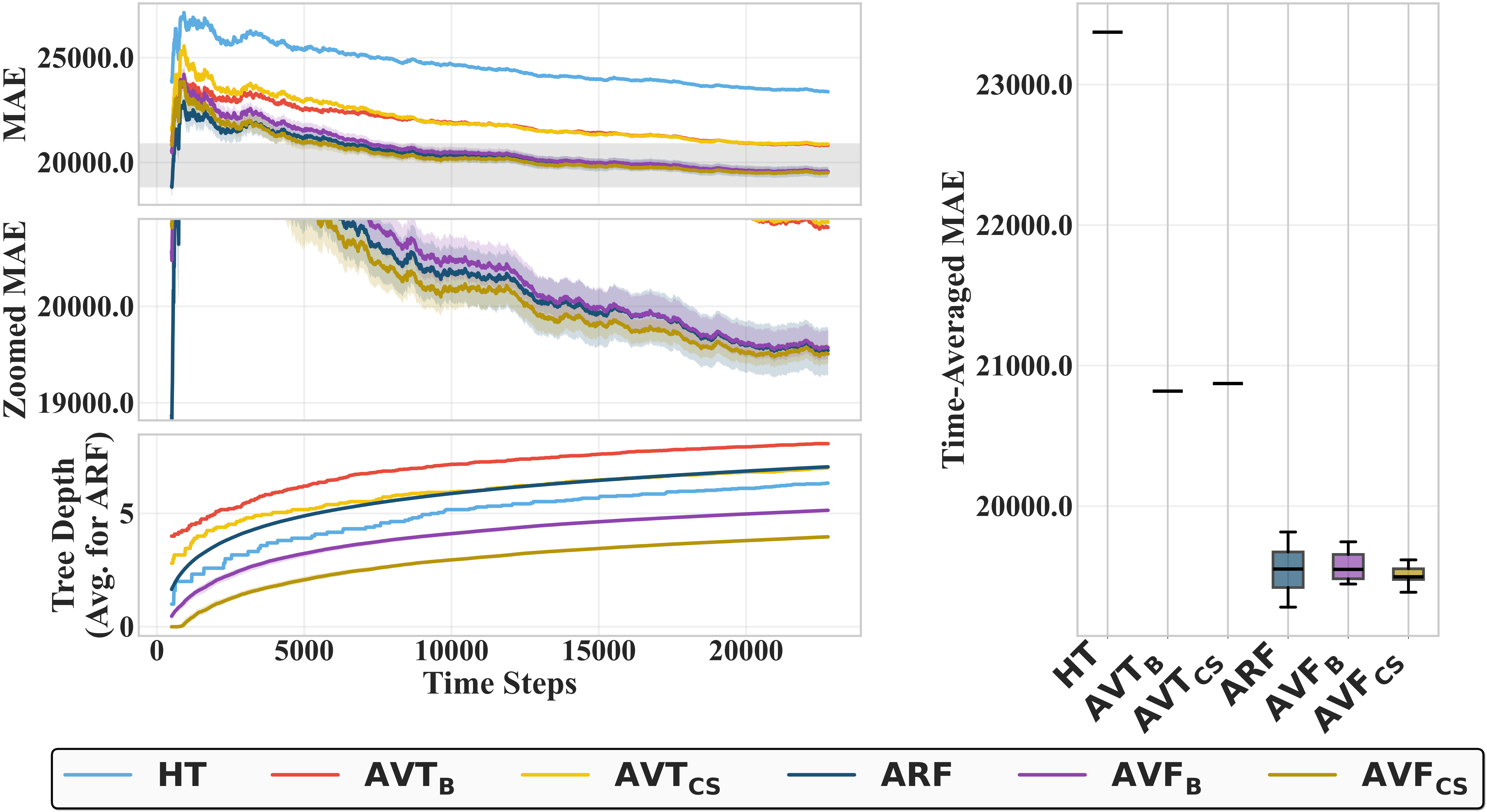}
    \caption{house}
    \label{fig:reg_house}
  \end{subfigure} &
  \begin{subfigure}[b]{0.31\textwidth}
    \centering
    \includegraphics[width=\textwidth]{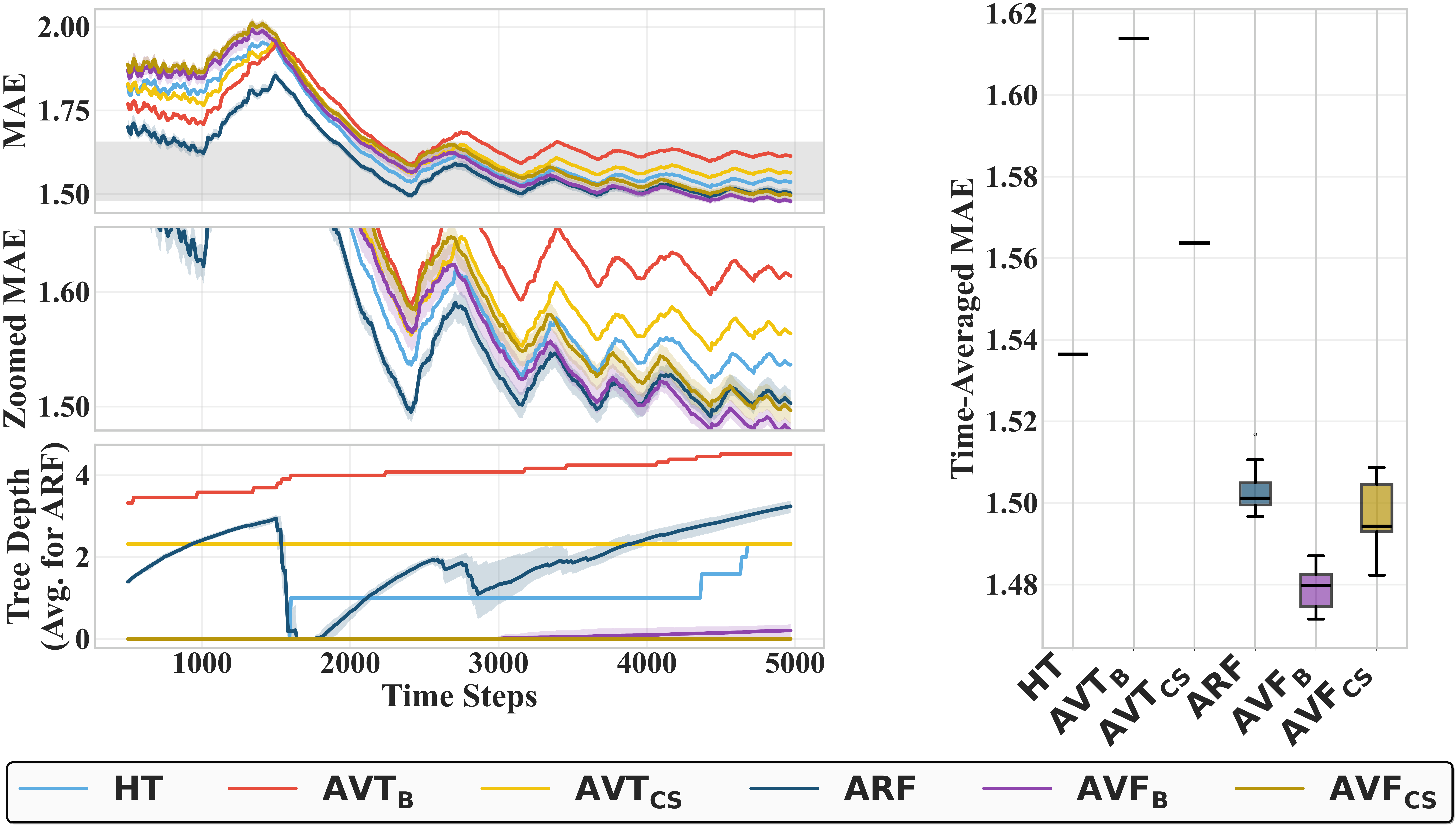}
    \caption{abalone}
    \label{fig:reg_abalone}
  \end{subfigure} \\[6pt]

  \begin{subfigure}[b]{0.31\textwidth}
    \centering
    \includegraphics[width=\textwidth]{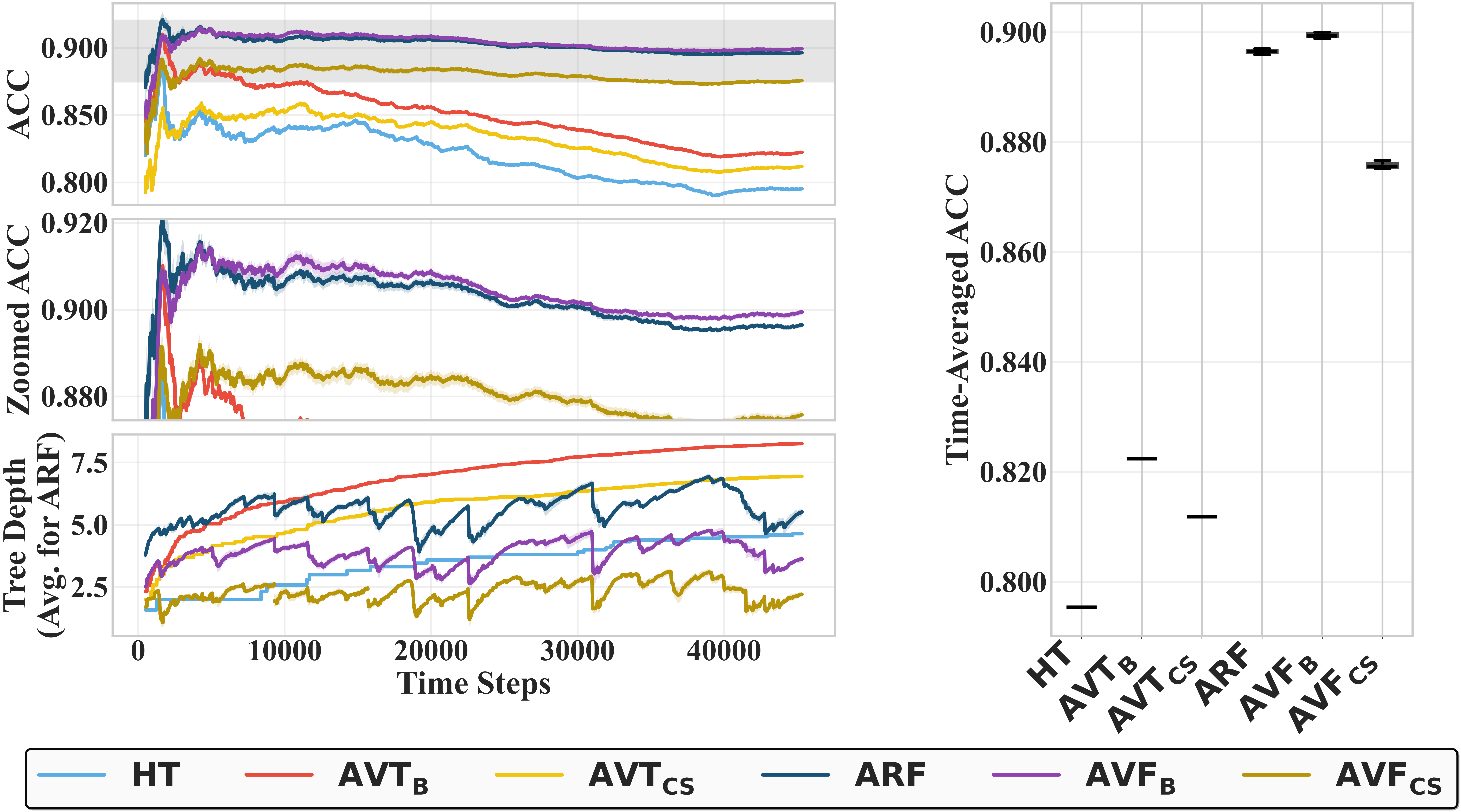}
    \caption{elec2}
    \label{fig:cls_elec2}
  \end{subfigure} &
  \begin{subfigure}[b]{0.31\textwidth}
    \centering
    \includegraphics[width=\textwidth]{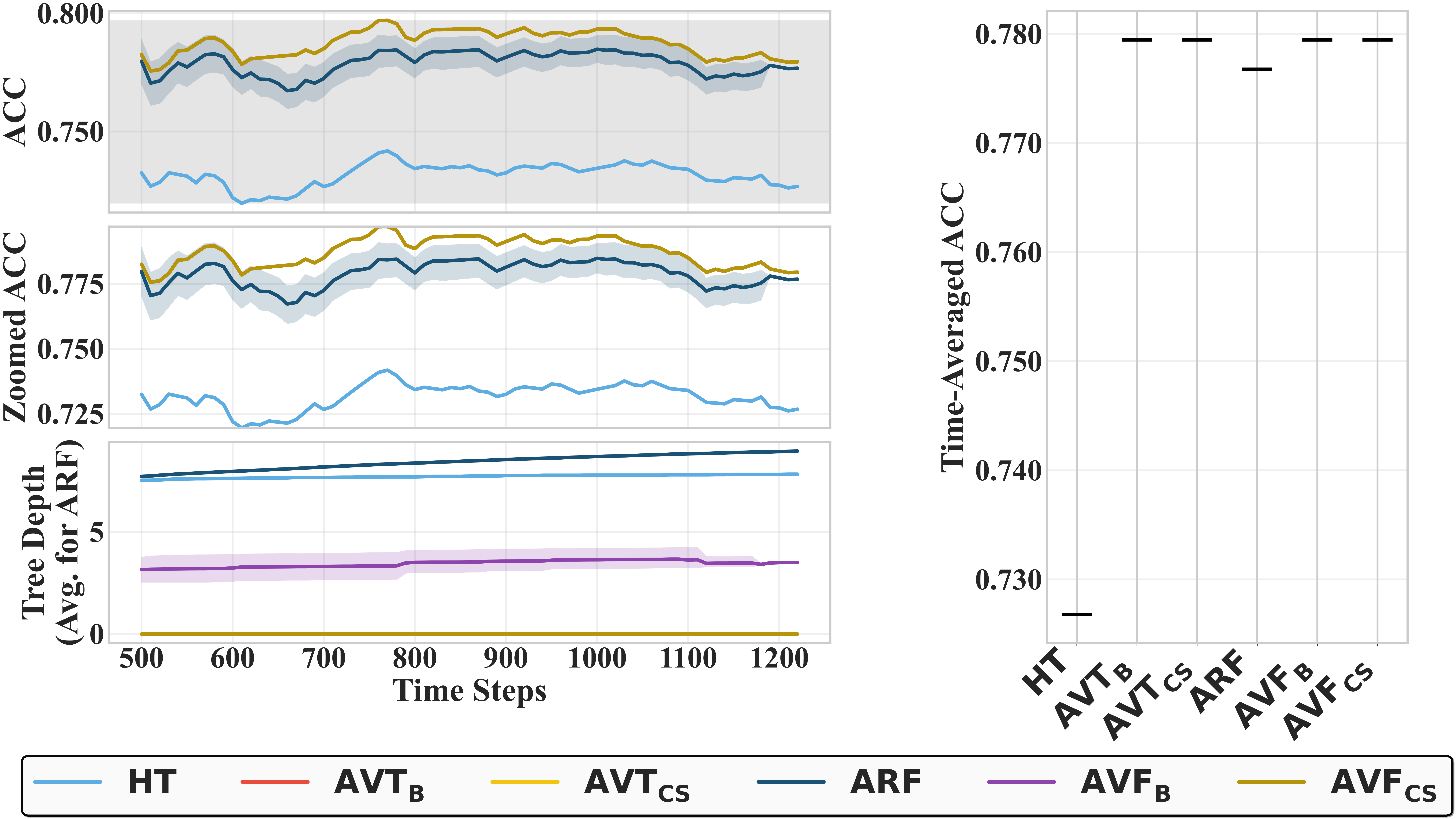}
    \caption{airlines}
    \label{fig:cls_airlines}
  \end{subfigure} &
  \begin{subfigure}[b]{0.31\textwidth}
    \centering
    \includegraphics[width=\textwidth]{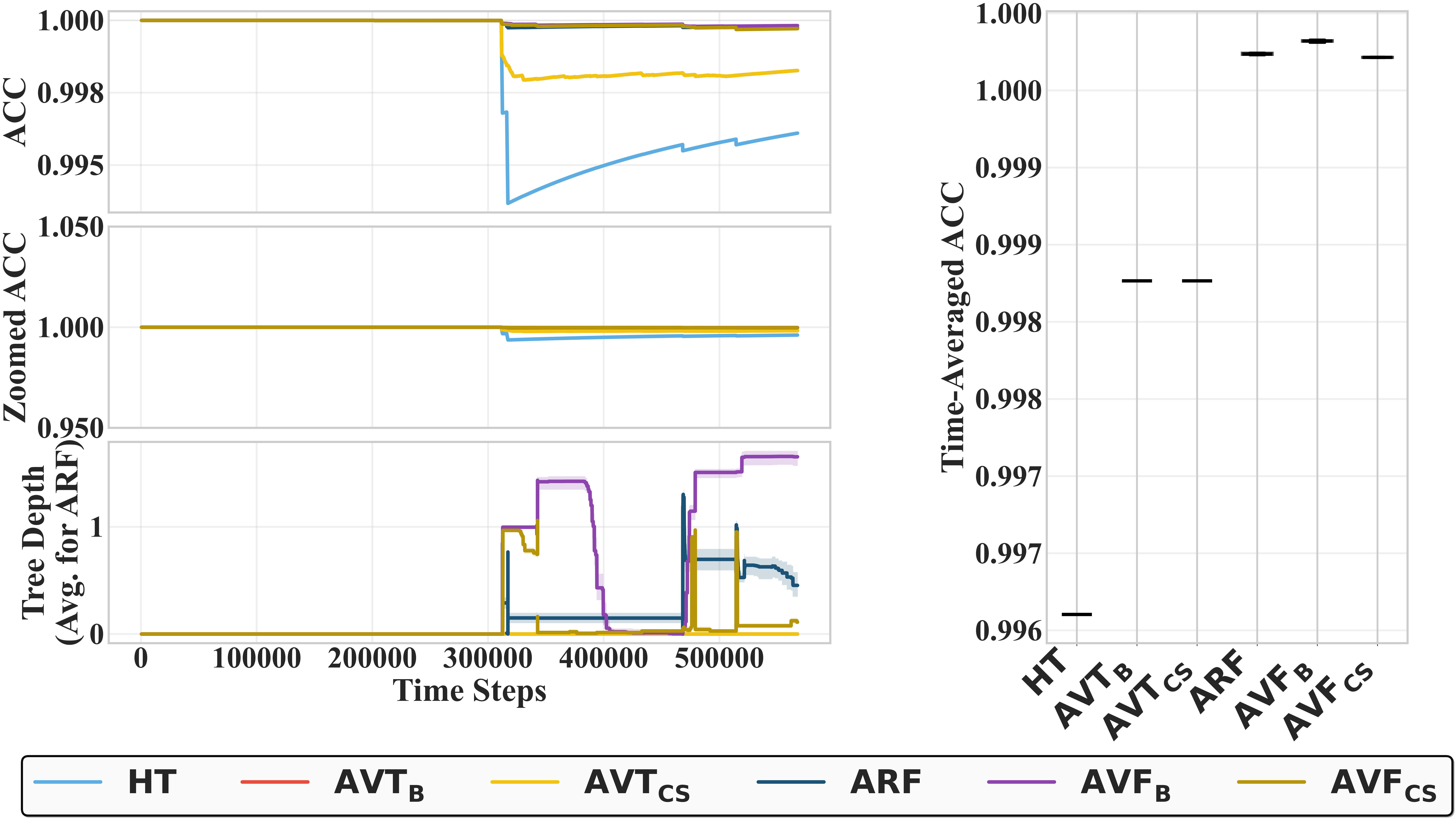}
    \caption{http-KDD99}
    \label{fig:cls_http}
  \end{subfigure} \\[4pt]

  \begin{subfigure}[b]{0.31\textwidth}
    \centering
    \includegraphics[width=\textwidth]{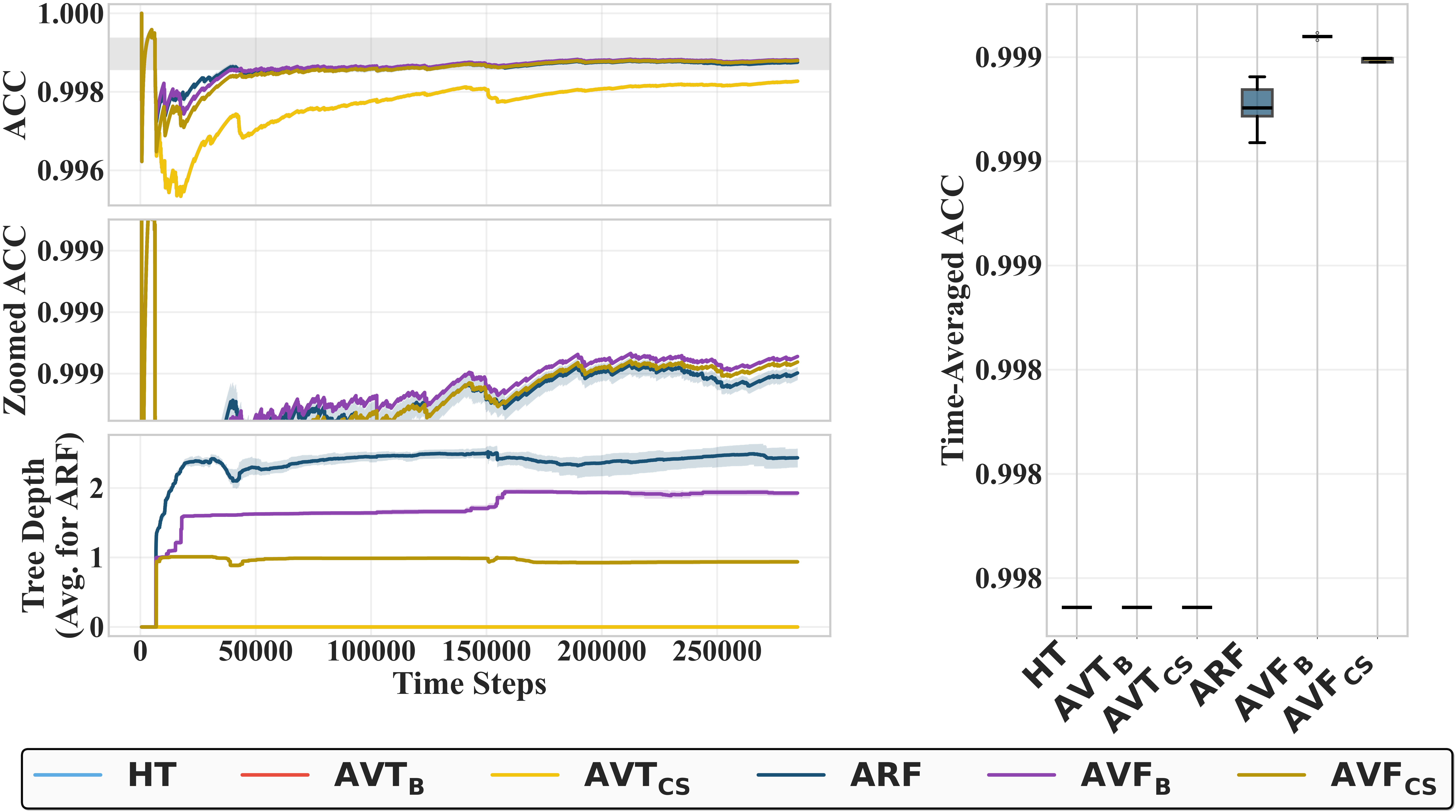}
    \caption{creditcard}
    \label{fig:cls_creditcard}
  \end{subfigure} &
  \begin{subfigure}[b]{0.31\textwidth}
    \centering
    \includegraphics[width=\textwidth]{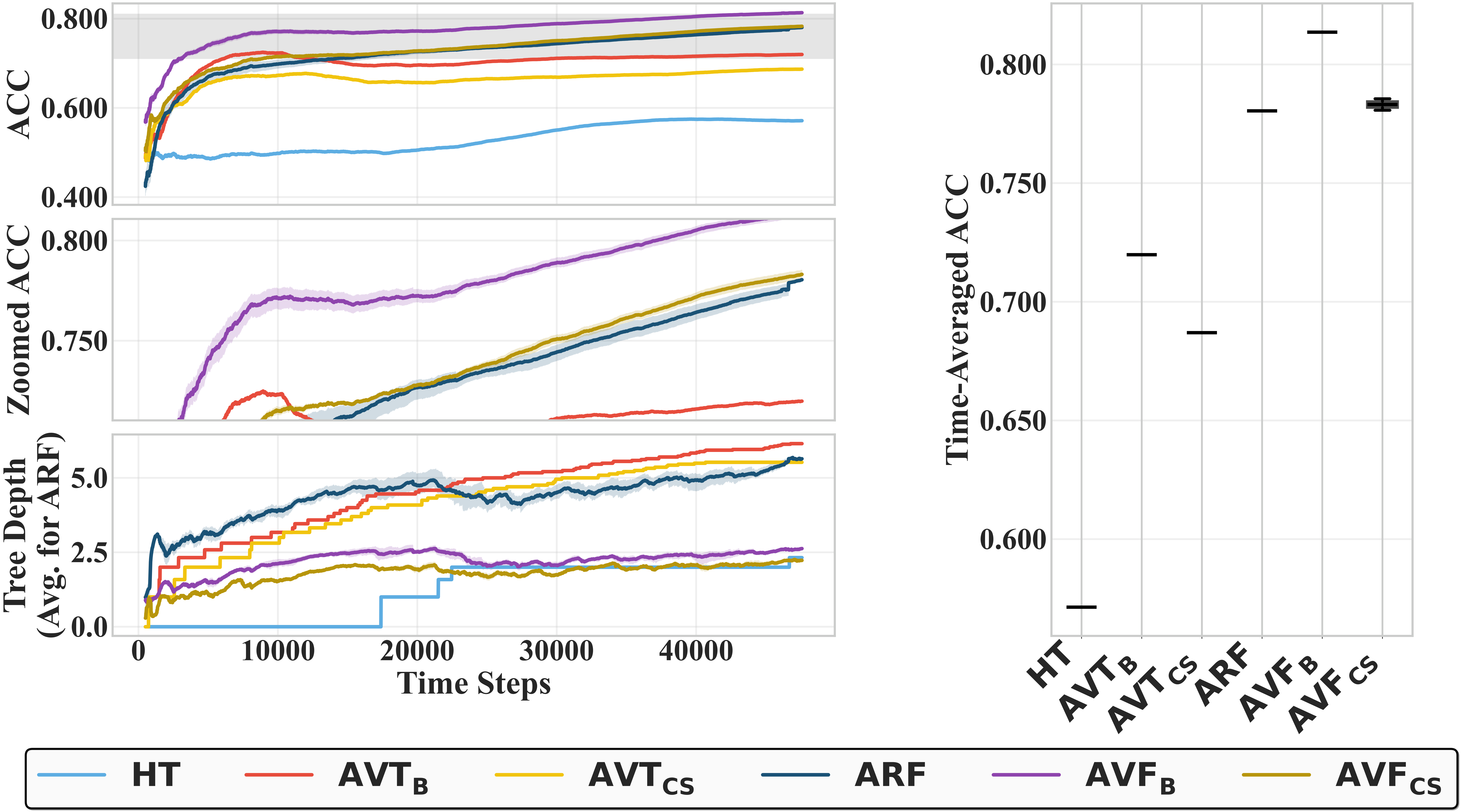}
    \caption{rbfm100k}
    \label{fig:cls_rbfm100k}
  \end{subfigure} &
  \begin{subfigure}[b]{0.31\textwidth}
    \centering
    \includegraphics[width=\textwidth]{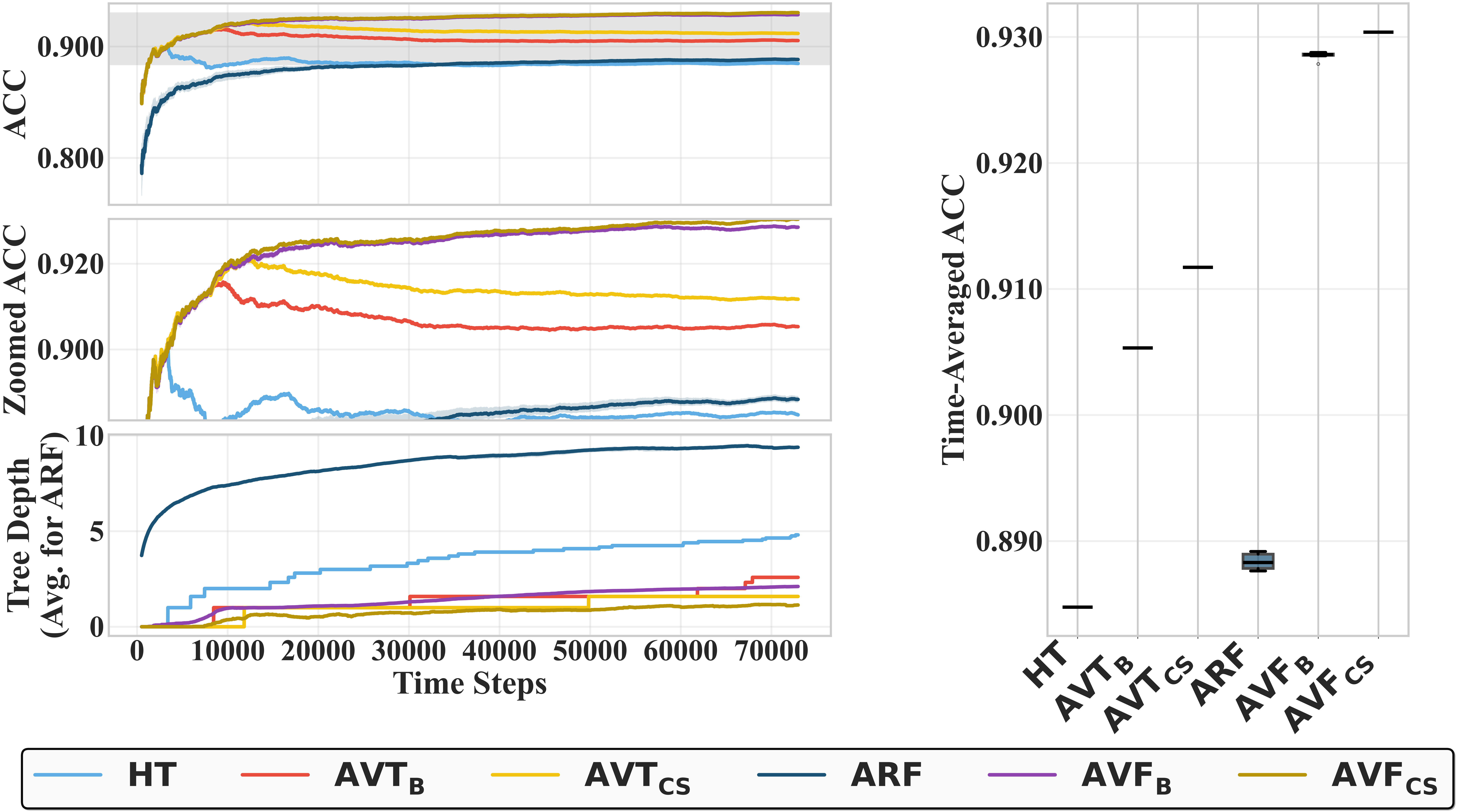}
    \caption{hyper100k}
    \label{fig:cls_hyper100k}
  \end{subfigure}

  \end{tabular}

  \caption{Performance comparison between \AVDTbet{} and \AVDTcs{} across
regression datasets \textbf{(a)--(f)} and classification datasets
\textbf{(g)--(l)}.}
  \label{app:fig:all_datasets_merged}
\end{figure*}

\newpage

\subsection{Other baselines} \label{app:baselines_2}
In this section, we report additional comparisons against existing streaming
ensemble baselines. We use the packages 
\texttt{river} \cite{montiel2021river} and \texttt{CapyMOA} \cite{gomes2025capymoaefficientmachinelearning} for the baselines. For classification, we use \texttt{SOKNL} \cite{sun2022soknl}, \texttt{OnlineSmoothBoost} \cite{chen2012online},
\texttt{StreamingRandomPatches} \cite{gomes2019streaming}, \texttt{OzaBoost} \cite{oza2001online}, and \texttt{ARF}.  
For regression, we compare only against \texttt{SOKNL} and \texttt{ARF}, as the previous
methods are not currently available for regression tasks.

As noted by \cite{read2025supervised}, the streaming literature has predominantly focused on classification, with regression receiving comparatively limited attention until recently. We also exclude \texttt{SGBT} \cite{gunasekara2025gradient} from
our evaluation due to unresolved implementation issues that prevent reliable
use; details of the issues are provided in Appendix~\ref{app:sgbt_issue}.

\begin{figure*}[!ht]
  \centering
  \setlength{\tabcolsep}{2pt}
  \renewcommand{\arraystretch}{0.9}

  \begin{tabular}{ccc}

  \begin{subfigure}[b]{0.31\textwidth}
    \centering
    \includegraphics[width=1\textwidth]{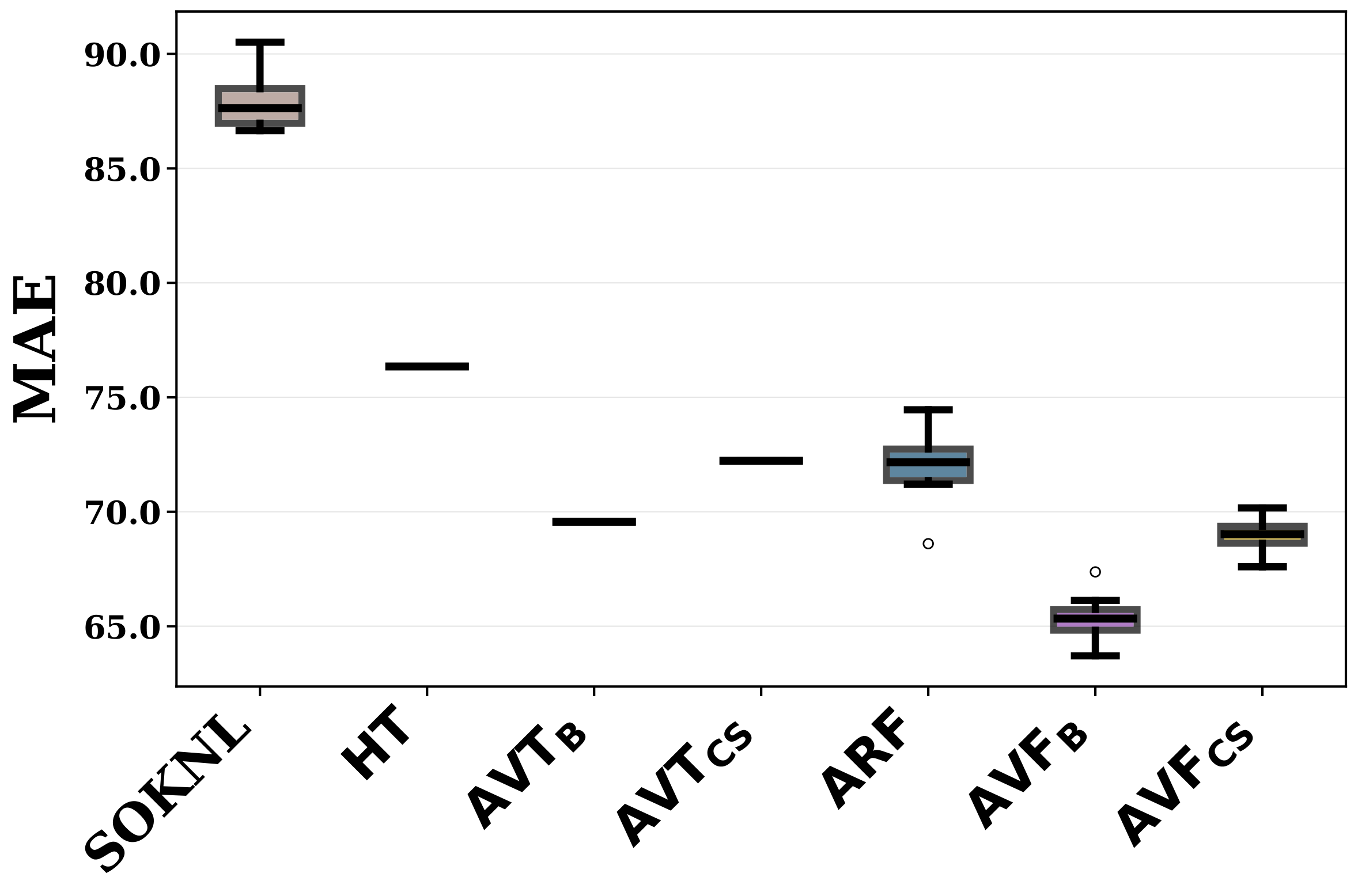}
    \caption{bike}
    \label{fig:reg_bike}
  \end{subfigure} &
  \begin{subfigure}[b]{0.31\textwidth}
    \centering
    \includegraphics[width=\textwidth]{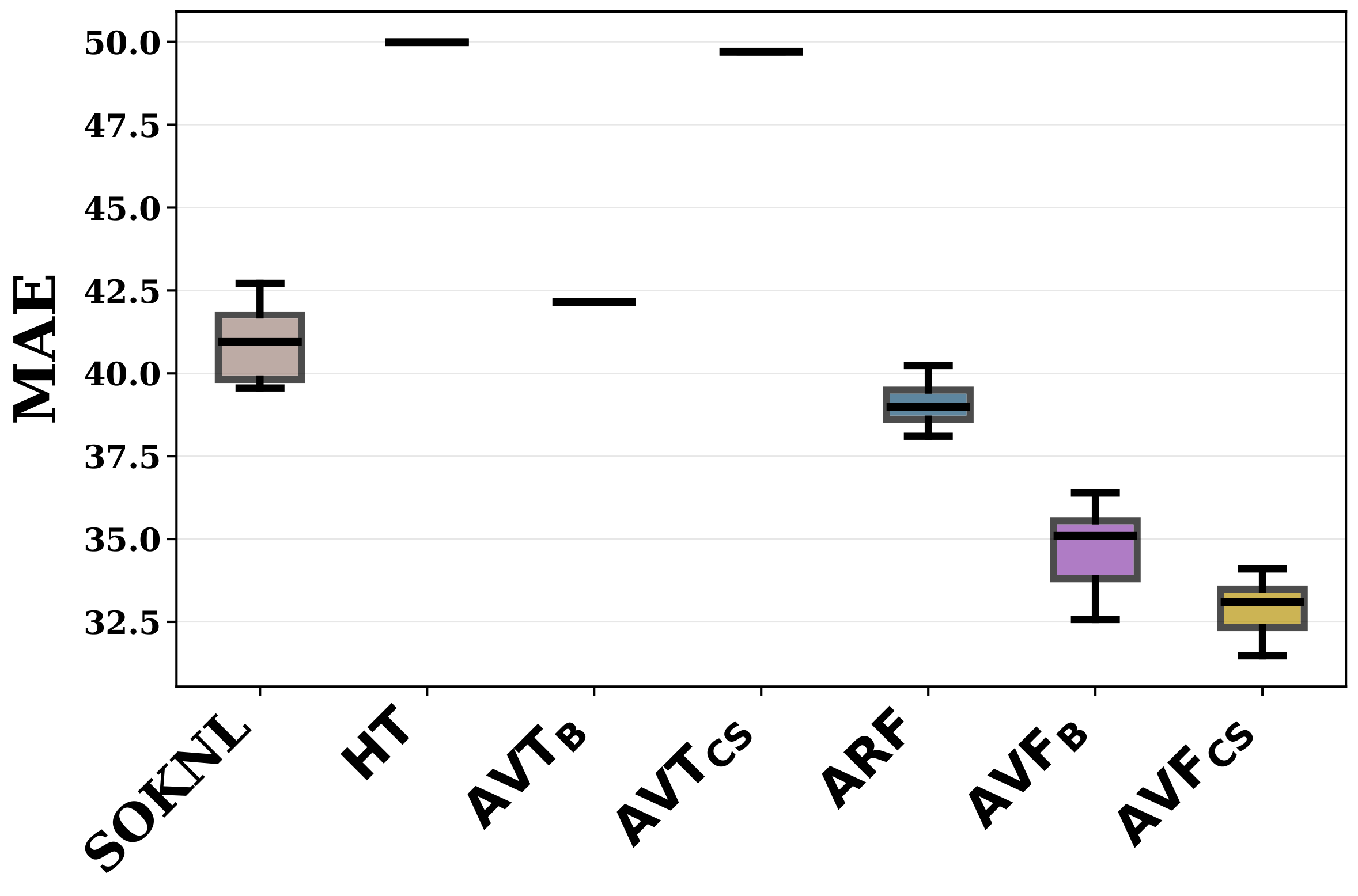}
    \caption{chick}
    \label{fig:reg_chick}
  \end{subfigure} &
  \begin{subfigure}[b]{0.31\textwidth}
    \centering
    \includegraphics[width=\textwidth]{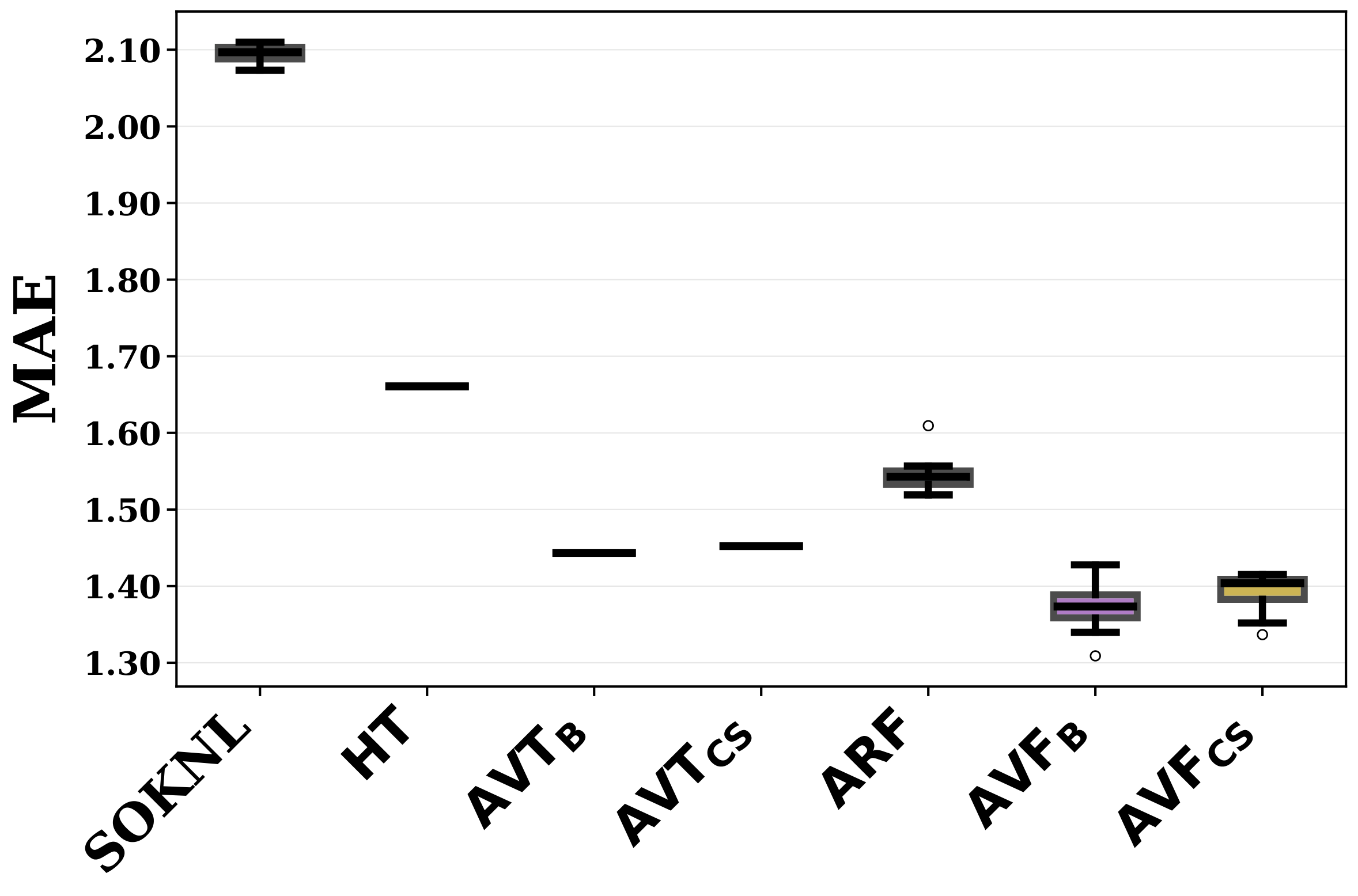}
    \caption{fried}
    \label{fig:reg_fried}
  \end{subfigure} \\[4pt]

  \begin{subfigure}[b]{0.31\textwidth}
    \centering
    \includegraphics[width=\textwidth]{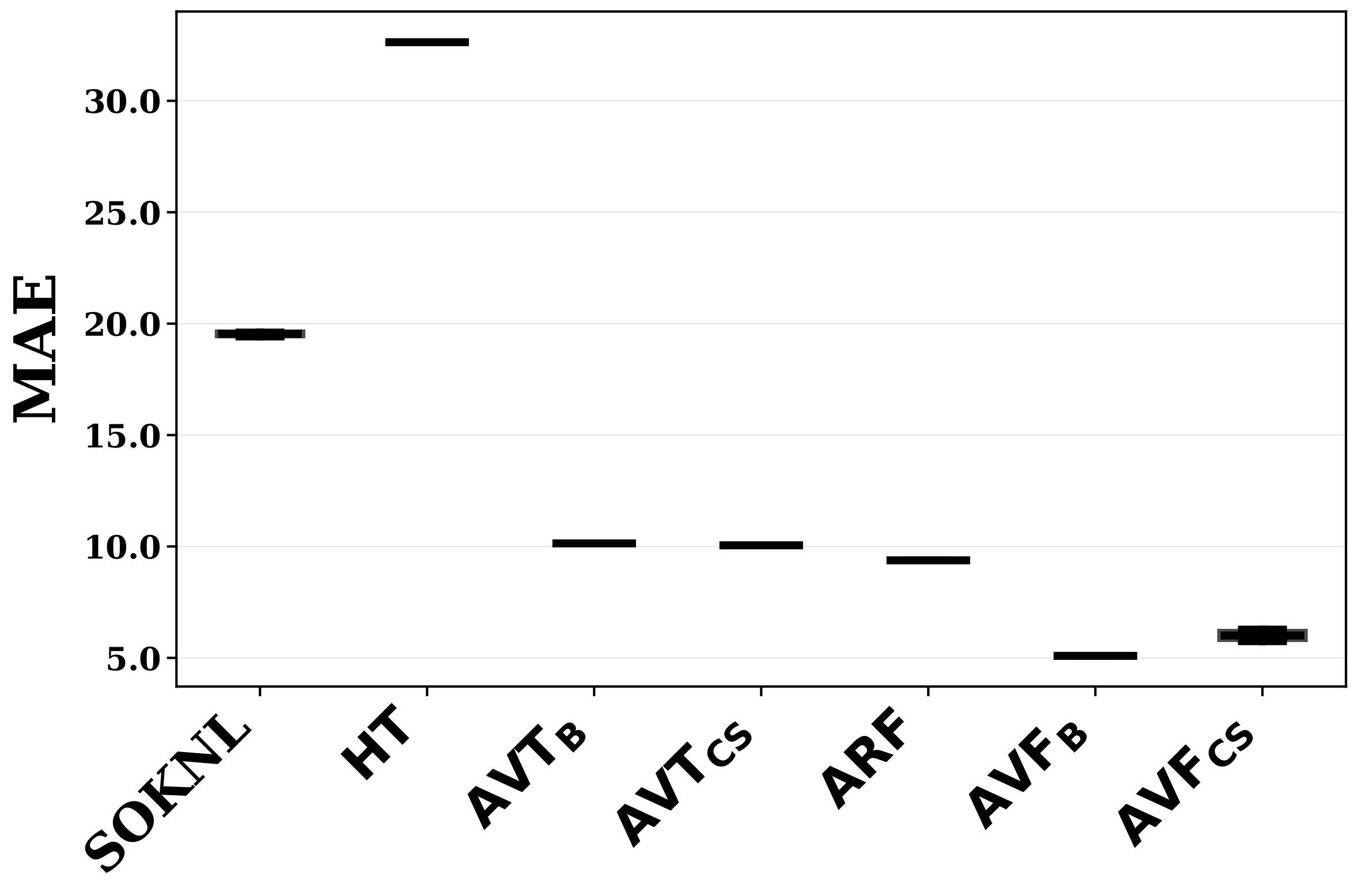}
    \caption{nzenergy}
    \label{fig:reg_nzenergy}
  \end{subfigure} &
  \begin{subfigure}[b]{0.31\textwidth}
    \centering
    \includegraphics[width=\textwidth]{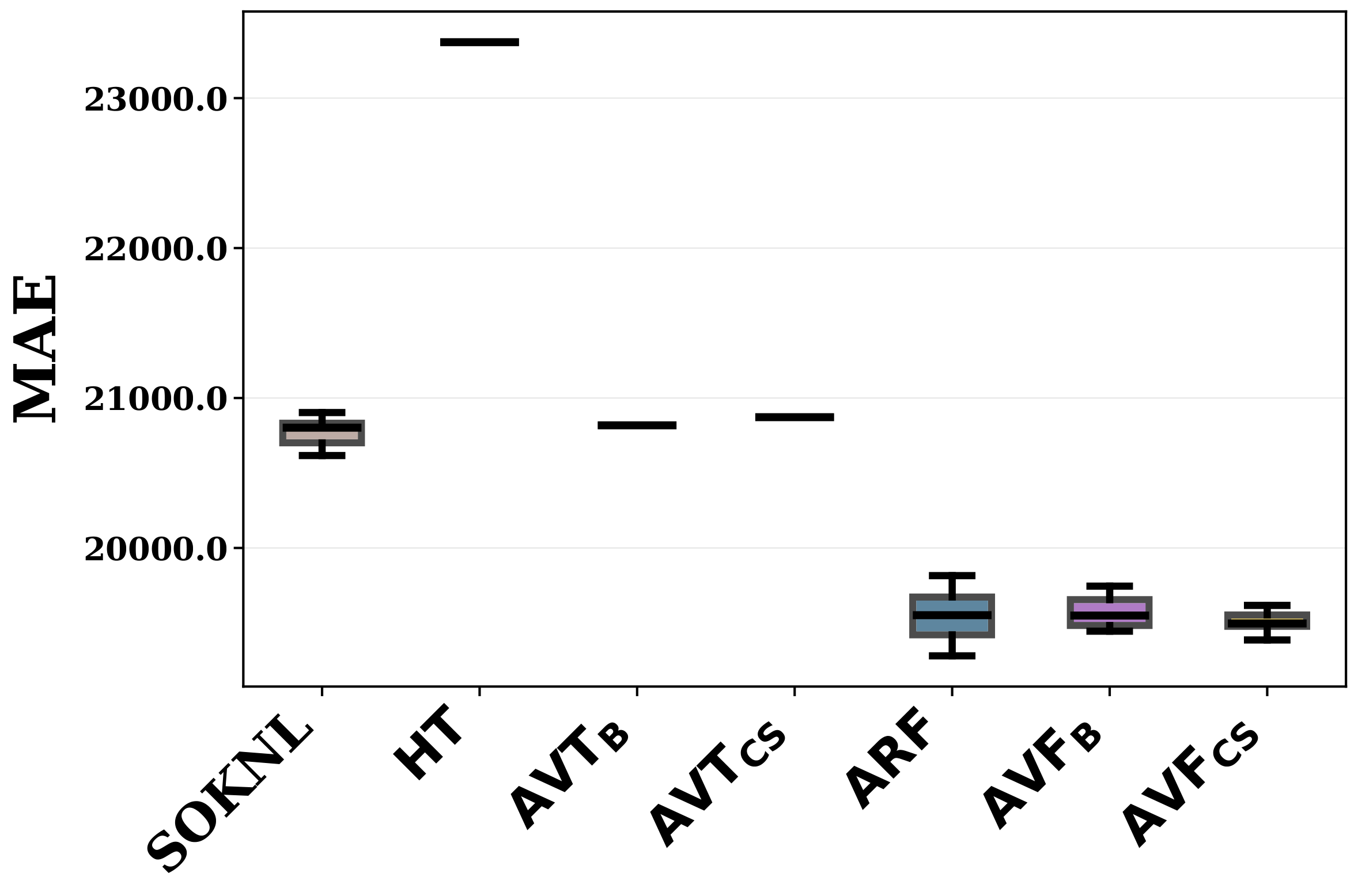}
    \caption{house}
    \label{fig:reg_house}
  \end{subfigure} &
  \begin{subfigure}[b]{0.31\textwidth}
    \centering
    \includegraphics[width=\textwidth]{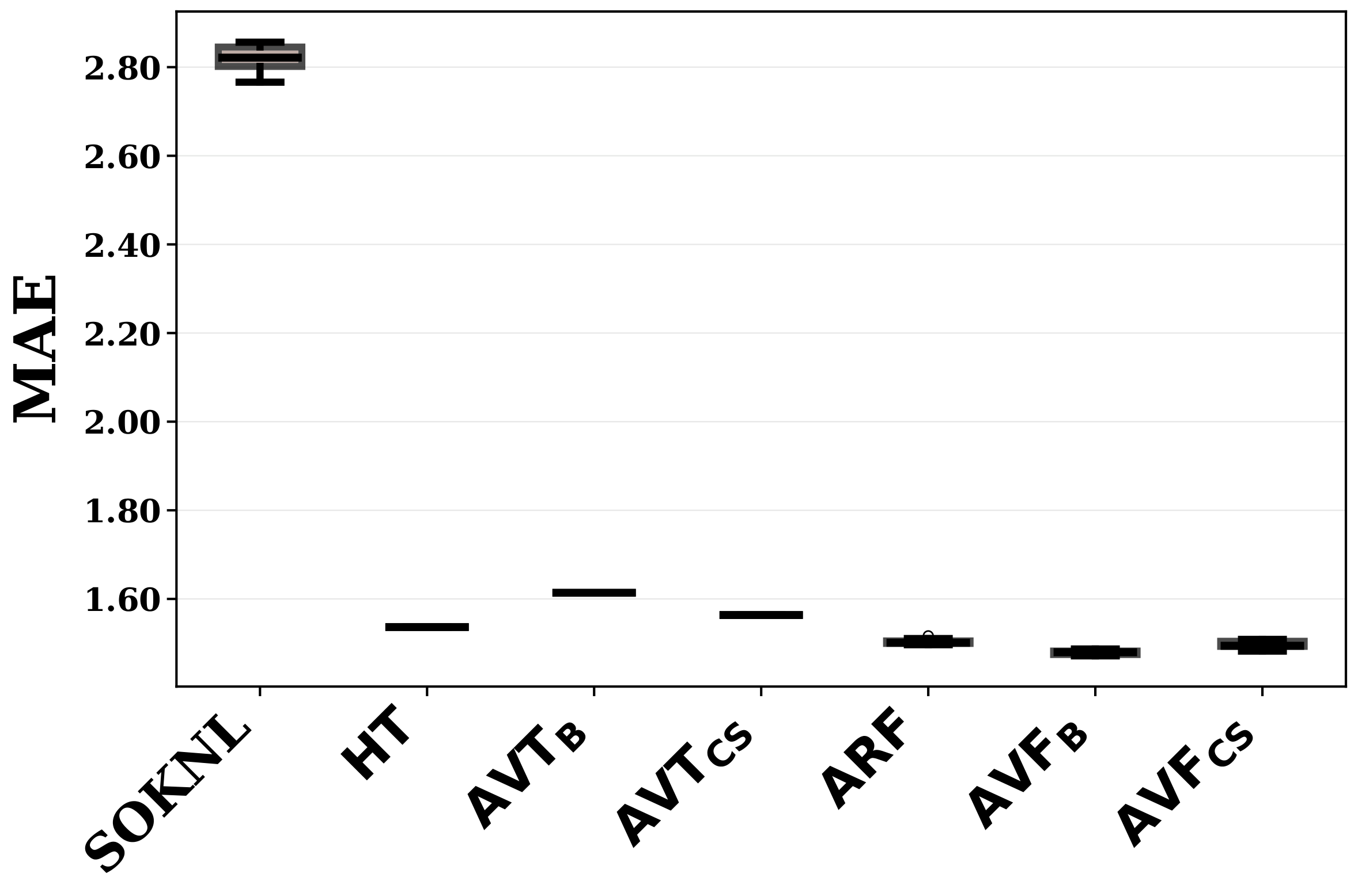}
    \caption{abalone}
    \label{fig:reg_abalone}
  \end{subfigure} \\[6pt]

  \begin{subfigure}[b]{0.31\textwidth}
    \centering
    \includegraphics[width=\textwidth]{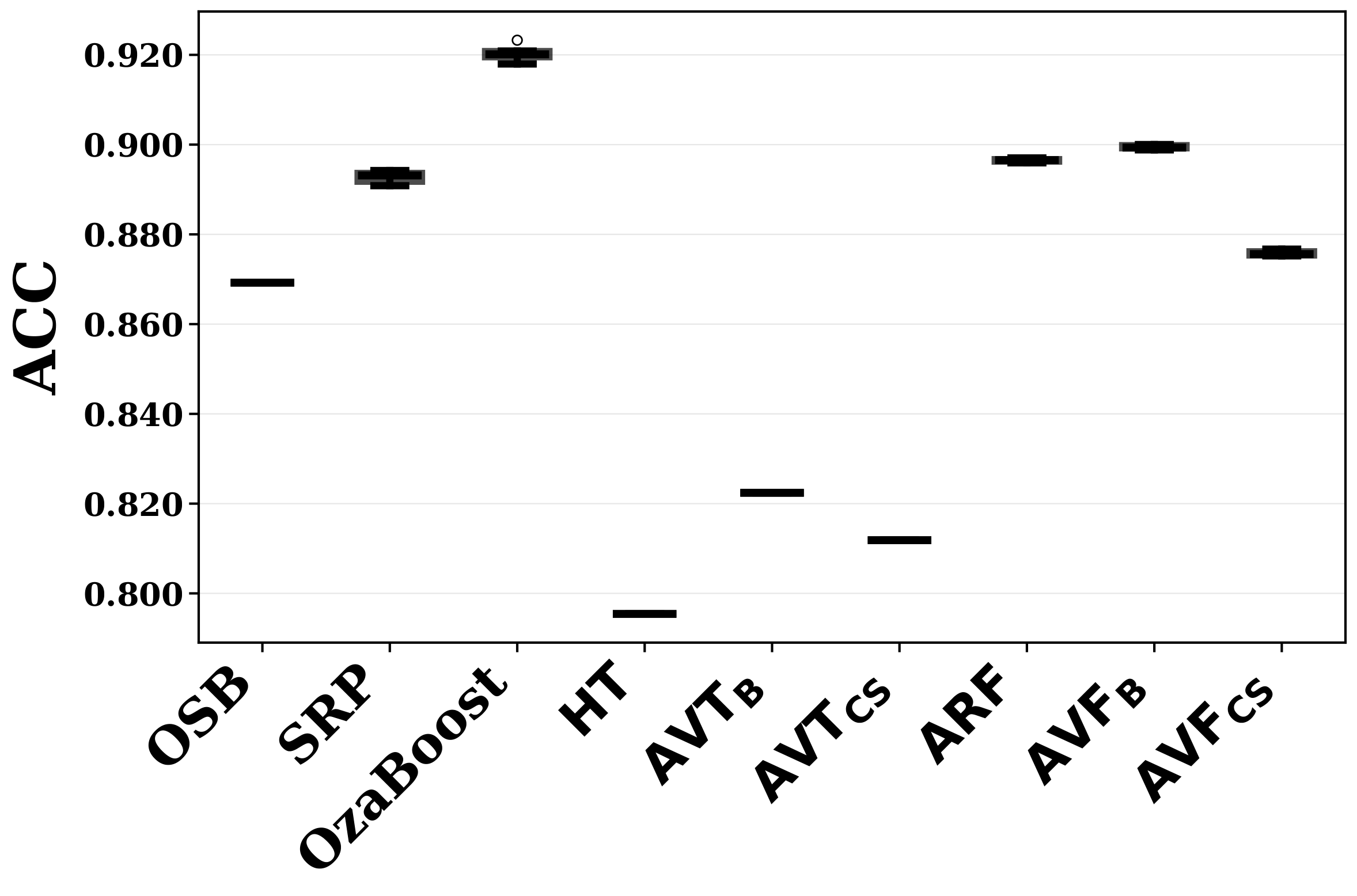}
    \caption{elec2}
    \label{fig:cls_elec2}
  \end{subfigure} &
  \begin{subfigure}[b]{0.31\textwidth}
    \centering
    \includegraphics[width=\textwidth]{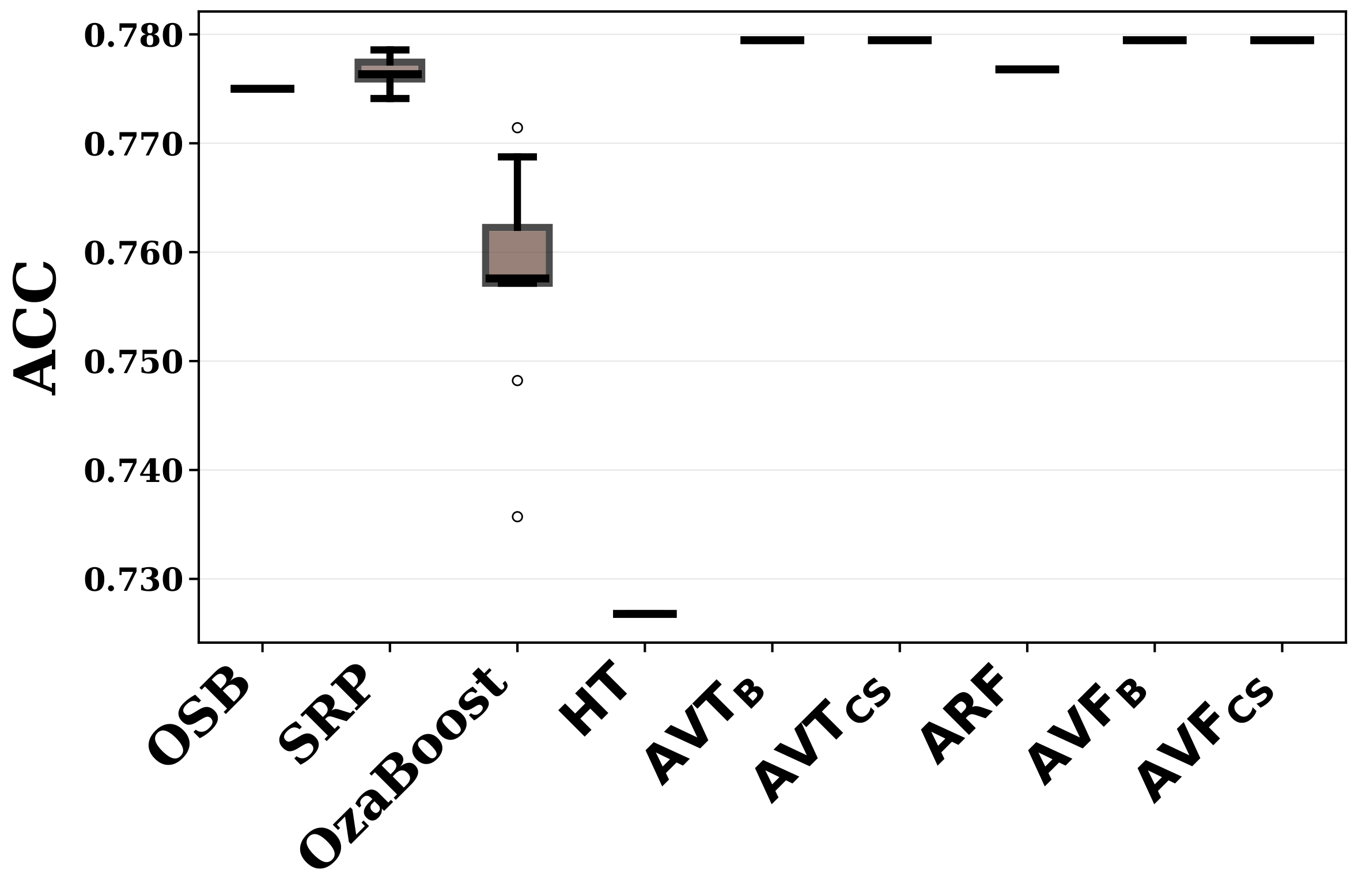}
    \caption{airlines}
    \label{fig:cls_airlines}
  \end{subfigure} &
  \begin{subfigure}[b]{0.31\textwidth}
    \centering
    \includegraphics[width=\textwidth]{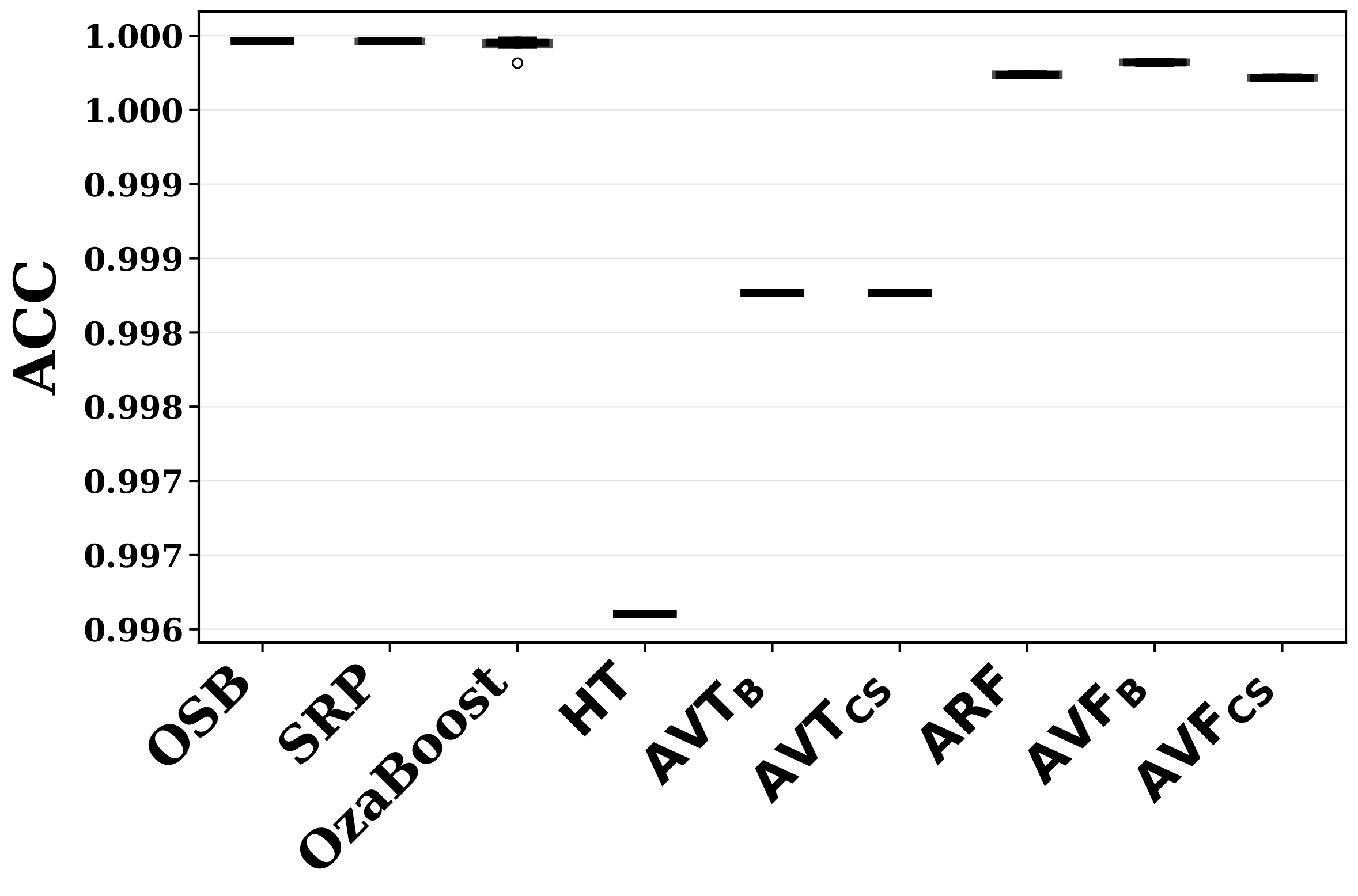}
    \caption{http-KDD99}
    \label{fig:cls_http}
  \end{subfigure} \\[4pt]

  \begin{subfigure}[b]{0.31\textwidth}
    \centering
    \includegraphics[width=\textwidth]{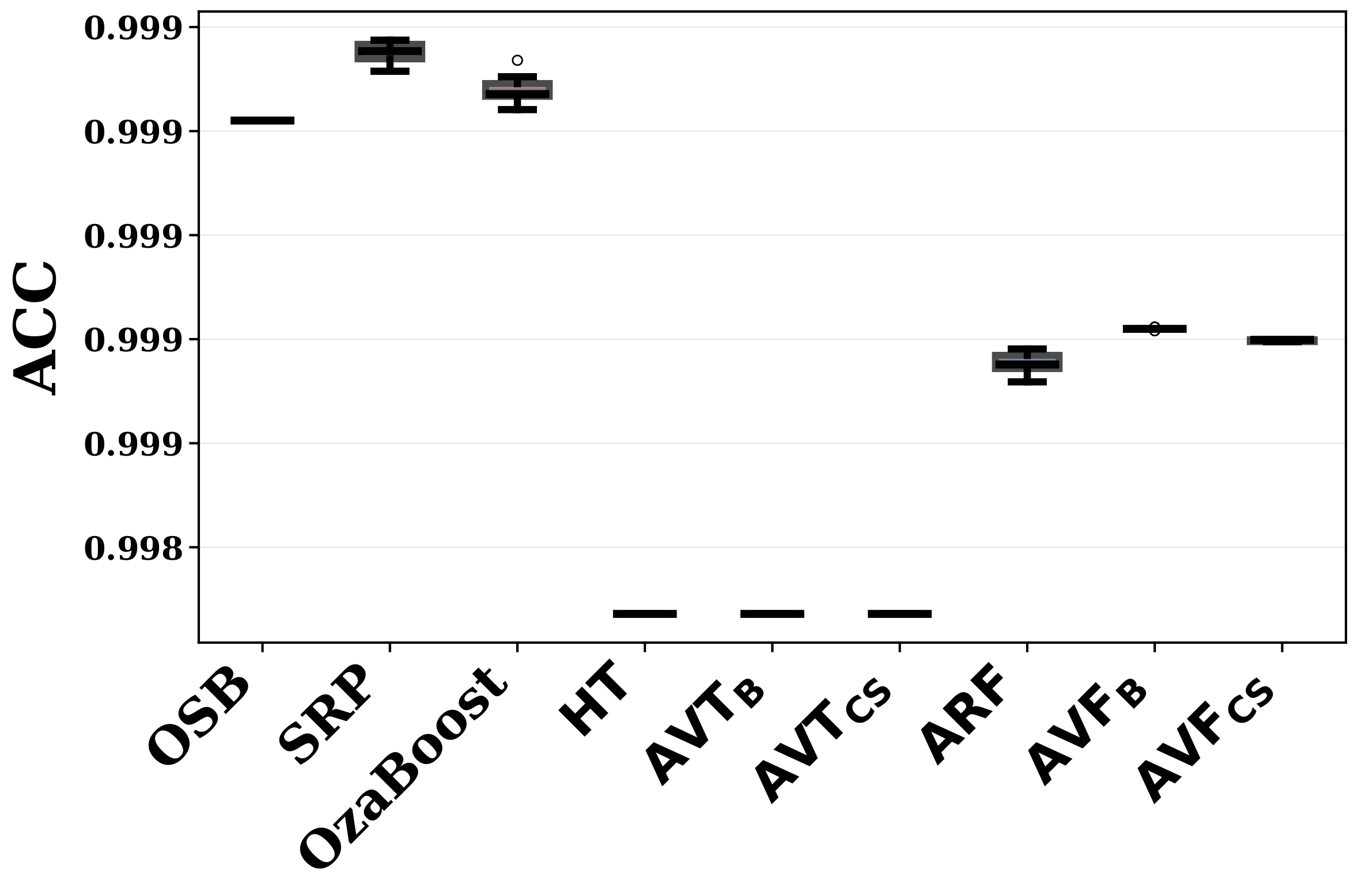}
    \caption{creditcard}
    \label{fig:cls_creditcard}
  \end{subfigure} &
  \begin{subfigure}[b]{0.31\textwidth}
    \centering
    \includegraphics[width=\textwidth]{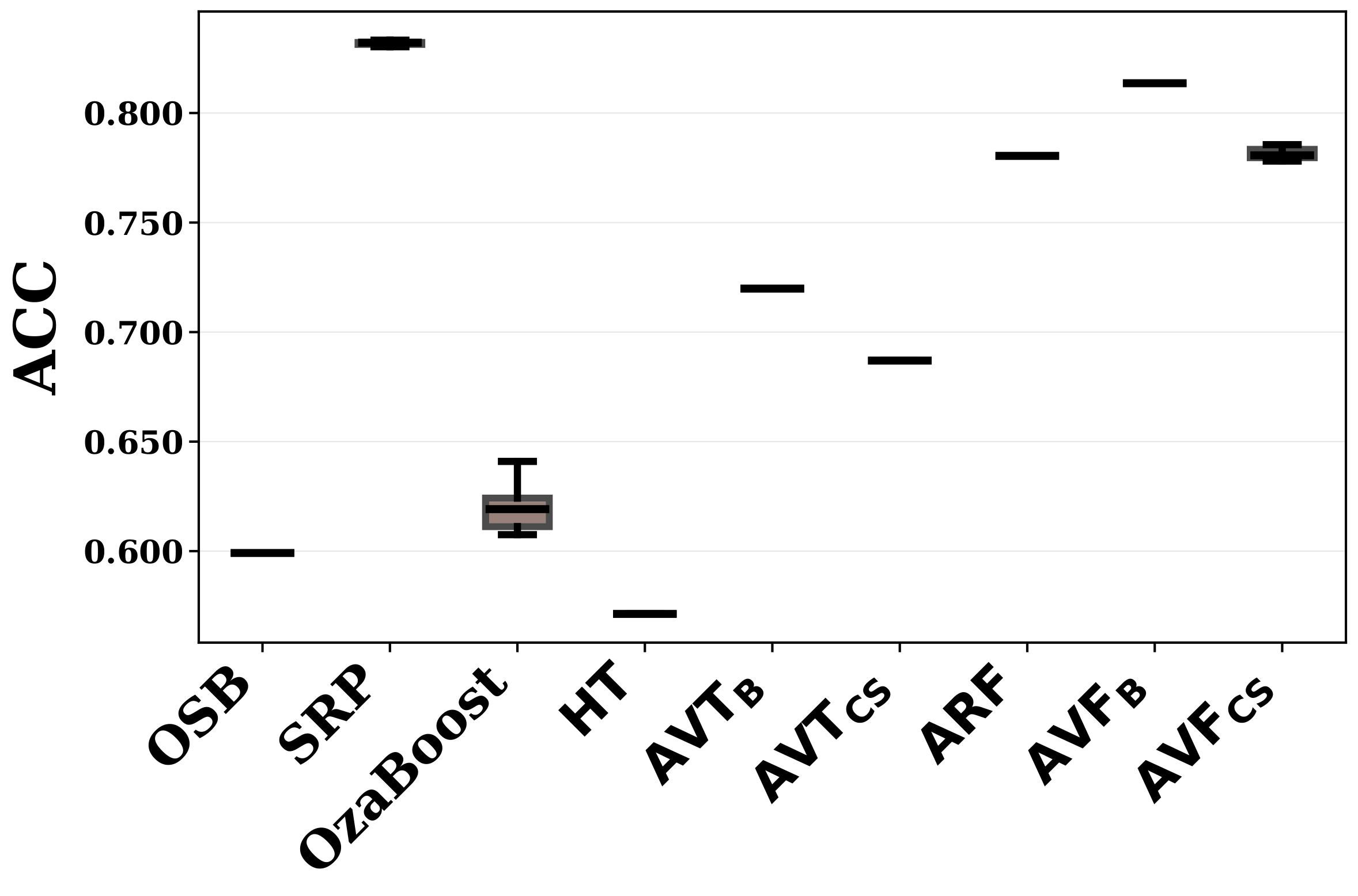}
    \caption{rbfm100k}
    \label{fig:cls_rbfm100k}
  \end{subfigure} &
  \begin{subfigure}[b]{0.31\textwidth}
    \centering
    \includegraphics[width=\textwidth]{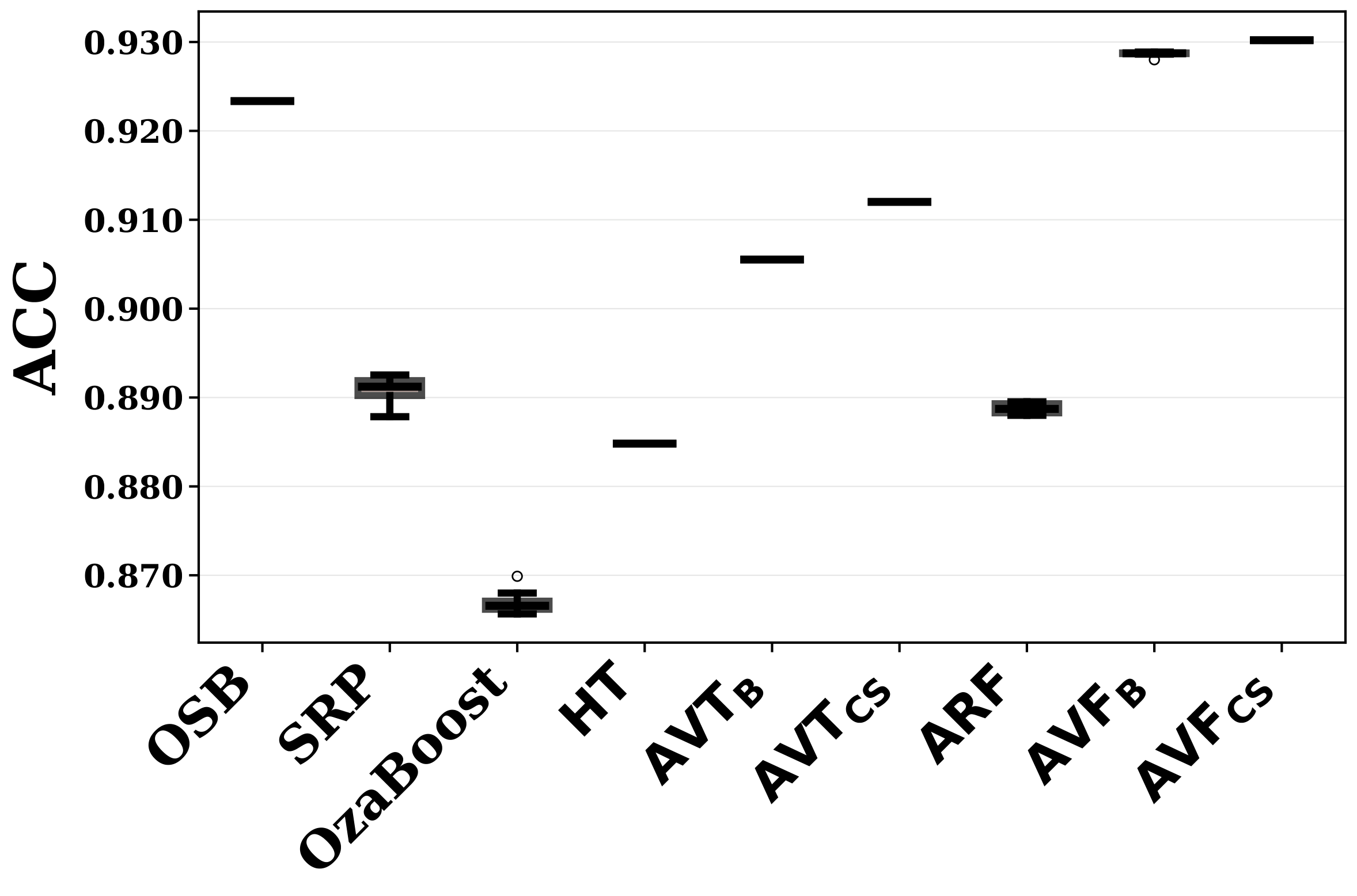}
    \caption{hyper100k}
    \label{fig:cls_hyper100k}
  \end{subfigure}

  \end{tabular}

  \caption{Regression datasets \textbf{(a)–(f)} and classification datasets \textbf{(g)–(l)}.}
  \label{fig:all_datasets_merged}
\end{figure*}

\section{Implementation issues with baselines}
\label{app:sgbt_issue}

During our evaluation, we encountered several implementation issues with existing
baselines, most notably with the Streaming Gradient Boosting Tree (SGBT) \cite{gunasekara2025gradient}.

\paragraph{Degenerate regression behavior.}
When used for regression, SGBT consistently outputs a zero prediction throughout
the entire stream, independently of the dataset or training progress. This
behavior is illustrated by the diagnostic experiment shown in
Fig.~\ref{fig:sgbt_issue}.

\begin{figure}[!ht]
    \centering
    \includegraphics[width=0.5\linewidth]{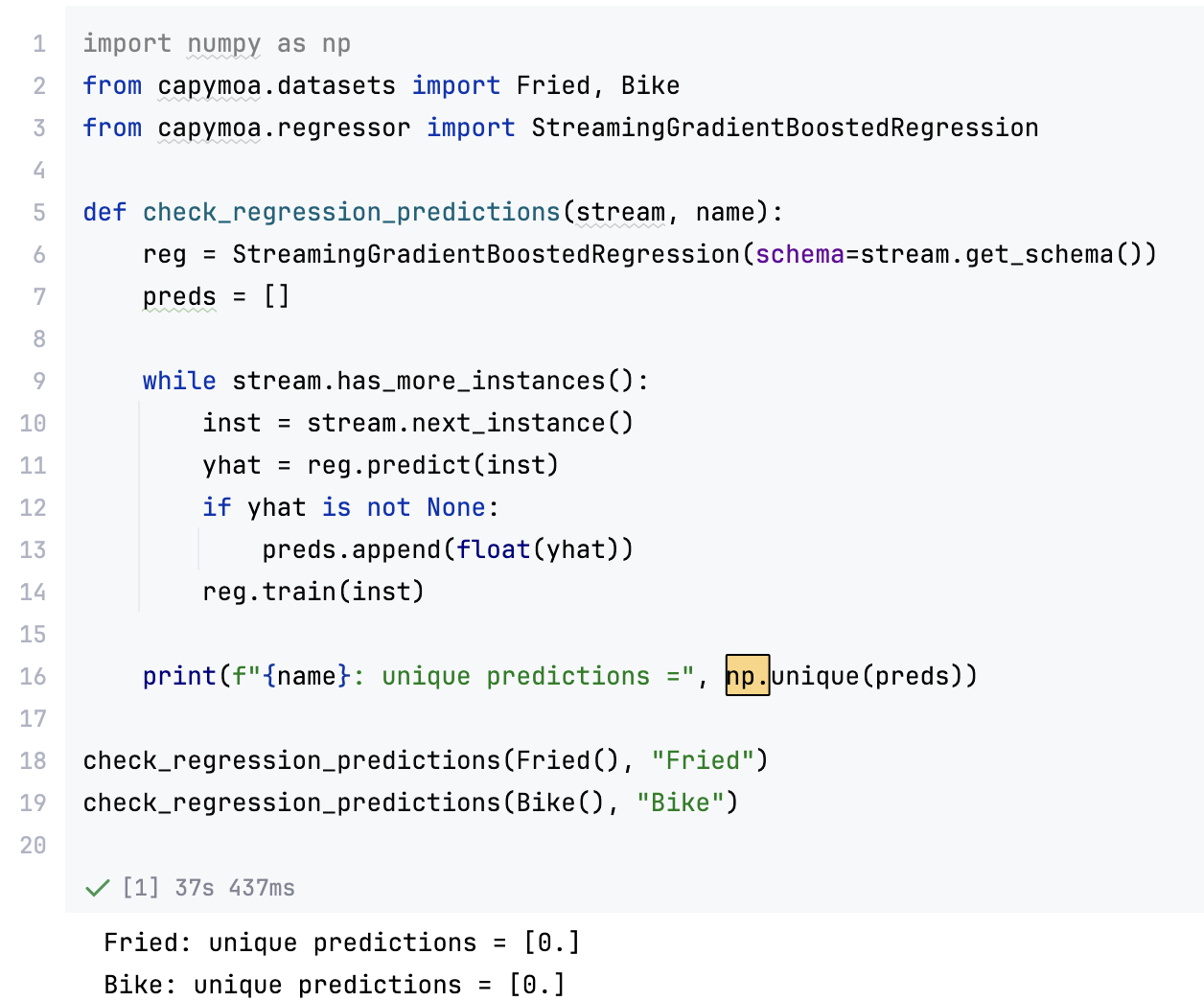}
    \caption{SGBT regression behavior: the model outputs a constant zero prediction
    over the full stream.}
    \label{fig:sgbt_issue}
\end{figure}

\paragraph{Source inspection.}
Inspection of the source code reveals that the SGBT regressor inherits from a
classifier-oriented base class (Fig.~\ref{fig:sgbt_class}), which may implicitly
enforce classification-specific behavior. We attempted several fixes but were
unable to obtain a functioning regression variant.
\begin{figure}[!ht]
    \centering
    \includegraphics[width=0.7\linewidth]{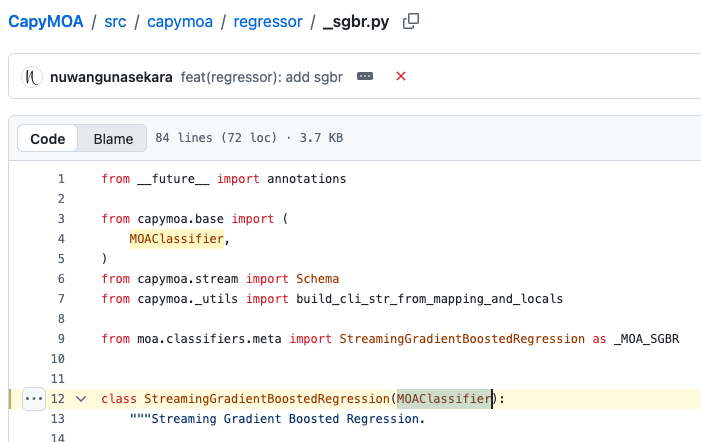}
    \caption{SGBT class hierarchy highlighting inheritance from a classifier base.}
    \label{fig:sgbt_class}
\end{figure}

\paragraph{Stability issues.}
More broadly, several CapyMOA baselines exhibit recurrent runtime failures. For
example, experiments on several datasets terminate with Java errors, which we were unable to resolve. All experiments were run on AWS EC2 instances. The main evaluation used a c7i.16xlarge instance (64 vCPUs, 128 GiB RAM), and we also attempted a memory-optimized r8a.16xlarge configuration (64 vCPUs, 512 GiB RAM), but the stability issues persisted.

\begin{figure}[!ht]
    \centering
    \includegraphics[width=\linewidth]{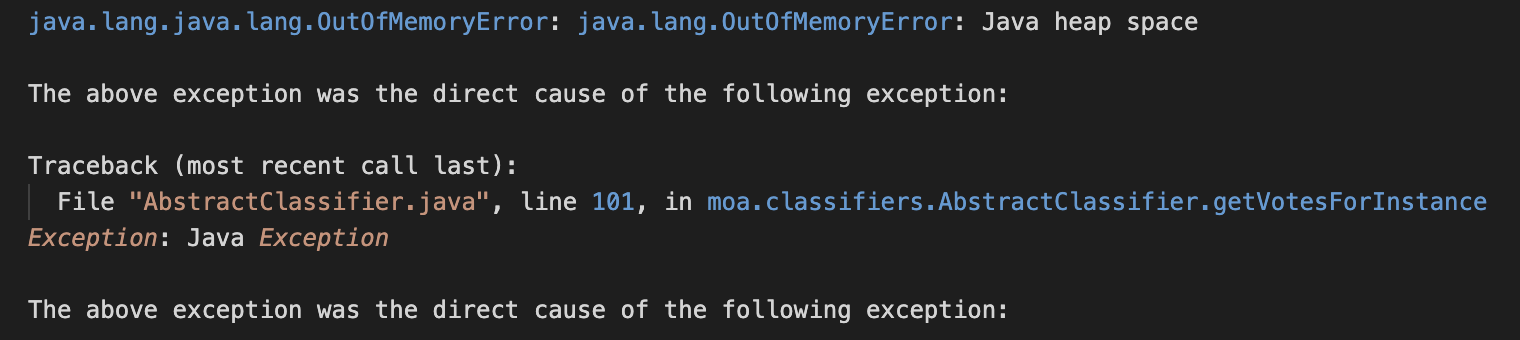}
    \caption{Example Java runtime error encountered when running CapyMOA baselines.}
    \label{fig:java}
\end{figure}

\section{Experiments details}
All experiments were run on AWS EC2 instances. The main evaluation used a c7i.16xlarge instance (64 vCPUs, 128 GiB RAM).

\end{document}